%%%%%%%% ICML 2024 EXAMPLE LATEX SUBMISSION FILE %%%%%%%%%%%%%%%%%

\documentclass{article}

% Recommended, but optional, packages for figures and better typesetting:
\usepackage{microtype}
\usepackage{graphicx}
\usepackage{subcaption}
\usepackage{booktabs} % for professional tables

% hyperref makes hyperlinks in the resulting PDF.
% If your build breaks (sometimes temporarily if a hyperlink spans a page)
% please comment out the following usepackage line and replace
% \usepackage{icml2024} with \usepackage[nohyperref]{icml2024} above.
\usepackage{hyperref}

% Attempt to make hyperref and algorithmic work together better:

% Use the following line for the initial blind version submitted for review:
% \usepackage{icml2024}

% If accepted, instead use the following line for the camera-ready submission:
\usepackage[accepted]{icml2024}

% For theorems and such
\usepackage{amsmath}
\usepackage{amssymb}
\usepackage{mathtools}
\usepackage{amsthm}

% if you use cleveref..
\usepackage[capitalize,noabbrev]{cleveref}

%%%%%%%%%%%%%%%%%%%%%%%%%%%%%%%%
% THEOREMS
%%%%%%%%%%%%%%%%%%%%%%%%%%%%%%%%
\theoremstyle{plain}
\newtheorem{theorem}{Theorem}[section]
\newtheorem{proposition}[theorem]{Proposition}
\newtheorem{lemma}[theorem]{Lemma}
\newtheorem{corollary}[theorem]{Corollary}
\theoremstyle{definition}

\theoremstyle{remark}

% Todonotes is useful during development; simply uncomment the next line
%    and comment out the line below the next line to turn off comments
%\usepackage[disable,textsize=tiny]{todonotes}
\usepackage[textsize=tiny]{todonotes}

\def \bx {{\bm x}}
\def \by {{\bm y}}
\def \bz {{\bm z}}

\def \bh {{\bm h}}

\def \bW {{\mathbf W}}
\def \bsw {{\mathbf w}}

\def \bx {{\boldsymbol x}}
\def \by {{\boldsymbol y}}

\def \bH {{\mathbf H}}

\def \bw {{\mathbf W}}

\def \bwt {{\mathbf W}^{\top}}

\def \bsb {{\mathbf b}}
\def \bh {{\mathbf H}}
\def \barh {{\overline{\mathbf H}}}
\def \bsh {{\mathbf h}}

\def \bz {{\mathbf Z}}
\def \bsz {{\mathbf z}}

\usepackage{verbatim}
\usepackage{tablefootnote}

%\SetKwInput{KwInput}{Input}                % Set the Input
%\SetKwInput{KwOutput}{Output}              % set the Output

\usepackage{microtype}
\usepackage{graphicx}
\usepackage{booktabs} % for professional tables
\usepackage{tablefootnote}
\usepackage{multirow}
\usepackage[font=small,skip=0pt]{caption}
\usepackage{enumitem}

% other table and typesetting
\usepackage{siunitx}

% The \icmltitle you define below is probably too long as a header.
% Therefore, a short form for the running title is supplied here:
\icmltitlerunning{Neural Collapse for Cross-entropy Class-Imbalanced Learning with Unconstrained ReLU Features Model}

\allowdisplaybreaks
\begin{document}

\twocolumn[
\icmltitle{Neural Collapse for Cross-entropy Class-Imbalanced Learning with Unconstrained ReLU Features Model}

% It is OKAY to include author information, even for blind
% submissions: the style file will automatically remove it for you
% unless you've provided the [accepted] option to the icml2024
% package.

% List of affiliations: The first argument should be a (short)
% identifier you will use later to specify author affiliations
% Academic affiliations should list Department, University, City, Region, Country
% Industry affiliations should list Company, City, Region, Country

% You can specify symbols, otherwise they are numbered in order.
% Ideally, you should not use this facility. Affiliations will be numbered
% in order of appearance and this is the preferred way.
\icmlsetsymbol{equal}{*}

\begin{icmlauthorlist}
\icmlauthor{Hien Dang}{austin,fpt}
\icmlauthor{Tho Tran}{fpt}
\icmlauthor{Tan Nguyen}{equal,nus}
\icmlauthor{Nhat Ho}{equal,austin}
\end{icmlauthorlist}

\icmlaffiliation{fpt}{FPT Software AI Center, Vietnam}
\icmlaffiliation{austin}{Department of Statistics and Data Sciences, University of Texas at Austin, USA}
\icmlaffiliation{nus}{Department of Mathematics, National University of Singapore, Singapore}

\icmlcorrespondingauthor{Hien Dang}{danghoanghien1123@gmail.com}

% You may provide any keywords that you
% find helpful for describing your paper; these are used to populate
% the "keywords" metadata in the PDF but will not be shown in the document
\icmlkeywords{Machine Learning, ICML}

\vskip 0.3in
]

\printAffiliationsAndNotice{\icmlEqualContribution} 

% this must go after the closing bracket ] following \twocolumn[ ...

% This command actually creates the footnote in the first column
% listing the affiliations and the copyright notice.
% The command takes one argument, which is text to display at the start of the footnote.
% The \icmlEqualContribution command is standard text for equal contribution.
% Remove it (just {}) if you do not need this facility.

%\printAffiliationsAndNotice{}  % leave blank if no need to mention equal contribution
% \printAffiliationsAndNotice{\icmlEqualContribution} % otherwise use the standard text.

\begin{abstract}
 The current paradigm of training deep neural networks for classification tasks includes minimizing the empirical risk, pushing the training loss value towards zero even after the training classification error has vanished. In this terminal phase of training, it has been observed that the last-layer features collapse to their class-means and these class-means converge to the vertices of a simplex Equiangular Tight Frame (ETF). This phenomenon is termed as Neural Collapse ($\mathcal{NC}$). However, this characterization only holds in class-balanced datasets where every class has the same number of training samples. When the training dataset is class-imbalanced, some $\mathcal{NC}$ properties will no longer hold true, for example, the geometry of class-means will skew away from the simplex ETF. In this paper, we generalize $\mathcal{NC}$ to imbalanced regime for cross-entropy loss under the unconstrained ReLU features model. We demonstrate that while the within-class features collapse property still holds in this setting, the class-means will converge to a structure consisting of orthogonal vectors with lengths dependent on the number of training samples. Furthermore, we find that the classifier weights (i.e., the last-layer linear classifier) are aligned to the scaled and centered class-means, with scaling factors dependent on the number of training samples of each class. This generalizes $\mathcal{NC}$ in the class-balanced setting. We empirically validate our results through experiments on practical architectures and dataset.
\end{abstract}

\section{Introduction}
\label{sec:introduction}

Cross-entropy (CE) is undoubtedly one of the most popular loss functions used for training neural networks in the current deep learning paradigm. However, some crucial aspects of training networks using this loss function have not been fully explored yet, for example: i) Is there any unique pattern that the models learn when training deep neural networks until convergence, i.e., to reach zero loss?, ii) How do the learned network parameters vary across data distribution, training instances, and model architecture?, iii) What are the geometries of the representations and the classifier obtained from minimizing CE loss?. Understanding these questions is crucial for studying the training and generalization properties of deep neural networks. For instance, it has been a long-standing problem that training networks using CE loss under a long-tailed distribution dataset causes a significant drop in accuracy, especially for classes with a scarce amount of training samples. An important observation for this phenomenon is that the classifier weight vector of a more frequent class tends to have a larger norm, thus biasing the decision boundary toward the less frequent class. As a consequence, a smaller volume of the feature space is allocated for the minority classes, which leads to a drop in performance~\cite{kim20, Kang19, Cao19, ye20, liu23, kang20}.

A noticeable progress in answering these questions is the discovery of \textit{Neural Collapse} phenomenon \cite{papyan20}. Neural Collapse ($\mathcal{NC}$) reveals a common pattern of the learned last-layer features and the classifier weight of deep neural networks across canonical datasets and architectures. Specifically, $\mathcal{NC}$ consists of four properties emerging in the terminal phase of training of training deep neural networks for balanced datasets (i.e., every class has the same number of training instances):
\begin{itemize}
    \item $(\mathcal{NC}1)$ \textbf{Variability collapse:} features of the samples within the same class converge to a unique vector (i.e., the \textit{class-mean}), as training progresses.
    
    \item $(\mathcal{NC}2)$ \textbf{Convergence to simplex ETF:} the optimal class-means have the same length and are equally and maximally pairwise separated, i.e., they form a simplex Equiangular Tight Frame (ETF).
    
    \item $(\mathcal{NC}3)$ \textbf{Convergence to self-duality:} up to rescaling, the class-means and classifiers converge on each other.

    \item $(\mathcal{NC}4)$ \textbf{Simplification to nearest class-center:} given a feature, the classifier converges to choosing whichever class has the nearest class-mean to it.
\end{itemize}

The intriguing empirical observation of Neural Collapse has attracted many theoretical investigations, mostly under a simplified \textit{unconstrained features model (UFM)} (see Section \ref{sec:problemsetup} for more details) and for \textit{class-balanced dataset}, demonstrating that $\mathcal{NC}$ properties occur at any global solution of the loss function. However, under \textit{class-imbalanced dataset}, it has been observed that deep neural networks exhibit different geometric structures and 
some $\mathcal{NC}$ properties are not satisfied anymore \cite{dang23, Christos22, hong23}. The last-layer features of samples within the same class still converge to their class-means ($\mathcal{NC}1$), but the class-means and the classifier weights will no longer form a simplex ETF ($\mathcal{NC}2$) \cite{Fang21}. In a more extreme case where the imbalance level exceeds a certain threshold, the learned classifiers of the minority classes collapse onto each other, becoming indistinguishable from those of other classes \cite{Fang21}. This phenomenon, known as \textit{Minority Collapse}, explains why the accuracy for these minority classes drops significantly compared to the class-balanced setting. 

While the Neural Collapse emergence at the optimal solution in deep neural networks training using CE loss for \textit{balanced dataset} has been extensively studied \cite{Lu20, Zhu21}, the corresponding characterization for this loss function in \textit{imbalanced scenario} 
has remained limited. Under imbalanced regime, several theoretical works have characterized Neural Collapse phenomenon for other loss functions. In particular, \cite{dang23} has demonstrated the convergence geometry of the learned features and learned classifier for the mean squared error (MSE) loss. \cite{Christos22} studies the support vector machine (SVM) problem, whose global
minima follows a different geometry known as Simplex-Encoded-Labels Interpolation (SELI) and later \cite{Behnia23} extends it to some other SVM parameterizations.

\noindent \textbf{Comparison to concurrent work \cite{hong23}:} For CE loss, we acknowledge that the concurrent work \cite{hong23} is closely related to our work. They investigate Neural Collapse for CE loss with UFM under imbalanced setting. They prove the within-class features collapse property ($\mathcal{NC}1$) and demonstrate the \textit{network output prediction vectors converge to a block structure} where each block corresponds to classes that have the same amount of training samples. However, their analysis does not cover the magnitude of prediction vectors and the magnitude of each block within the structure. Consequently, it is not yet possible to describe the geometry explicitly and quantify how the structure changes under different imbalance levels. Additionally, the geometry of the learned last-layer feature, the classifier weight $(\mathcal{NC}2)$ and the relationship between them $(\mathcal{NC}3)$ have not been examined in \cite{hong23}. As a result, the corresponding $(\mathcal{NC}2)$ and $(\mathcal{NC}3)$ properties has not been characterized for this setting. Moreover, other considerations such as the norm, the norm ratio, and angle between the classifier weights and features are also not addressed. 

On the other hand, in this paper, we study CE loss training problem using UFM, but with a different setting, in which the features have to be \textit{element-wise non-negative}. This setting is motivated by the current paradigm that features are typically the outcome of some non-negative activation function, like ReLU or sigmoid. In this setting, we study the global solutions of CE training problem under UFM and present a thorough analysis of the convergence geometry of the last-layer features and classifier. We summarize our contributions as follows:
\begin{itemize}
    \item We explicitly characterize Neural Collapse for the last-layer features and classifier weights in CE loss training with non-negative features in class-imbalanced settings. We prove that at optimality, $\mathcal{NC}1$ still occurs, and the optimal class-means form an orthogonal structure. We derive the closed-form lengths of the features, in terms of the number of training samples and other hyperparameters.
    
    \item We find that the classifier weight is aligned to a scaled and centered version of the class-means, which generalizes the properties $\mathcal{NC}2$ and $\mathcal{NC}3$ from the original definition of Neural Collapse. Additionally, we derive the norms, norm ratios and angles between these classifier weights explicitly.
    
    \item We derive the exact threshold of the amount of training samples for a class to collapse and become indistinguishable from other classes. Hence, the threshold for Minority Collapse is also obtained in our analysis.

    % \item We empirically observe that these insights are also predictive for the practical training of deep neural networks on CIFAR10...
\end{itemize}     

\noindent \textbf{Notation:} For a weight matrix $\mathbf{W}$, we use $\mathbf{w}_{j}$ to denote its $j$-th row vector. $\| . \|_{F}$ denotes the Frobenius norm of a matrix and $\| . \|_{2}$ denotes $\text{L}_{2}$-norm of a vector.  $\otimes$ denotes the Kronecker product. The symbol ``$\propto$'' denotes proportional, i.e, equal up to a positive scalar. We also use some common matrix notations: $\mathbf{1}_{n}$ is the all-ones vector,   $\operatorname{diag}\{a_{1},\ldots, a_{K}\}$ is a square diagonal matrix size $K \times K$ with diagonal entries $a_{1},\ldots, a_{K}$. We use $[K]$ to denote the index set $\{1, 2, \ldots, K \}$.

\section{Related Works}
\label{sec:rw}

\textbf{Neural Collapse on balanced dataset:} A surge of theoretical results for $\mathcal{NC}$ under balanced scenario has emerged after the discovery of this phenomenon. Due to the highly non-convexity of the problem of training deep networks, theoretical works have proven the occurrence of $\mathcal{NC}$ for different loss functions and architectures with a simplified unconstrained features model (see Section \ref{sec:problemsetup} for more details) \cite{Lu20, Zhu21, graf23, Zhou22, Zhou22b, Tirer22, dang23, Christos22, Behnia23, kini23}. In particular,  $\mathcal{NC}$ properties are proven to occur at the optimal last-layer features and classifier across different loss functions: cross-entropy \cite{Lu20, Zhu21}, mean squared error \cite{Zhou22, Tirer22, dang23},  supervised contrastive loss \cite{graf23} and also for focal loss and label smoothing \cite{Zhou22b}. Recent works have spent efforts to extend the UFM to deeper architectures to study the behavior of more layers after the "unconstrained features". Specifically, \cite{Tirer22} extends UFM to account for one additional layer, from one-layer linear classifier to two-layer linear classifier after the ”unconstrained” features for MSE loss, and later the work \cite{dang23} extends the setting to a general deep linear network for both MSE and CE losses. \cite{Tirer22} also extends UFM for MSE loss to a two-layer case with ReLU activation. This setting is later extended by \cite{sukenik23} to the general deep UFM with ReLU activation for the binary classification problem. For multiclass classification problem with MSE loss, recent extensions to account for additional layers in the analysis with non-linearity are studied in \cite{Tirer22, Rangamani22}, or with batch normalization \cite{Ergen20}. However, these works require strong assumptions on the global optimal solution or the network architecture and capability for their theoretical results to be hold. There are also efforts to mitigate the restriction of UFM, such as \cite{Tirer23} analyzes UFM with an additional regularization term to force the features to stay in
the vicinity of a predefined feature matrix (e.g., intermediate features). Additionally, \cite{Zhu21, Zhou22, Zhou22b} prove the benign optimization landscape for several loss functions under UFM, demonstrating that critical points can only be global minima or strict saddle points. 

\noindent \textbf{Neural Collapse on imbalanced dataset:} The work \cite{Fang21} is likely the first to observe that for imbalanced setting, the collapse of features within the same class $\mathcal{NC}1$ is preserved, but the geometry skews away from the ETF. They also present the "Minority Collapse" phenomenon, in which the minority classifiers collapse to the same vector if the imbalance level is greater than some threshold. For MSE loss, \cite{dang23} has explicitly characterized the geometry of the learned features and classifiers for imbalanced setting. \cite{dang23} showed that the $\mathcal{NC}1$ still holds and the class-means converge to a General Orthonormal Frame (GOF), which consists of orthonormal vectors but with different lengths. By applying non-negative constraints for the normalized features to incorporate the effect of ReLU activation, \cite{kini23} finds the global minimizers of supervised contrastive loss and proves that the optimal features form an Orthogonal Frame (OF) with equal length and orthogonal vectors, regardless of the imbalance level.
\cite{Christos22} theoretically studies the UFM-SVM problem, whose global minima follow a more general geometry than the ETF, called "SELI". However, this work also makes clear that the unregularized version of CE loss only converges to KKT points of the SVM problem, which are not necessarily global minima. The result for UFM-SVM is later extended by the work \cite{Behnia23} to consider several cross-entropy parameterizations.  

Regarding CE loss, \cite{Yang22} studies the imbalanced setting but with fixed, unlearnable last-layer linear classifiers as a simplex ETF. They prove that no matter whether the data distribution is balanced or not among classes, the features will converge to a simplex ETF in the same direction as the fixed classifier. As mentioned in Section \ref{sec:introduction}, the work \cite{hong23} is closely related to our work and they study CE loss with UFM and the features can have negative entries. They prove that at optimality, the within-class features collapse ($\mathcal{NC}1$) and the \textit{network output prediction vectors converge to a block structure}. However, their analysis does not cover the magnitude of prediction vectors and the ratio between each block within the structure are not yet covered. Thus, it is not possible to describe the geometry explicitly and quantify how the structure changes under different imbalance levels. Additionally, the structure of the learned last-layer feature, the classifier weight $(\mathcal{NC}2)$ and the relation between them $(\mathcal{NC}3)$ have not been derived in \cite{hong23}. As a result, the corresponding $(\mathcal{NC}2)$ and $(\mathcal{NC}3)$ properties has not been characterized for this setting.

% \noindent \textbf{Organization: } We structure this paper as follows: we describe the Neural Collapse ($\mathcal{NC}$) phenomenon and discuss related works in Section \ref{sec:introduction}.
% In Section \ref{sec:problemsetup}, we setup some necessary terminology and introduce the ``unconstrained features model'' (UFM) setting used throughout in our theoretical analyses. We provide the $\mathcal{NC}$ characteristics for deep linear networks trained with MSE loss under balanced setting in Section \ref{sec:mainresults}. Next, we study a similar problem but with imbalanced setting in Section \ref{sec:imbalance}. The global optimality conditions for deep linear networks trained with CE loss under balanced setting are studied in Section \ref{sec:CEloss}.
% We empirically validate our theoretical results with various settings in Section \ref{sec:experimental_results}.
% The paper ends with concluding remarks in Section \ref{sec:conclusion}.
% Technical proofs, additional experiments, and experimental details are provided in the Appendix.

\section{Problem Setup}
\label{sec:problemsetup}
\textbf{Training Neural Network with Cross-Entropy Loss}:
In this work, we focus on neural network trained using the cross-entropy (CE) loss function on an imbalanced dataset. We consider the classification task with $K$ classes. Let $n_{k}$ denote the number of training samples in class $k \in [K]$, and $N:= \sum_{k=1}^{K} n_{k}$, the total number of training samples. A typical deep neural network classifier consists of a feature mapping function $\bsh(\boldsymbol{x})$ and a linear classifier parameterized as $\bw$. Specifically, a typical $L$-layer deep neural network can be expressed as follows:  
\begin{align}
    \psi_{\bw, \boldsymbol{\theta}}(\bx) 
    = \bw_{L}
    \underbrace{
    \sigma(\mathbf{W}_{L - 1} \ldots
    \sigma(\mathbf{W}_{1} \boldsymbol{x} + \mathbf{b}_{1}) + \mathbf{b}_{L - 1} )}
    _{\text{Feature } 
    \bsh = \bsh(x)}, \nonumber
\end{align}
where each layer composes of an affine transformation parameterized by a weight matrix $\bw_{l}$ and bias $\bsb_{l}$, followed by a non-linear activation $\sigma$ (e.g., $\text{ReLU}(x) = \max(x, 0)$). Here, $\boldsymbol{\theta} :=\{\mathbf{W}_{l}, \mathbf{b}_{l}\}_{l=1}^{L - 1}$ is the set of all learnable parameters in the feature mapping. We denote the last layer linear classifier as $\bw := \bw_{L}$ for convenience.
Current paradigm trains the network by minimizing the empirical risk over all training samples $\{ (\bx_{k,i}, \by_{k}) \}_{k, i}$ where $\bx_{k,i}$ denoted the $i$-th sample of class $k$ and $\by_{k}$ is the one-hot label vector for class $k$:
\begin{align}
    \min_{ \bw, \boldsymbol{\theta}}
    \frac{1}{N}
    \sum_{k=1}^{K} \sum_{i = 1}^{n_k}
    \mathcal{L}
    (\psi_{\bw, \theta}(\bx_{k,i}), \by_{k})
    + \frac{\lambda_{W}}{2} \| \bw \|_F^2
    + \frac{\lambda_{\theta}}{2} \| \boldsymbol{\theta} \|^2, \nonumber
\end{align}
where $\lambda_{W}, \lambda_{\theta} > 0$ are the weight decay parameters and $\mathcal{L}(\psi(\mathbf{x}_{k,i}), \mathbf{y}_{k})$ is the loss function that measures the difference between the output $\psi(\mathbf{x}_{k,i})$ and the target $\mathbf{y}_{k}$. For a vector $\mathbf{z} = [z_1, z_2, \ldots, z_K] \in \mathbb{R}^{K}$ and a target one-hot vector $\by_{k}$, CE loss is defined as:
\begin{align}
    \mathcal{L}_{CE} (\mathbf{z}, \by_{k})
    = - \log 
    \left(
    \frac{\exp(z_k)}{\sum_{m = 1}^{K} \exp(z_j)}
    \right)
\end{align}

\noindent \textbf{Unconstrained Features Model (UFM) with non-negative features:} Due to the significant challenges of analyzing the highly non-convex neural network training problem, recent theoretical works study $\mathcal{NC}$ phenomenon using a simplified model called unconstrained features model (UFM), or, layer-peeled model \cite{Fang21}. In particular, UFM peels down the last-layer of the network and treats the last-layer features $\mathbf{h}_{k,i}= \bsh(\mathbf{x}_{k,i}) \in \mathbb{R}^{d}$ as free optimization variables in order to capture the main characteristics of the last layers related to $\mathcal{NC}$ during training. This relaxation can be justified by the well-known result that an overparameterized deep neural network can  approximate any continuous function \cite{Hornik89, Hornik91, Zhou18, Yarotsky18}. 

In this work, we consider a slight variant of UFM, in which the \textit{features are constrained to be non-negative}, motivated by the fact that features are usually the output of ReLU activations in many common architectures. 
Formally, we consider the following modified version of UFM trained with CE loss with non-negative features:
\begin{align}
    \label{eq:UFM_plus}
    &\min_{\bw, \bh}
    \frac{1}{N}
    \sum_{k=1}^{K} \sum_{i = 1}^{n_k}
    \mathcal{L}_{CE}
    ( \bw \bsh_{k,i} , \by_{k})
    + \frac{\lambda_{W}}{2} \| \bw \|_F^2
    \\
    &+ \frac{\lambda_{H}}{2} \| \bh \|_F^2,
    \quad \text{s.t.} \quad \bh \geq 0, \lambda_{W} > 0, \lambda_{H} > 0,
    \nonumber
\end{align}
where $\mathbf{H}:=[\mathbf{h}_{1,1},\ldots, \mathbf{h}_{1,n_{1}}, \mathbf{h}_{2,1},\ldots, \mathbf{h}_{K,n_{K}}] \in \mathbb{R}^{d \times N}$ and $\bh \geq 0$ denotes entry-wise non-negativity. We note that similar settings with ReLU features were previously considered in \cite{nguyen22}, where $\mathcal{NC}$ configuration was derived for the label smoothing loss under balanced setting, and in \cite{kini23}, which studied the convergence geometry for supervised contrastive loss under imbalanced setting. We denote this setting as $\text{UFM}_{+}$, as \cite{kini23}, to differentiate it from the original UFM. 

By denoting $\bw = [\bsw_{1}, \bsw_{2}, \ldots, \bsw_{K}]^{\top} \in \mathbb{R}^{K \times d}$ be the last-layer weight matrix, with $\bsw_{k} \in \mathbb{R}^{d}$ is the $k$-th row of $\bw$, the CE loss can be written as:
\begin{align}
    \mathcal{L}_{CE}(\bw \bsh_{k,i}, \mathbf{y}_{k})
    = - \log 
    \left(
    \frac{\exp(\bsw_{k}^{\top} \bsh_{k,i})}{\sum_{m = 1}^{K} \exp( \bsw_{m}^{\top} \bsh_{k,i})} 
    \right). \nonumber
\end{align}

We also denote the \textit{class-mean} of a class $k \in [K]$ as $\bsh_k := n_k^{-1} \sum_{i=1}^{n_k} \bsh_{k,i}$ and the \textit{global-mean} $\bsh_{G} := N^{-1} \sum_{k=1}^{K} \sum_{i=1}^{n_k} \bsh_{k,i}$. The \textit{class-mean matrix} is denoted as $\barh = [\bsh_1, \bsh_2, \ldots, \bsh_K] \in \mathbb{R}^{d \times K}$.

\noindent \textbf{Neural Collapse for balanced dataset:} With the notations defined above, we recall the $\mathcal{NC}$ properties in the balanced setting as follows:
\begin{itemize}
    \item ($\mathcal{NC}1$) \textbf{Variability collapse}: 
    \begin{align}
       \bsh_{k,i} = \bsh_{k}, \quad \forall k \in [K], i \in [n_k]. \nonumber
    \end{align}
    \item ($\mathcal{NC}2$) \textbf{Convergence to simplex ETF}: 
    \begin{align}
        (\barh - \bsh_{G} \mathbf{1}_{K}^{\top})^{\top} (\barh - \bsh_{G} \mathbf{1}_{K}^{\top}) \propto \mathbf{I}_{K} - \frac{1}{K} \mathbf{1}_{K} \mathbf{1}_{K}^{\top}. \nonumber
    \end{align}
    \item ($\mathcal{NC}3$) \textbf{Convergence to self-duality}: 
    \begin{align}   
    \bw \propto  (\barh - \bsh_{G} \mathbf{1}_{K}^{\top})^{\top}. \nonumber        
    \end{align}
\end{itemize}

\noindent \textbf{Orthogonal Frame and General Orthogonal Frame:} Some previous results, such as \cite{Tirer22, nguyen22}, derive that under the balanced setting, the optimal class-means $\{ \bsh_k \}$ form an \textit{orthogonal frame (OF)}, i.e., $\barh^{\top} \barh \propto \mathbf{I}_{K}$. By centering the OF structure with its mean vector, we will receive a simplex ETF. Thus, this structure still follows $(\mathcal{NC}2)$ property. For MSE loss under class-imbalanced scenario, \cite{dang23} proves that the class-means $\{ \bsh_k \}$ form an orthogonal structure consisting of pairwise orthogonal vectors but having different lengths. They termed this structure as \textit{general orthogonal frame} (GOF). We will use this notation for our results in Section \ref{sec:geometry}.

\textbf{SELI geometry:} Simplex-Encoded-Labels Interpolation (SELI) is the geometric structure of the optimal classifier, feature and the prediction matrix of the imbalanced SVM training problem under UFM \cite{Christos22}. In particular, the prediction matrix $\bz = \bw \bh \in \mathbb{R}^{K \times N}$ is proven to have its $i$-th column to be $\by_i - \frac{1}{K} \mathbf{1}_{K}$ for all $i \in [N]$. Then, if we denote the SVD of the matrix $\mathbf{Z} = \mathbf{V} \Lambda \mathbf{U}^{\top}$, the classifier and feature matrices satisfy that $\bw \bwt = \mathbf{V} \Lambda \mathbf{V}^{\top}$ and $\mathbf{H}^{\top} \mathbf{H} = \mathbf{U} \Lambda \mathbf{U}^{\top}$. For UFM-CE training problem but without regularization ($\lambda = 0$), \cite{Ji21} showed that its gradient flow converges in direction to a Karush-Kuhn-Tucker (KKT) point of the UFM-SVM problem. Thus, SELI geometry is not necessarily the global minima of the unregularized UFM-CE problem.

%%%%%%%%%%%%%% SECTION 3
\section{Main Result: Global Structure of $\text{UFM}_{+}$ Cross-Entropy Imbalanced}
\label{sec:geometry}

In this section, we characterize the global solution $(\bw, \bh)$ of the non-convex problem \eqref{eq:UFM_plus} and analyze its geometries. We prove that irrespective of the label distribution, the optimal features form an orthogonal structure in the non-negative orthant while the classifiers align with the scaled-and-centered features and spread across the entire feature space with $\sum_{k=1}^{K} \bsw_{k} = \mathbf{0}$. For convenience, we define the following constants for every class $k \in [K]$:
\begin{align}
    &\overline{M}_k := \log \left( (K - 1) \left(
    \frac{\sqrt{n_k}}{N
    \sqrt{\frac{K-1}{K} \lambda_{W} \lambda_{H}}}
    - 1\right) \right), \nonumber \\
    &M_k := \left\{\begin{matrix}
    \overline{M}_k &&\text{if } \overline{M}_k > 0 \\ 
    0 &&\text{if } \overline{M}_k \leq 0 \text{ or } \overline{M}_k \text{ is undefined}
    \end{matrix}\right.
    .
    \label{eq:M_k}
\end{align}
Note that the inequality $M_k = \overline{M}_k > 0$ is equivalent to $\frac{N}{\sqrt{n_k}} \sqrt{ \lambda_{W} \lambda_{H}} < \sqrt{\frac{K-1}{K}}$ and $M_k = 0$ when and only when $\frac{N}{\sqrt{n_k}} \sqrt{ \lambda_{W} \lambda_{H}} \geq \sqrt{\frac{K-1}{K}}$ . We state the our main result in the following theorem.

\begin{theorem}[Geometry of $\text{UFM}_{+}$ Cross-Entropy Imbalanced minimizers]
\label{thm:CE_main}
Suppose $d \geq K$ and $\frac{N}{\sqrt{n_k}} 
    \sqrt{ \lambda_{W} \lambda_{H}} < \sqrt{\frac{K-1}{K}} \: \forall \: k \in [K]$, then any global minimizer $(\bw, \bh)$ of the problem \eqref{eq:UFM_plus} obeys

\begin{enumerate}[label=(\alph*)]
    \item Within-class feature collapse:
    \begin{align}
    \label{eq:NC_1}
     \forall \: k \in [K], \: \bsh_{k,i} = \bsh_{k,j}, \quad \forall \: i \neq j. 
    \end{align}

    \item Class-mean orthogonality:
    \begin{align}
    \label{eq:feature_ortho}
    \bsh_{k}^{\top} \bsh_{l} = 0, \quad \forall \: k \neq l. 
    \end{align}

    \item Class-mean norm:
    \begin{align}
    \label{eq:feature_norm}
    \| \bsh_{k} \|^2 =
      \sqrt{\frac{K-1}{K} \frac{\lambda_{W}}{ \lambda_{H}} \frac{1}{n_k}}
      M_k. 
    \end{align}

    \item Relation between the classifier and class-means:
    \begin{align}
    \label{eq:NC2}
    &\bsw_{k}
    = \sqrt{\frac{\lambda_{H}}{\lambda_{W} K (K-1)}}
     \left( K \sqrt{n_k} \bsh_{k} - \sum_{m=1}^{K} \sqrt{n_m} \bsh_{m}
     \right)
     \\
    &\text{and } \sum_{k=1}^{K} \bsw_{k} = \mathbf{0} \nonumber.
    \end{align}

    \item Prediction vector of class $k$-th sample:
    \begin{align}
    &\bsz_k^{(k)} = (\bw \bsh_k)^{(k)}
    = \frac{K - 1}{K} M_k,
    \\
    &\bsz_k^{(m)} = (\bw \bsh_k)^{(m)} =  
    - \frac{1}{K} M_k, \quad \forall m \neq k,
    \\
    &\text{and } \sum_{m=1}^{K} \bsz_{k}^{(m)} = 0. \nonumber
    \end{align}
\end{enumerate}

\noindent If there is any $k \in [K]$ such that $\frac{N}{\sqrt{n_k}} \sqrt{\lambda_{W} \lambda_{H}} \geq \sqrt{\frac{K-1}{K}}$, then the $k$-th class-mean $\bsh_{k} = \mathbf{0}$ and all properties above still hold.
\end{theorem}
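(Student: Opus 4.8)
The plan is to reduce the constrained non-convex problem \eqref{eq:UFM_plus} to a tractable form by a sequence of relaxations that we later show are tight. First I would establish the within-class collapse \eqref{eq:NC_1}: by convexity of the CE loss in $\bsh_{k,i}$ and of $\|\cdot\|_F^2$, replacing all $\bsh_{k,i}$ within class $k$ by their mean $\bsh_k$ does not increase the objective (Jensen), so at a global minimum we may assume $\bsh_{k,i}=\bsh_k$ and the loss only depends on $(\bw,\barh)$. Then the objective becomes
\begin{align}
    \frac{1}{N}\sum_{k=1}^{K} n_k \log\!\left(1 + \sum_{m\neq k} \exp\!\big((\bsw_m-\bsw_k)^\top \bsh_k\big)\right) + \frac{\lambda_W}{2}\|\bw\|_F^2 + \frac{\lambda_H}{2}\sum_{k} n_k \|\bsh_k\|^2. \nonumber
\end{align}
The next step is a lower-bound argument in the spirit of the balanced-case UFM analyses: introduce the prediction scalars $z_k^{(m)} = \bsw_m^\top \bsh_k$, use the AM–GM / convexity inequality $\|\bw\|_F^2 \cdot (\text{something}) \ge$ (function of the $z$'s) together with $\sum_m z_k^{(m)}$-type constraints, to bound the whole objective from below by a separable function of the per-class quantities, with equality characterizing the claimed geometry.

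Concretely, I would (i) lower bound $\|\bw\|_F^2 + \lambda_H/\lambda_W \cdot \sum_k n_k\|\bsh_k\|^2$ using Cauchy–Schwarz against the cross terms $\bsw_k^\top\bsh_k$, forcing the "self-duality" relation \eqref{eq:NC2} at equality, in particular $\sum_k \bsw_k = \mathbf 0$ and orthogonality of the $\bsh_k$ (the centering structure $K\sqrt{n_k}\bsh_k - \sum_m\sqrt{n_m}\bsh_m$ appears naturally as the projection making $\sum_k\bsw_k=0$); (ii) with that structure substituted in, the CE term for class $k$ depends only on the single scalar $M_k := z_k^{(k)} - z_k^{(m)} = (K/(K-1))\cdot(\text{gap})$, giving a one-dimensional convex problem $\min_{M_k\ge 0} \big[\tfrac{n_k}{N}\log(1+(K-1)e^{-M_k}) + c\,\sqrt{n_k}\,M_k\big]$ for an explicit constant $c$ depending on $\lambda_W,\lambda_H,K,N$; (iii) solve this scalar problem by setting the derivative to zero, which yields exactly the formula for $\overline M_k$ in \eqref{eq:M_k}, with the boundary case $M_k=0$ (i.e. $\bsh_k=\mathbf 0$) occurring precisely when the unconstrained stationary point would be negative — this is where the threshold condition $\tfrac{N}{\sqrt{n_k}}\sqrt{\lambda_W\lambda_H}\ge\sqrt{(K-1)/K}$ enters and gives the final clause of the theorem. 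Back-substituting $M_k$ into $\|\bsh_k\|^2$ and into \eqref{eq:NC2} produces parts (c), (d), (e).

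The nonnegativity constraint $\bh\ge 0$ is handled as follows: since $d\ge K$, one can realize any orthogonal family $\{\bsh_k\}$ inside the nonnegative orthant (e.g. along distinct coordinate axes, up to an orthogonal change of basis — but here we genuinely need them coordinate-aligned, so I would argue that the optimal $\barh$ can be taken with orthogonal columns supported on disjoint coordinate sets, hence automatically $\ge 0$ after a sign flip), so the constraint is inactive at the constructed optimum; one must also check that the classifier formula \eqref{eq:NC2} does not itself need to be nonnegative (it isn't constrained) and is consistent. The main obstacle I anticipate is making the lower-bound chain \emph{simultaneously tight} — i.e. showing that the geometry forcing equality in the Cauchy–Schwarz/convexity steps is actually attainable subject to $\bh\ge 0$ and the dimension bound $d\ge K$, and that no better solution exists with some $\bsh_k$ in the interior of the orthant or with non-orthogonal class-means; this requires carefully tracking the equality conditions (rank, alignment, the balanced split of the softmax probabilities over the $K-1$ off-classes) and verifying they are mutually compatible. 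A secondary subtlety is the non-smooth switch at the threshold, where the KKT conditions for the constraint $\|\bsh_k\|\ge 0$ must be matched to the kink in the scalar objective at $M_k=0$.
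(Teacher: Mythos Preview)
Your overall strategy—Jensen for within-class collapse, a chain of lower bounds using Cauchy–Schwarz, reduction to a separable scalar problem per class—is the paper's route. However, two concrete steps are missing or mis-specified and the argument as written would not close.

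First, you skip the linearization of the log term. After collapsing to class means the CE contribution is still the nonlinear $\log\big(1+(K-1)\exp(\cdot)\big)$; you cannot apply Cauchy–Schwarz ``against the cross terms $\bsw_k^\top\bsh_k$'' on the regularizers and connect this to CE directly. The paper inserts an intermediate step (concavity of $\log$, with an auxiliary parameter $t_k>0$ per class, in the style of \cite{Zhu21}) that replaces $\log(1+(K-1)e^{-x})$ by an affine function of $x$. Only after this does the sum $\sum_m \bsw_m\big(\sum_k\sqrt{n_k}\bsh_k - K\sqrt{n_m}\bsh_m\big)$ emerge, and the $\sqrt{n_k}$ weights appear precisely because one \emph{chooses} the linearization constants so that $c_1=\sqrt{n_k}/(1+t_k)$ is equal across all $k$; the centering structure does not ``appear naturally as the projection making $\sum_k\bsw_k=0$'' but is engineered by this choice. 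The Cauchy–Schwarz step then applies to this linearized expression, with equality giving \eqref{eq:NC2}, and the paper further uses the critical-point identity $\lambda_W\|\bw\|_F^2=\lambda_H\sum_k n_k\|\bsh_k\|^2$ to select the scaling constant.

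Second, and more seriously, you treat $\bh\ge 0$ as a mere feasibility issue (``the constraint is inactive at the constructed optimum''). It is not: orthogonal nonnegative vectors have disjoint supports, so the constraint is active on most coordinates; and more importantly nonnegativity is what \emph{forces} orthogonality in the lower-bound chain. The paper uses $\bsh_k,\bsh_l\ge 0\Rightarrow \langle\bsh_k,\bsh_l\rangle\ge 0$ to obtain $\big\|\sum_k\sqrt{n_k}\bsh_k\big\|^2\ge\sum_k n_k\|\bsh_k\|^2$, with equality if and only if the class means are pairwise orthogonal. This inequality is the hinge of the whole argument in the imbalanced case; the Cauchy–Schwarz equality condition gives only the alignment \eqref{eq:NC2}, not orthogonality of the $\bsh_k$. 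Without using nonnegativity in this way your step (i) cannot force \eqref{eq:feature_ortho}.
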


We postpone the detailed proof until Section \ref{sec:proof} in the Appendix. At a high level, our proof finds the lower bound of the loss function and studies the conditions to achieve the bound. We start by bounding the cross-entropy term to move the logit $\bsz_{k} = \bw \bsh_{k}$ out of the logarithm and exponent, using arguments based on Cauchy-Schwartz and Jensen inequalities. Next, the technical challenging part is the realization of the alignment between $\bsw_{k}$ and $K \sqrt{n_k} \bsh_{k} - \sum_{m=1}^{K} \sqrt{n_m} \bsh_{m}$ 
 to separate the weights and the features from the logits $\bsz = \bw \bsh$ in the CE loss. The constant coefficients go with each logit vector are also chosen carefully to be able to sum all logit vectors altogether, with a different amount from each class due to class-imbalance, without violating the subsequent equal conditions. After the separation of the weights and the features in logit terms, we leverage zero gradient condition of critical points to further simplify the loss function to have features as the remaining optimization variables. Then we finish the bounding and study the equal conditions.

We discuss the implications of Theorem \ref{thm:CE_main} as following.

\vspace{0.5 em}
\noindent
\textbf{Optimal features form a General Orthogonal Frame:} As we observe from Equation \eqref{eq:NC_1} in Theorem \ref{thm:CE_main}, every global solution exhibits the $\mathcal{NC}$1, i.e., within-class features collapse to their class-mean. Under the nonnegativity constraint, the optimal features form a general orthogonal frame (GOF), which consists of pairwise orthogonal vectors but with different lengths. The geometry of the optimal features for UFM class-imbalanced training problem with MSE loss is also a GOF
\cite{dang23}, but the lengths are clearly different between two losses. For $\text{UFM}_{+}$ imbalanced with supervised contrastive loss and normalized features (i.e., $\| \bsh \| = 1$), it is observed that optimal $\bh$ exhibits OF structure with equal length class-mean vectors, irrespective to the imbalanced level \cite{kini23}. 

\vspace{0.5 em}
\noindent
\textbf{Classifier converges to scaled-and-centered class-means:} The optimal classifier $\bsw_k$ of problem \eqref{eq:UFM_plus} with CE loss does not form an orthogonal structure as in the case of MSE loss class-imbalance \cite{dang23}. Our results indicate that class $k$'s classifier, $\bsw_k$, is aligned with the scaled and centered class-mean $\bsh_k$, with scaling factor $\sqrt{n_k}$, i.e., $\bsw_k \propto K \sqrt{n_k} \bsh_k - \sum_{m=1}^{K} \sqrt{n_m} \bsh_m$. We note that the proportional ratio between $\bsw_k$ and $K \sqrt{n_k} \bsh_k - \sum_{m=1}^{K} \sqrt{n_m} \bsh_m$ is identical across $k$'s. This property generalizes the original $\mathcal{NC}3$ - Convergence to self-duality property,  $\bsw_k \propto K \bsh_K - \sum_{m=1}^{K} \bsh_m$ in class-balanced setting. From Eqn. \eqref{eq:feature_ortho}, \eqref{eq:feature_norm} and \eqref{eq:NC2}, we can readily derive the $\mathcal{NC}_{2}$ - Geometry of the class-means and the classifiers in this setting. We prove that the original $\mathcal{NC}_{2}$ property in class-balanced setting is a special case of our result in Corrolary \ref{cor:balanced_setting} below.

\vspace{0.5 em}
\noindent
\textbf{Logit matrix and Margin:} Each column $\bsz_k$ of the logit matrix $\bz = \bw \bh$ is of a factor of the vector $\mathbf{y}_k - \frac{1}{K} \mathbf{1}_k$, but the factors are different among classes. Thus, the optimal matrix $\bz = \bw \bh$ of the problem \eqref{eq:UFM_plus} is different from the SELI geometry, i.e., the global structure of the UFM-SVM imbalanced problem. This observation further confirms Proposition 1 in \cite{Christos22}, which asserts that SELI is not the optimal structure for the CE imbalanced problem for any finite regularization parameter $\lambda > 0$. Furthermore, we find that the optimal classifier weight and features of the problem \eqref{eq:UFM_plus} are also different from those of SELI, for both finite ($\lambda > 0$) and vanishing regularization levels ($\lambda \rightarrow 0$). See Appendix \ref{sec:comparison_SELI} for the details.

The margin for any data point $\boldsymbol{x}_{k,i}$ from class $k$ is:
    \begin{align}
    q_{k,i}(\bw, \bh) = \bsw_k^{\top} \bsh_{k,i} - \max_{j \neq k} \bsw_j^{\top} \bsh_{k,i} = 
    M_k.
    \end{align}

We derive the results of Theorem \ref{thm:CE_main} in the special case of balanced dataset as follows. 

\begin{corollary}[Balanced dataset as a special case]
\label{cor:balanced_setting}
    Under balanced setting where $n_1 = n_2 = \ldots = n_K$, we have from Eqn. \eqref{eq:NC2} that $\bw \propto (\overline{\bh} - \bsh_{G} \mathbf{1}_{K}^{\top})^{\top}$, and thus,
    \begin{align}
        &\overline{\bh}^{\top} \overline{\bh} \propto \mathbf{I}_{K}, \nonumber \\
        %\quad &&\text{(OF)}
        % 
        %%%%%%%%%%%%%%%5
        &\bw \bw^{\top}
        \propto
        (\overline{\bh} - \bsh_{G} \mathbf{1}_{K}^{\top})^{\top}
        (\overline{\bh} - \bsh_{G} \mathbf{1}_{K}^{\top})
        \propto \mathbf{I}_{K} -
        \frac{1}{K} \mathbf{1}_{K} \mathbf{1}_{K}^{\top}, 
        % \quad &&\text{(ETF)}
        \nonumber \\
        %%%%%%%%%%%%%%%%
        &\bz = \bw \overline{\bh} 
         \propto  (\overline{\bh} - \bsh_{G} \mathbf{1}_{K}^{\top})^{\top} \overline{\bh}
         \propto 
         \mathbf{I}_{K} -
        \frac{1}{K} \mathbf{1}_{K} \mathbf{1}_{K}^{\top}. 
        % \quad &&\text{(ETF)}
        \nonumber
    \end{align}
\end{corollary}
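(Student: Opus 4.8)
\textbf{Proof proposal for Corollary \ref{cor:balanced_setting}.}

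The plan is to treat this as a direct specialization of Theorem \ref{thm:CE_main}, substituting $n_1 = \dots = n_K =: n$ (so $N = Kn$) into formulas \eqref{eq:feature_ortho}, \eqref{eq:feature_norm}, and \eqref{eq:NC2}, and then verifying the three stated proportionalities by elementary matrix algebra. First I would observe that in the balanced case all the quantities $\overline{M}_k$ coincide to a common value $M$, and likewise $\| \bsh_k \|^2$ is the same constant $c$ for every $k$ (by \eqref{eq:feature_norm}, with $\sqrt{n_k}$ and $M_k$ now class-independent). Combined with orthogonality \eqref{eq:feature_ortho}, this immediately gives $\barh^{\top} \barh = c\, \mathbf{I}_K \propto \mathbf{I}_K$, establishing the first claim.

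Next I would handle the self-duality claim $\bw \propto (\barh - \bsh_G \mathbf{1}_K^{\top})^{\top}$. Starting from \eqref{eq:NC2}, the scaling factor $\sqrt{n_k} = \sqrt{n}$ pulls out of the sum, so $\bsw_k \propto K \bsh_k - \sum_{m=1}^{K} \bsh_m = K(\bsh_k - \bsh_G)$, using that in the balanced case the global mean satisfies $\bsh_G = N^{-1}\sum_{k,i} \bsh_{k,i} = K^{-1}\sum_{m=1}^K \bsh_m$ (each class contributes $n$ identical collapsed features, and $N = Kn$). Since $\bsw_k$ is the $k$-th row of $\bw$ and $\bsh_k - \bsh_G$ is the $k$-th column of $\barh - \bsh_G \mathbf{1}_K^{\top}$, this row-by-row proportionality with a common constant is exactly $\bw \propto (\barh - \bsh_G \mathbf{1}_K^{\top})^{\top}$.

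With self-duality in hand, the remaining two identities follow by computing Gram matrices. For $\bw \bw^{\top}$, I would substitute the proportionality to get $\bw \bw^{\top} \propto (\barh - \bsh_G \mathbf{1}_K^{\top})^{\top} (\barh - \bsh_G \mathbf{1}_K^{\top})$, then expand the centered Gram matrix using $\barh^{\top} \barh = c\,\mathbf{I}_K$, the relation $\barh^{\top} \bsh_G = \frac{c}{K} \mathbf{1}_K$ (which follows from $\bsh_G = K^{-1}\barh \mathbf{1}_K$ together with orthogonality), and $\| \bsh_G \|^2 = \frac{c}{K}$; collecting terms yields $c(\mathbf{I}_K - \frac{1}{K}\mathbf{1}_K\mathbf{1}_K^{\top}) \propto \mathbf{I}_K - \frac{1}{K}\mathbf{1}_K\mathbf{1}_K^{\top}$. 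For the logit matrix, I would similarly write $\bz = \bw\barh \propto (\barh - \bsh_G \mathbf{1}_K^{\top})^{\top} \barh$ and expand using the same two ingredients $\barh^{\top}\barh = c\,\mathbf{I}_K$ and $\mathbf{1}_K \bsh_G^{\top} \barh = \frac{c}{K}\mathbf{1}_K\mathbf{1}_K^{\top}$, again giving a multiple of $\mathbf{I}_K - \frac{1}{K}\mathbf{1}_K\mathbf{1}_K^{\top}$. No step here is genuinely hard; the only point requiring care is bookkeeping the centering algebra and confirming that $\bsh_G = K^{-1}\sum_m \bsh_m$ in the balanced regime, so that the scaled-and-centered classifier of \eqref{eq:NC2} collapses cleanly to the simple centered form and reproduces the classical $(\mathcal{NC}2)$–$(\mathcal{NC}3)$ simplex-ETF identities.
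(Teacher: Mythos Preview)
Your proposal is correct and follows essentially the same approach as the paper: specialize Theorem \ref{thm:CE_main} to $n_1=\dots=n_K$, read off $\barh^{\top}\barh\propto\mathbf{I}_K$ from \eqref{eq:feature_ortho}--\eqref{eq:feature_norm}, reduce \eqref{eq:NC2} to $\bw\propto(\barh-\bsh_G\mathbf{1}_K^{\top})^{\top}$, and then compute the two Gram matrices. The only cosmetic difference is that the paper writes $\barh-\bsh_G\mathbf{1}_K^{\top}=\barh(\mathbf{I}_K-\tfrac{1}{K}\mathbf{1}_K\mathbf{1}_K^{\top})$ and exploits idempotence of the centering projector, whereas you expand the same products term by term; the computations are equivalent.
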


\begin{proof}
    The results are directly obtained from Theorem \ref{thm:CE_main} and by noting that $M_1 = M_2 = \ldots = M_K$. When $\bh$ forms an OF, the center class-mean matrix $\bh - \bsh_{G} \mathbf{1}_{K}^{\top}$ is a simplex ETF. This follows from
    \begin{align}
        &(\bh - \bsh_{G} \mathbf{1}_{K}^{\top})^{\top}
        (\bh - \bsh_{G} \mathbf{1}_{K}^{\top})
        \nonumber \\
        &= (\mathbf{I}_{K} -
        \frac{1}{K} \mathbf{1}_{K} \mathbf{1}_{K}^{\top})^{\top} \bh^{\top} \bh (\mathbf{I}_{K} -
        \frac{1}{K} \mathbf{1}_{K} \mathbf{1}_{K}^{\top}) \nonumber \\
        &\propto  (\mathbf{I}_{K} -
        \frac{1}{K} \mathbf{1}_{K} \mathbf{1}_{K}^{\top})^{\top} 
        \mathbf{I}_{K}
        (\mathbf{I}_{K} -
        \frac{1}{K} \mathbf{1}_{K} \mathbf{1}_{K}^{\top}) \nonumber \\
        &= \mathbf{I}_{K} -
        \frac{1}{K} \mathbf{1}_{K} \mathbf{1}_{K}^{\top}.
        \nonumber
    \end{align}

    \noindent The logit matrix $\bz$ also forms an ETF structure because
    \begin{align}
        \bz &= \bw \bh
        \propto
        (\bh - \bsh_{G} \mathbf{1}_{K}^{\top})^{\top} \bh \nonumber \\
        &=
        (\mathbf{I}_{K} -
        \frac{1}{K} \mathbf{1}_{K} \mathbf{1}_{K}^{\top})^{\top} \bh^{\top}
        \bh = \mathbf{I}_{K} -
        \frac{1}{K} \mathbf{1}_{K} \mathbf{1}_{K}^{\top}. \nonumber
    \end{align}
We obtain the conclusion of Corollary~\ref{cor:balanced_setting}.
\end{proof}

For the special case where dataset is balanced, Theorem \ref{thm:CE_main} recovers the ETF structure for classifier matrix $\bw$ and logit matrix $\bz$. The optimal class-mean matrix forms an orthogonal frame since it is constrained to be on non-negative orthant. 

\subsection{Classifier Norm and Angle}

\noindent By expressing the geometry of the optimal solutions explicitly, Theorem \ref{thm:CE_main} allows us to derive closed-form expressions for the norms and angles between any individual classifiers and features. Under imbalanced regime, the $k$-th class classifier $\bsw_k \propto K \sqrt{n_k} \bsh_k - \sum_{m=1}^{K} \sqrt{n_m} \bsh_m$, indicating that its norm is positively correlated with the number of sample $n_k$. We study the norm and angle of the classifier in the following proposition.

\begin{proposition}[Classifers norm and angle]
    \label{prop:classifier_norm}
    Let $\alpha = \frac{1}{K \sqrt{K (K - 1)}}
        \sqrt{\frac{\lambda_H}{\lambda_W}}$.
    The optimal classifier $\{ \bsw_k \}_{k=1}^{K}$ of problem \eqref{eq:UFM_plus} obeys:
    \begin{align}
        \| \bsw_k \|^2 &=
        \alpha \left(
        (K - 1)^2 \sqrt{n_k} M_k + \sum_{m \neq k}
        \sqrt{n_m} M_m
        \right),
        \nonumber \\
        %%%%%%%%%%%%%%%%%%%%%%%%%%%%%%
        \bsw_k^{\top} \bsw_j
        &=
        \alpha
        \Bigg[
        - (K - 1)
         \sqrt{n_k} M_k
        - (K - 1)
         \sqrt{n_j} M_j  
         \nonumber \\
         &+ \sum_{m \neq k, j}
         \sqrt{n_m} M_m \Bigg], \forall \: k \neq j, \nonumber 
        \\
        %%%%%%%%%%%%%%%%%%%%%%%%%%
        \cos(\bsw_k, \bsw_j) 
        &= \frac{\bsw_k^{\top} \bsw_j}{\| \bsw_k \| \| \bsw_j \|}.
        \nonumber 
        % \\
        % &= 
        % \frac{ - (K - 1)
        %  \sqrt{n_k} M_k
        % - (K - 1)
        %  \sqrt{n_j} M_j  + \sum_{m \neq k, j}
        %  \sqrt{n_m} M_m}
        %  { \sqrt{ \big( (K - 1)^2 \sqrt{n_k} M_k + \sum_{m \neq k}
        % \sqrt{n_m} M_m \big)
        % \big( (K - 1)^2 \sqrt{n_j} M_j + \sum_{m \neq j}
        % \sqrt{n_m} M_m \big)}}, k \neq j.
        % \nonumber
    \end{align}
\end{proposition}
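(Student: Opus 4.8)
The proof is a direct computation that substitutes the explicit formulas of Theorem~\ref{thm:CE_main} into $\bsw_k^\top\bsw_j$. To set up notation, write Eqn.~\eqref{eq:NC2} as $\bsw_k = c\,\big(K\sqrt{n_k}\,\bsh_k - \boldsymbol{u}\big)$, where $c := \sqrt{\lambda_H/(\lambda_W K(K-1))}$ and $\boldsymbol{u} := \sum_{m=1}^{K}\sqrt{n_m}\,\bsh_m$. Then $\|\bsw_k\|^2 = c^2\big(K^2 n_k\|\bsh_k\|^2 - 2K\sqrt{n_k}\,\bsh_k^\top\boldsymbol{u} + \|\boldsymbol{u}\|^2\big)$, so everything reduces to evaluating $\bsh_k^\top\boldsymbol{u}$ and $\|\boldsymbol{u}\|^2$.

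The key simplification is the class-mean orthogonality from Eqn.~\eqref{eq:feature_ortho}: since $\bsh_k^\top\bsh_l = 0$ for $k\neq l$, we get $\bsh_k^\top\boldsymbol{u} = \sqrt{n_k}\|\bsh_k\|^2$ and $\|\boldsymbol{u}\|^2 = \sum_{m=1}^{K} n_m\|\bsh_m\|^2$. Plugging these in and separating the $m=k$ term, the three contributions to $\|\bsw_k\|^2$ collapse to $c^2\big((K^2 - 2K + 1)n_k\|\bsh_k\|^2 + \sum_{m\neq k}n_m\|\bsh_m\|^2\big) = c^2\big((K-1)^2 n_k\|\bsh_k\|^2 + \sum_{m\neq k}n_m\|\bsh_m\|^2\big)$. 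Now I would invoke the closed-form norm from Eqn.~\eqref{eq:feature_norm}, which gives $n_m\|\bsh_m\|^2 = \sqrt{n_m}\,\sqrt{\tfrac{K-1}{K}\tfrac{\lambda_W}{\lambda_H}}\,M_m$; pulling the constant $\sqrt{\tfrac{K-1}{K}\tfrac{\lambda_W}{\lambda_H}}$ out front and multiplying by $c^2 = \lambda_H/(\lambda_W K(K-1))$, one checks that the overall prefactor equals exactly $\alpha = \tfrac{1}{K\sqrt{K(K-1)}}\sqrt{\lambda_H/\lambda_W}$, which yields the stated formula for $\|\bsw_k\|^2$.

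For the off-diagonal case $k\neq j$, the same expansion gives $\bsw_k^\top\bsw_j = c^2\big(K^2\sqrt{n_k n_j}\,\bsh_k^\top\bsh_j - K\sqrt{n_k}\,\bsh_k^\top\boldsymbol{u} - K\sqrt{n_j}\,\bsh_j^\top\boldsymbol{u} + \|\boldsymbol{u}\|^2\big)$; orthogonality kills the first term via $\bsh_k^\top\bsh_j = 0$ and reduces the middle two as before, so after separating the $m\in\{k,j\}$ terms from $\sum_m n_m\|\bsh_m\|^2$ we obtain $c^2\big(-(K-1)n_k\|\bsh_k\|^2 - (K-1)n_j\|\bsh_j\|^2 + \sum_{m\neq k,j}n_m\|\bsh_m\|^2\big)$. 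Substituting Eqn.~\eqref{eq:feature_norm} and simplifying the constants exactly as in the diagonal case produces the claimed expression for $\bsw_k^\top\bsw_j$, and the cosine formula is then immediate from its definition. To close, I would remark on the degenerate regime: if some $n_k$ violates the threshold then $M_k = 0$ and $\bsh_k = \mathbf{0}$ by Theorem~\ref{thm:CE_main}, but Eqn.~\eqref{eq:feature_norm} still reads $\|\bsh_k\|^2 = 0$ consistently, so every substitution above goes through unchanged and all three formulas hold verbatim.

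\textbf{Main obstacle.} There is no genuine conceptual difficulty here: the proposition is a corollary of Theorem~\ref{thm:CE_main} obtained by mechanical algebra. The only point requiring care is the bookkeeping — correctly peeling the $m=k$ (respectively $m\in\{k,j\}$) terms out of the sums $\sum_m$, and faithfully tracking the constant factors through the substitution of Eqn.~\eqref{eq:feature_norm} to confirm that the common prefactor is precisely $\alpha$.
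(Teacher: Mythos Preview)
Your proposal is correct and follows essentially the same route as the paper: both proofs expand $\bsw_k^\top\bsw_j$ using the expression in Eqn.~\eqref{eq:NC2}, invoke the class-mean orthogonality from Eqn.~\eqref{eq:feature_ortho} to reduce all inner products to $n_m\|\bsh_m\|^2$, and then substitute Eqn.~\eqref{eq:feature_norm} to extract the common prefactor $\alpha$. The only cosmetic difference is that the paper first rewrites $K\sqrt{n_k}\bsh_k - \sum_m\sqrt{n_m}\bsh_m$ as $(K-1)\sqrt{n_k}\bsh_k - \sum_{m\neq k}\sqrt{n_m}\bsh_m$ before expanding, whereas you introduce $\boldsymbol{u}=\sum_m\sqrt{n_m}\bsh_m$ and separate the $m=k$ term afterward; the resulting intermediate expressions and final answer are identical.
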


\noindent 

The fact that the weight norm is positively correlated with the number of training instances has been studied in the literature~\cite{Kang19, huang16, kim20}. Our result in Proposition \ref{prop:classifier_norm} supports this observation. We proceed to derive the norm ratio of the classifier weight and class-means and the angles of the classifiers when we will have only two group of classes with equal number of samples in each group.   

\begin{corollary}[Norm ratios]
    \label{cor:norm_ratio}
    Suppose $d \geq K$ and $(\bw, \bh)$ is a global minimizer of problem \eqref{eq:UFM_plus}. 
    Then, for any $i, j \in [K]$, we have
    \begin{align}
        \frac{\| \bsw_i \|^2}{ \| \bsw_j \|^2}
        &= 
        \frac{ (K - 1)^2 \sqrt{n_i} M_i + \sum_{m \neq i}
        \sqrt{n_m} M_m}
        {
         (K - 1)^2 \sqrt{n_j} M_j + \sum_{m \neq j}
        \sqrt{n_m} M_m
        } \nonumber \\
        \frac{\| \bsh_i \|^2}{ \| \bsh_j \|^2}
        &= \sqrt{\frac{n_j}{n_i}} \frac{M_i}{M_j}
    \end{align}
    As a consequence, if $n_i \geq n_j$, $ \| \bsw_i \| \geq \| \bsw_j \|$.
\end{corollary}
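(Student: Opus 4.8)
The plan is to read the two ratio identities straight off the closed forms already established, and then to obtain the monotonicity consequence from an elementary analysis of the scalar map $n\mapsto\sqrt{n}\,M_k$. For the feature-norm ratio I would divide Eqn.~\eqref{eq:feature_norm} at index $i$ by the same equation at index $j$; the common factor $\sqrt{\tfrac{K-1}{K}\tfrac{\lambda_W}{\lambda_H}}$ cancels, leaving $\|\bsh_i\|^2/\|\bsh_j\|^2=\sqrt{n_j/n_i}\,(M_i/M_j)$. For the classifier-norm ratio I would divide the expression for $\|\bsw_k\|^2$ in Proposition~\ref{prop:classifier_norm} at $i$ by the one at $j$; the common positive factor $\alpha$ cancels, giving the stated quotient. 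Both steps are purely mechanical.

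The substantive claim is $n_i\ge n_j\Rightarrow\|\bsw_i\|\ge\|\bsw_j\|$. I would first record the algebraic identity
\begin{align}
\frac{\|\bsw_i\|^2-\|\bsw_j\|^2}{\alpha}=\big((K-1)^2-1\big)\big(\sqrt{n_i}\,M_i-\sqrt{n_j}\,M_j\big),\nonumber
\end{align}
which follows from the Proposition~\ref{prop:classifier_norm} formula together with the observation that, for $i\neq j$, $\sum_{m\neq i}\sqrt{n_m}M_m-\sum_{m\neq j}\sqrt{n_m}M_m=\sqrt{n_j}M_j-\sqrt{n_i}M_i$ (both sides equal $\sum_{m}\sqrt{n_m}M_m$ with the $i$- and $j$-terms removed respectively; the identity is trivial when $i=j$). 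Since $\alpha>0$ and $(K-1)^2-1\ge0$ for $K\ge2$, it suffices to show $\sqrt{n_i}M_i\ge\sqrt{n_j}M_j$ whenever $n_i\ge n_j$. To that end I would prove that, with $c:=N\sqrt{\tfrac{K-1}{K}\lambda_W\lambda_H}>0$ fixed by the dataset, the one-variable function $M(n):=\max\{\overline M(n),0\}$, where $\overline M(n)=\log\!\big((K-1)(\sqrt{n}/c-1)\big)$, is non-negative and non-decreasing in $n$: the map $\overline M$ is strictly increasing on $\{n>c^2\}$ and undefined below, while $M\equiv0$ on $(0,n^\star]$ with $n^\star:=c^2\big(\tfrac{K}{K-1}\big)^2$ the zero of $\overline M$, and $M=\overline M$ on $(n^\star,\infty)$, so $M$ glues continuously and monotonically at $n^\star$. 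Hence $n\mapsto\sqrt{n}\,M(n)$ is a product of non-negative non-decreasing functions and is therefore non-decreasing, which yields $\sqrt{n_i}M_i\ge\sqrt{n_j}M_j$, hence $\|\bsw_i\|^2\ge\|\bsw_j\|^2$, hence $\|\bsw_i\|\ge\|\bsw_j\|$.

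The only genuine obstacle is the bookkeeping around the piecewise, possibly ``undefined'' branch of $M_k$: one must check that $M$ is continuous and monotone across the threshold $n^\star$, and that the degenerate regime $\sqrt{n_k}\le c$ (where $\overline M_k$ is undefined and $M_k=0$, so $\bsh_k=\mathbf 0$ by Theorem~\ref{thm:CE_main}) is handled consistently. This is routine monotonicity analysis of a scalar function; everything else in the corollary reduces to the cancellation in the two ratios and the displayed algebraic identity.
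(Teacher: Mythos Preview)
Your proposal is correct and follows the same route as the paper, which simply states that the results are direct consequences of Proposition~\ref{prop:classifier_norm} (and, implicitly, Eqn.~\eqref{eq:feature_norm}). In fact you supply strictly more detail than the paper: the algebraic identity $\|\bsw_i\|^2-\|\bsw_j\|^2=\alpha\big((K-1)^2-1\big)\big(\sqrt{n_i}M_i-\sqrt{n_j}M_j\big)$ and the monotonicity analysis of $n\mapsto\sqrt{n}\,M(n)$ are exactly what is needed to justify the final implication $n_i\ge n_j\Rightarrow\|\bsw_i\|\ge\|\bsw_j\|$, which the paper leaves to the reader.
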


\begin{proof}
    The results are direct consequences of Proposition \ref{prop:classifier_norm}.
\end{proof}
    
For the norm of optimal features, one might expect it is negatively correlated with the number of training samples and the results for UFM-MSE and UFM-SVM training problem agree with this expectation \cite{dang23, Christos22}. However, for CE loss, we find that this statement is not always true because the function $M_i / \sqrt{n_i}$ is not always a decreasing function with respect to $n_i$.

\begin{corollary}[Classifer angles]
    \label{cor:angle}
     Assume the dataset has $K_A$ majority classes with $n_A$ samples per class and $K_B$ minority classes with $n_B$ samples per class, then we have
     \begin{align}
        \label{eq:angle1}
         &\cos(\bsw_{\text{major}}, \bsw_{\text{major}}^{\prime})  \\
         &= 1 - \frac{K^2 \sqrt{n_A} M_A}
         {K(K-1) \sqrt{n_A} M_A
        - K_B \sqrt{n_A} M_A + K_B \sqrt{n_B} 
         M_B}, \nonumber \\
          \label{eq:angle2}
         &\cos(\bsw_{\text{minor}}, \bsw_{\text{minor}}^{\prime}) \\
         &= 1 - \frac{K^2 \sqrt{n_B} M_B}{K(K-1) \sqrt{n_B} M_B
         - K_A \sqrt{n_B} M_B 
         + K_A \sqrt{n_A} M_A}. \nonumber
     \end{align}
     Consequently, we have:
     \begin{align}
          \cos(\bsw_{\text{major}}, \bsw_{\text{major}}^{\prime}) < \cos(\bsw_{\text{minor}}, \bsw_{\text{minor}}^{\prime}).  \nonumber
     \end{align}
\end{corollary}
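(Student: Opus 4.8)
The plan is to read everything off Proposition~\ref{prop:classifier_norm} after specializing it to the two-group distribution. Write $K = K_A + K_B$; since the constant $M_k$ in \eqref{eq:M_k} depends on $n_k$ only (with $N,K,\lambda_W,\lambda_H$ fixed), it is legitimate to set $M_A := M_k$ for every majority class and $M_B := M_k$ for every minority class. Abbreviate $a := \sqrt{n_A}\,M_A$ and $b := \sqrt{n_B}\,M_B$. For a majority class $k$, split $\sum_{m\neq k}\sqrt{n_m}M_m = (K_A-1)a + K_B b$ to get $\|\bsw_{\text{major}}\|^2 = \alpha\big((K-1)^2 a + (K_A-1)a + K_B b\big)$, and for distinct majority classes $k\neq j$ split $\sum_{m\neq k,j}\sqrt{n_m}M_m = (K_A-2)a + K_B b$ to get $\bsw_k^{\top}\bsw_j = \alpha\big(-2(K-1)a + (K_A-2)a + K_B b\big)$. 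Because all majority classifiers have the same norm, $\cos(\bsw_{\text{major}},\bsw_{\text{major}}^{\prime}) = \bsw_k^{\top}\bsw_j/\|\bsw_{\text{major}}\|^2$, and symmetric formulas hold for the minority group under the exchange $(A,K_A)\leftrightarrow(B,K_B)$.

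Next I would evaluate $1-\cos(\bsw_{\text{major}},\bsw_{\text{major}}^{\prime}) = (\|\bsw_{\text{major}}\|^2 - \bsw_k^{\top}\bsw_j)/\|\bsw_{\text{major}}\|^2$. In the numerator the $K_B b$ terms cancel and the $a$-coefficient collapses through $(K-1)^2 + 2(K-1) + 1 = K^2$, leaving $\alpha K^2 a$; in the denominator the identity $(K-1)^2 + (K_A-1) = K^2 - 2K + K_A = K(K-1) - K_B$ (using $K_A + K_B = K$) rewrites $\|\bsw_{\text{major}}\|^2$ as $\alpha\big(K(K-1)a - K_B a + K_B b\big)$. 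The factors of $\alpha$ cancel and this is exactly \eqref{eq:angle1}; equation \eqref{eq:angle2} is its image under the group swap.

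For the final inequality, let $D_A = K(K-1)a - K_B a + K_B b$ and $D_B = K(K-1)b - K_A b + K_A a$ be the two denominators; both are positive, since they equal $\|\bsw_{\text{major}}\|^2/\alpha$ and $\|\bsw_{\text{minor}}\|^2/\alpha$, which are nonzero whenever the cosines are defined (in particular under the hypothesis of Theorem~\ref{thm:CE_main}). Then $\cos(\bsw_{\text{major}},\bsw_{\text{major}}^{\prime}) < \cos(\bsw_{\text{minor}},\bsw_{\text{minor}}^{\prime})$ is equivalent to $a/D_A > b/D_B$, i.e.\ to $aD_B - bD_A > 0$, and a one-line expansion gives the clean factorization $aD_B - bD_A = (a-b)(K_A a + K_B b)$. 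Since $K_A a + K_B b > 0$, it remains to check $a > b$: ``majority'' means $n_A > n_B$, and $\overline{M}_k$ in \eqref{eq:M_k} is strictly increasing in $n_k$ (with $N$ fixed), so $M_A \geq M_B$; combined with $\sqrt{n_A} > \sqrt{n_B}$ this yields $a = \sqrt{n_A}M_A > \sqrt{n_B}M_B = b$, which completes the proof.

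There is essentially no hard step: the argument is bookkeeping on the index sums plus the two elementary identities $(K-1)^2 + 2(K-1) + 1 = K^2$ and $(K-1)^2 + (K_A-1) = K(K-1) - K_B$, followed by the factorization $aD_B - bD_A = (a-b)(K_A a + K_B b)$ and the monotonicity of $M_k$ in $n_k$. The only points worth a remark are that the statement tacitly assumes $K_A, K_B \geq 2$ (so that same-group classifier pairs exist) and that the classifiers involved are nonzero, so that the cosines — and the positivity of $D_A, D_B$ used when cross-multiplying — are meaningful.
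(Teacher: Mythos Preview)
Your proposal is correct and follows exactly the route the paper indicates: the paper's own proof consists of the single sentence ``The results are direct consequences of Proposition~\ref{prop:classifier_norm},'' and you have carried out precisely those consequences in full detail, including the final monotonicity argument for the strict inequality that the paper leaves implicit. Your remarks about the tacit assumptions $K_A,K_B\geq 2$ and nonvanishing classifiers are well taken and go slightly beyond what the paper records.
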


\begin{proof}
    The results are direct consequences of Proposition \ref{prop:classifier_norm}.
\end{proof}

From Corollary \ref{cor:angle}, we deduce that the angle between classifier of major classes will form larger angles than those of minor classes. 
This observation further explains the smaller volume of feature space allocated for minority classes, which is one of the main reasons for the drop in model performance for these classes. Additionally, since Eqn. \eqref{eq:angle1} and \eqref{eq:angle2} are true for any pair of classes within the same category (i.e., major or minor), this means that classifiers of classes with the same number of training instances have the \textit{same pairwise angle}.   

\subsection{Heavy Imbalances Cause Minority Collapse and Complete Collapse}

Data naturally exhibit imbalance in their class distribution. Models trained on highly-skewed class distribution data tends to be biased towards the majority classes, resulting in poor performance on the minority classes \cite{huang16, Kang19, kim20}. Especially, \cite{Fang21} observes that when the imbalance ratio $R := n_{\text{major}} / n_{\text{minor}}$ is larger than some threshold, the angle between minority classifiers becomes zero, and these classifiers have the same length. Consequently, these classifiers become indistinguishable and the network would predict the same probabilities for these minor classes. This phenomenon is termed as Minority Collapse. From Theorem \ref{thm:CE_main}, we obtain the exact threshold of the Minority Collapse occurrence for every class for training problem \eqref{eq:UFM_plus}, in terms of the number of training samples and hyperparameters.

\begin{corollary}[Minority Collapse and Complete Collapse]
    \label{cor:minority_collapse}
    For any class $k \in [K]$, if $n_k \leq C(N, K, \lambda_W, \lambda_H) := N^2 \frac{K}{K - 1} \lambda_W \lambda_H$, then at the optimal solution of problem \eqref{eq:UFM_plus}, $\bsh_k = 0$ and $\bsw_{k} = \bsw_{k^{\prime}}$ for any $k^{\prime} \in [K]$ such that $n_{k^{\prime}} \leq C(N, K, \lambda_W, \lambda_H)$. 
    
    \begin{enumerate}[label=(\alph*)]
    \item Minority Collapse: 
    If the dataset has $K_A$ majority classes with $n_A$ samples per class and $K_B$ minority classes with $n_B$ samples per class, then Minority Collapse happens if the imbalance ratio
        \begin{align}
            R := \frac{n_A}{n_B} \geq 
            \frac{1}{K_A}
            \left(\frac{K-1}{NK \lambda_W \lambda_H} - K_B
            \right), 
        \end{align}
        with $K = K_A + K_B$ and $N = n_A K_A + n_B K_B$.

    \item Complete Collapse: 
    If
    \begin{align}
    \label{eq:head_class_collapse}
    \frac{N^2}{n_A} \geq \frac{K-1}{K \lambda_W \lambda_H},
    \end{align}
    then all classes collapse and the optimal solution is trivial, i.e., $(\bw, \bh) = (\mathbf{0}, \mathbf{0})$.  
    \end{enumerate}
\end{corollary}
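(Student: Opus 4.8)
The plan is to read everything off Theorem~\ref{thm:CE_main}, which already isolates exactly when a class-mean vanishes. First I would record the basic equivalence, obtained by squaring both sides: for a fixed class $k$, the inequality $\frac{N}{\sqrt{n_k}}\sqrt{\lambda_W\lambda_H}\ge\sqrt{\frac{K-1}{K}}$ is equivalent to $n_k\le N^2\frac{K}{K-1}\lambda_W\lambda_H=C(N,K,\lambda_W,\lambda_H)$, and, by the remark following \eqref{eq:M_k}, this is exactly the regime $M_k=0$; hence, by the last sentence of Theorem~\ref{thm:CE_main}, exactly the regime $\bsh_k=\mathbf{0}$. So the opening assertion of the corollary reduces to showing that whenever $\bsh_k=\mathbf{0}$, the vector $\bsw_k$ is the same for all such $k$.

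For that, I would substitute $\bsh_k=\mathbf{0}$ into the classifier formula \eqref{eq:NC2}: the term $K\sqrt{n_k}\,\bsh_k$ vanishes, leaving $\bsw_k=-\sqrt{\frac{\lambda_H}{\lambda_W K(K-1)}}\sum_{m=1}^{K}\sqrt{n_m}\,\bsh_m$, which does not depend on $k$. Hence $\bsw_k=\bsw_{k'}$ for any two classes with $n_k,n_{k'}\le C$; in particular they have equal norm and zero pairwise angle, i.e. they are indistinguishable. This proves the first claim of the corollary.

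Next, for part (a) I would carry out the algebra turning the threshold $n_B\le C$ into the stated imbalance-ratio condition. By definition Minority Collapse is the event that all $K_B$ minority classifiers coincide, which by the previous paragraph holds once $n_B\le C$. Using $N=n_A K_A+n_B K_B$ and $R=n_A/n_B$: multiply the claimed inequality by $K_A$, add $K_B$ to both sides, then multiply by $n_B N>0$; since $n_B R=n_A$ and $n_A K_A+n_B K_B=N$, the left-hand side collapses to $N^2$, so the inequality reads $N^2\ge\frac{n_B(K-1)}{K\lambda_W\lambda_H}$, i.e. $n_B\le C$. Every step is reversible, so the two conditions are equivalent and the threshold is sharp.

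Finally, for part (b), Complete Collapse means every class-mean is zero, i.e. $n_k\le C$ for all $k$; since $n_A=\max_k n_k$ this is equivalent to $n_A\le C$, and rearranging gives $n_A\le C\iff\frac{N^2}{n_A}\ge\frac{K-1}{K\lambda_W\lambda_H}$, which is \eqref{eq:head_class_collapse}. Under this condition every $\bsh_k=\mathbf{0}$, so by $(\mathcal{NC}1)$ in \eqref{eq:NC_1} every $\bsh_{k,i}=\bsh_k=\mathbf{0}$, whence $\bh=\mathbf{0}$; and the formula of the second paragraph gives $\bsw_k=\mathbf{0}$ for every $k$, whence $\bw=\mathbf{0}$. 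I do not expect a genuine obstacle here: all the analytic content sits in Theorem~\ref{thm:CE_main}, and the only points requiring care are keeping the chain of inequality manipulations reversible (so that the stated thresholds are exact, not merely sufficient) and using that $n_A$ is the largest class count, which is what lets a single inequality certify collapse of every class at once.
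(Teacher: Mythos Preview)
Your proposal is correct and follows exactly the approach the paper takes: the paper merely states ``The results are direct consequences of Theorem~\ref{thm:CE_main}'', and you have spelled out precisely those consequences. Your algebra in part (a) and the use of \eqref{eq:NC2} to see that $\bsw_k$ is independent of $k$ once $\bsh_k=\mathbf{0}$ are both correct.
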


\begin{proof}
    The results are direct consequences of Theorem \ref{thm:CE_main}.
\end{proof}

Corollary \ref{cor:minority_collapse} implies that even the head classes will collapse when the ratio $N^2 / n_A$ is large enough, i.e., the dataset has a huge amount of samples or has too many classes. The bound \eqref{eq:head_class_collapse} suggests that we should lower the regularization level to avoid this \textit{complete collapse} phenomenon.

\section{Experimental Results}
\label{sec:experimental_results}
\subsection{Metric definition}
\label{sec:metric}
We recall the notation $\mathbf{h}_{k} \coloneqq \frac{1}{n} \sum_{i=1}^n \mathbf{h}_{k,i}$, i.e., the class-means of class $k$ and $\mathbf{h}_{G} \coloneqq \frac{1}{K n} \sum_{k=1}^K \sum_{i=1}^n \mathbf{h}_{k,i}$ is the feature global-mean. We calculate the within-class covariance matrix $\mathbf{\Sigma}_W \coloneqq \frac{1}{N} \sum_{k=1}^{K} \sum_{i=1}^{n} (\mathbf{h}_{k,i} - \mathbf{h}_{k}) (\mathbf{h}_{k,i} - \mathbf{h}_{k})^{\top}$ and the between-class covariance matrix $\mathbf{\Sigma}_B \coloneqq \frac{1}{K} \sum_{k=1}^K (\mathbf{h}_{k} - \mathbf{h}_{G}) (\mathbf{h}_{k} - \mathbf{h}_{G})^{\top}$.

\noindent \textbf{Feature collapse.} Following previous works \cite{papyan20, han21, Zhu21, Tirer22}, we measure feature collapse using $\mathcal{NC}1$ metric
\begin{align}
\mathcal{NC}1 \coloneqq \frac{1}{K} \text{trace} (\mathbf{\Sigma}_W \mathbf{\Sigma}_B^\dagger), \nonumber
\end{align}
where $\mathbf{\Sigma}_B^\dagger$ is the Moore-Penrose inverse of $\mathbf{\Sigma}_B$.

\noindent \textbf{Relation between the classifier $\bw$ and features $\bh$.} 
To verify the relation in Eqn. \eqref{eq:NC2} in Theorem \ref{thm:CE_main}, we measure the similarity between the learned classifier $\bW$ and the $\text{UFM}_{+}$ structure described as follows
\begin{align}
            \label{eq:metric_W}
            &\mathcal{NC}2 - 
            (\mathbf{W} - \overline{\mathbf{H}}^{\top})
            \coloneqq \left\| \frac{\bw}{\| \bw \|_F} 
            - \frac{\bw_{\text{UFM}_{+}} (\bh) }{\| \bw_{\text{UFM}_{+}} (\bh) \|_F} \right\|_F, \nonumber \\
            &\text{where } 
            \bw_{\text{UFM}_{+}} (\bh) = 
            \begin{bmatrix}
            K \sqrt{n_1} \bsh_1^{\top} - \sum_{m=1}^{K} \sqrt{n_m}\bsh_m^{\top}    
            \\
            K \sqrt{n_2} \bsh_2^{\top} - \sum_{m=1}^{K} \sqrt{n_m}\bsh_m^{\top} 
            \\
            \ldots
            \\
            K \sqrt{n_K} \bsh_K^{\top} - \sum_{m=1}^{K} \sqrt{n_m}\bsh_m^{\top} \nonumber
            \end{bmatrix}.
\end{align}

\textbf{Classifier and Class-means Gram matrix.} We verify the geometry of the classifier and class-means matrix as follows,
\begin{align}
 &\mathcal{NC}2 - (\bw \bwt) \coloneqq
    \left\| 
    \frac{\bw \bwt}{\| \bw \bwt \|_F} - \frac{\bw \bwt_{\text{UFM}}}{\| \bw \bwt_{\text{UFM}} \|_F}
    \right\|,
    \nonumber \\
&\mathcal{NC}2 - (\overline{\bh}^{\top} \overline{\bh}) \coloneqq 
\left\| 
\frac{\overline{\bh}^{\top} \overline{\bh}}{\| \overline{\bh}^{\top} \overline{\bh} \|_F} 
- \frac{\overline{\bh}^{\top} \overline{\bh}_{\text{UFM}}}{\| \overline{\bh}^{\top} \overline{\bh}_{\text{UFM}} \|_F}
\right\|, \nonumber
\end{align}
where $\bw \bwt_{\text{UFM}}$ and $\overline{\bh}^{\top} \overline{\bh}_{\text{UFM}}$ are derived from Theorem \ref{thm:CE_main} (see details in Appendix \ref{sec:additional_exp}).

\noindent \textbf{Prediction matrix $\bW \overline{\bh}$.} To measure the similarity of the learned $\bz = \bW \overline{\bh}$ to the $\text{UFM}_{+}$ structure described in Theorem~\ref{thm:CE_main}, we define $\mathcal{NC}3$ metric as follows
\begin{align}
            % \label{eq:metric_WH}
            &\mathcal{NC}3 - (\bw \overline{\bh}) \coloneqq \left\| \frac{\bw \overline{\bh}}{\| \bw \overline{\bh} \|_{F}} -  \frac{\bw \overline{\bh}_{\text{UFM}_{+}}}{\| \bw \overline{\bh}_{\text{UFM}_{+}} \|_{F}} \right\|_{F}, \nonumber 
            % \\
            % &\text{where } \bW \overline{\bh}_{\text{UFM}_{+} = }
            % \begin{bmatrix}
            % \frac{K-1}{K} M_1 & \frac{-1}{K} M_2  & ...  & \frac{-1}{K} M_K \\ 
            % \frac{-1}{K} M_1 & \frac{K-1}{K} M_2 &  ...& \frac{-1}{K} M_K \\ 
            % ... & ... & ... & \\ 
            % \frac{-1}{K} M_1 & \frac{-1}{K} M_2  & ... & \frac{K-1}{K} M_K \nonumber
            % \end{bmatrix}.
\end{align}
where $\bw \overline{\bh}_{\text{UFM}_{+}}$ are described in Appendix \ref{sec:additional_exp}.

\subsection{Experiment details}
\label{subsec:experiment_details}
To verify our theoretical results, we train networks that mimic the UFM setting with ReLU features as in Eqn.\eqref{eq:UFM_plus}. In particular, we use a 6-layer multilayer perceptron (MLP) model with ReLU activation, VGG11~\cite{Simonyan14}, ResNet18~\cite{DBLP:conf/cvpr/HeZRS16} as our three main backbone feature extractors. We train these models on the imbalanced subsets of 4 datasets: MNIST, FashionMNIST, CIFAR10, and CIFAR100. Then we measure the evolution of five $\mathcal{NC}$ metrics in Section \ref{sec:metric} to study the geometry of the last-layer features and the classifier. Due to space consideration, we show the results for CIFAR10 and CIFAR100 below. The remaining experiments and the training details can be found in Appendix~\ref{sec:additional_exp}.

% \textbf{Image classification experiment on MNIST dataset:} In this experiment, we randomly sample a subset of MNIST dataset with the number of samples per class from the list $\{100, 100, 200, 200, 300, 300,$ $400, 400, 500, 500\}$. Each backbone model is trained with Adam optimizer with batch size $16$. Feature decay rate is $\lambda_{H} = \num{1e-5}$ and weight decay rate is $\lambda_{W} = \num{1e-4}$. Figure~\ref{fig:mnist_3_models} demonstrates that as training progresses, $\mathcal{NC}$ metrics for all models converge to $0$, which corroborates Theorem~\ref{thm:CE_main}.

% cifar10_3_models
\begin{figure}
\centering
\begin{subfigure}{0.42\textwidth}
    \includegraphics[width=\textwidth]{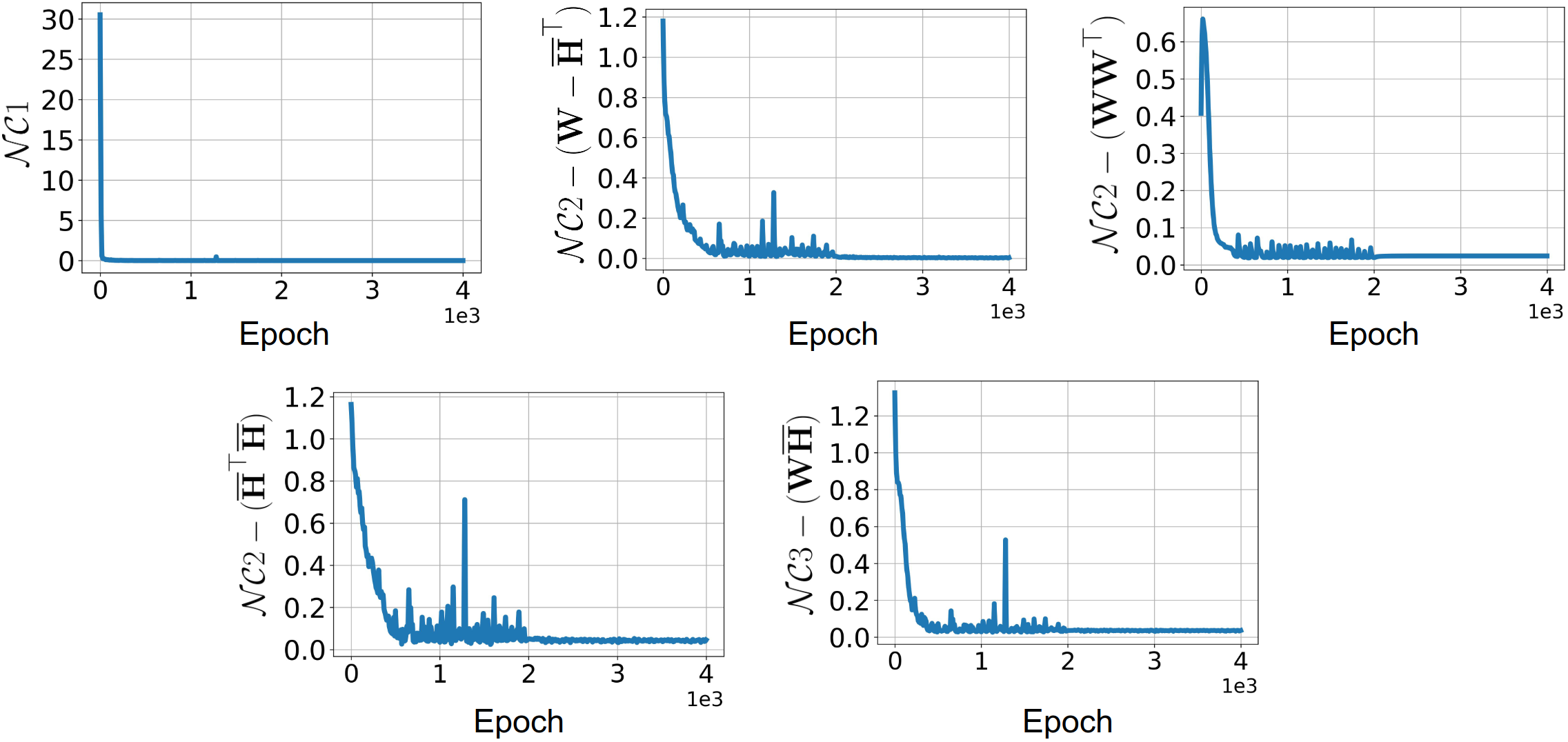}
    % \vspace{-0.2in}
    \caption{\small MLP}
    \label{fig:mlp_cifar10}
\end{subfigure}
\begin{subfigure}{0.42\textwidth}
    \includegraphics[width=\textwidth]{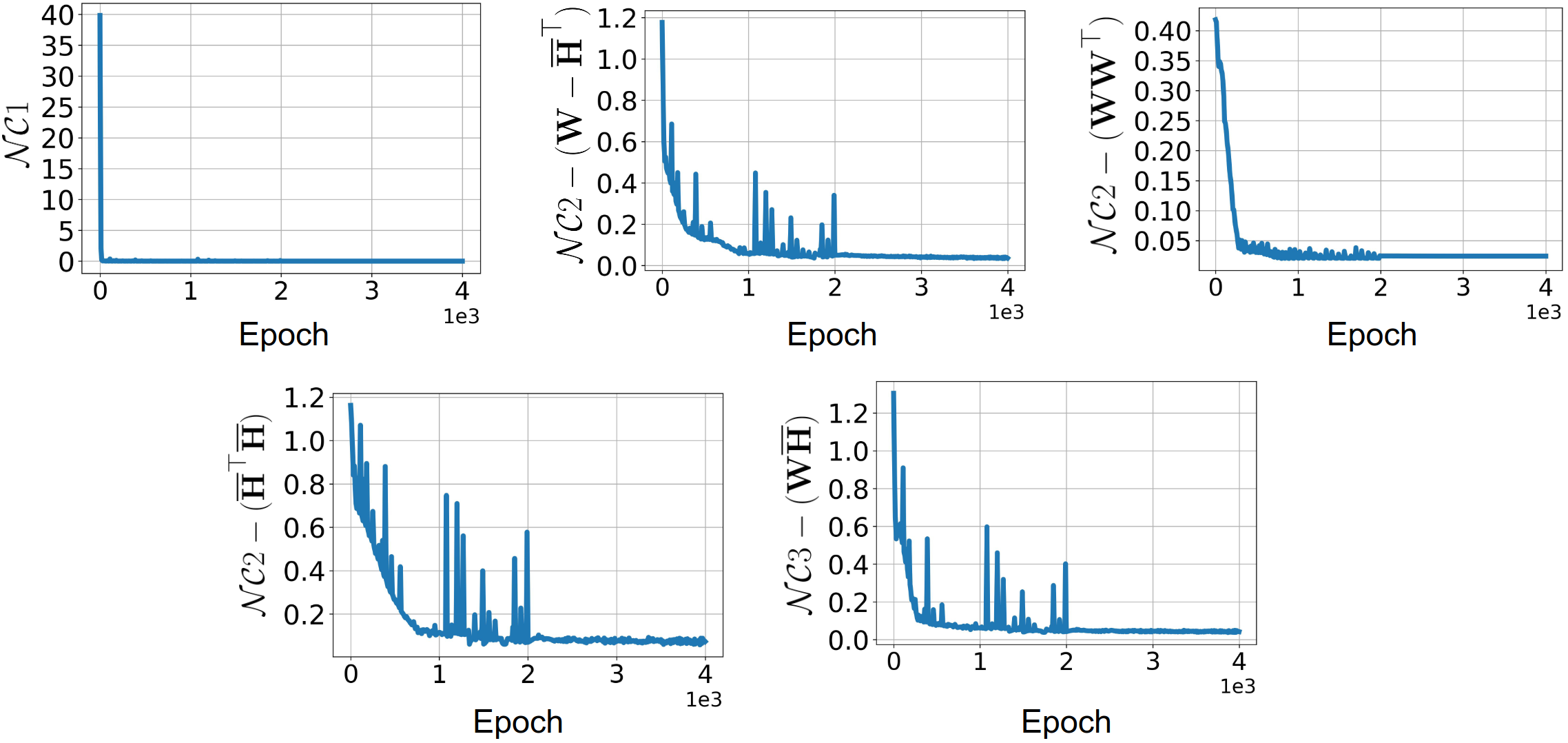}
    % \vspace{-0.2in}
    \caption{\small VGG11}
    \label{fig:vgg11_cifar10}
\end{subfigure}
\begin{subfigure}{0.42\textwidth}
    \includegraphics[width=\textwidth]{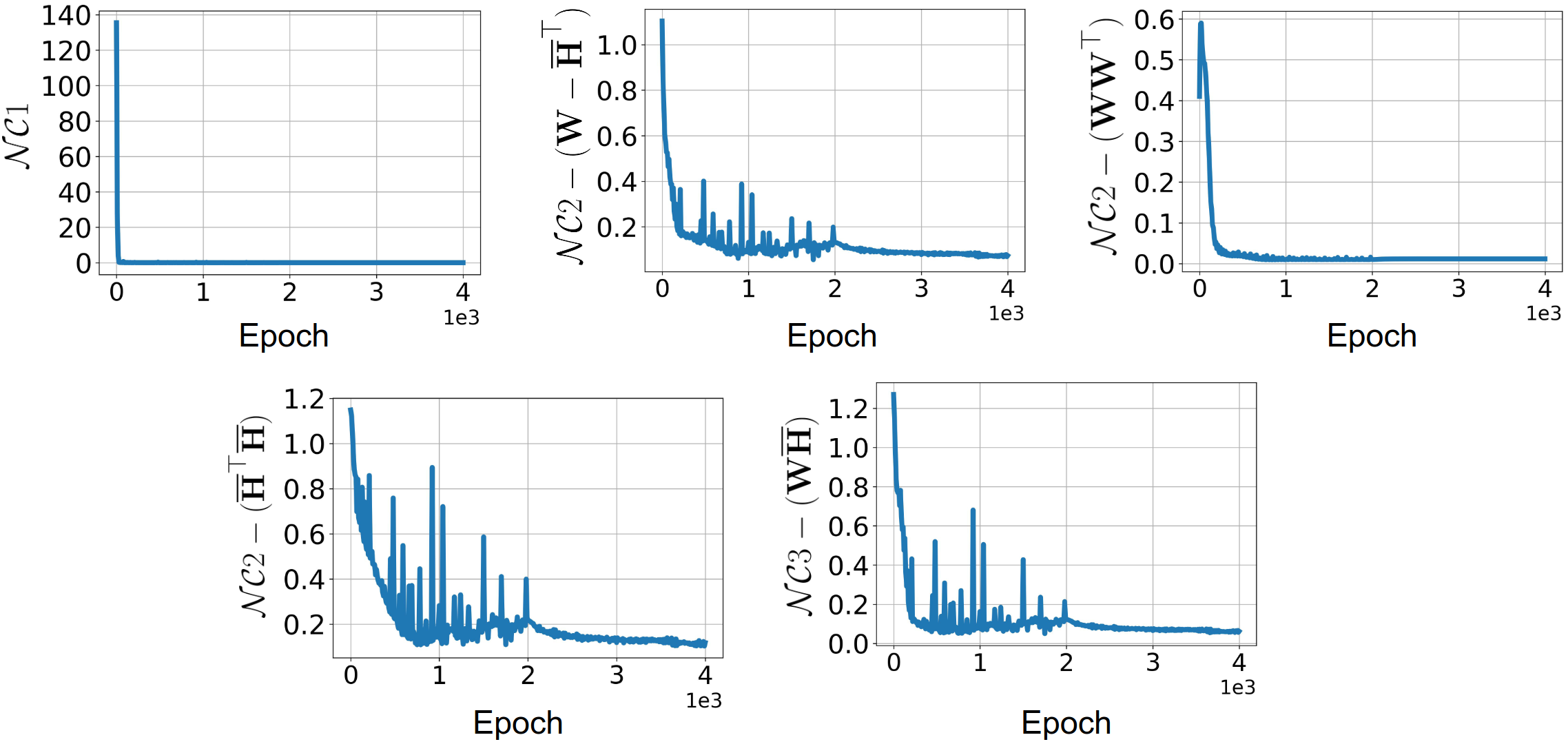}
    % \vspace{-0.2in}
    \caption{\small ResNet18}
    \label{fig:resnet18_cifar10}
\end{subfigure}
\vspace{-0.1in}
\caption{\small 
$\mathcal{NC}$ metrics evolution for three models trained on imbalanced subset of CIFAR10 dataset with cross entropy loss.}
\label{fig:cifar10_3_models}
\vspace{-0.2in}
\end{figure}

% cifar100_3_models
\begin{figure}[t]
\centering
\begin{subfigure}{0.42\textwidth}
    \includegraphics[width=\textwidth]{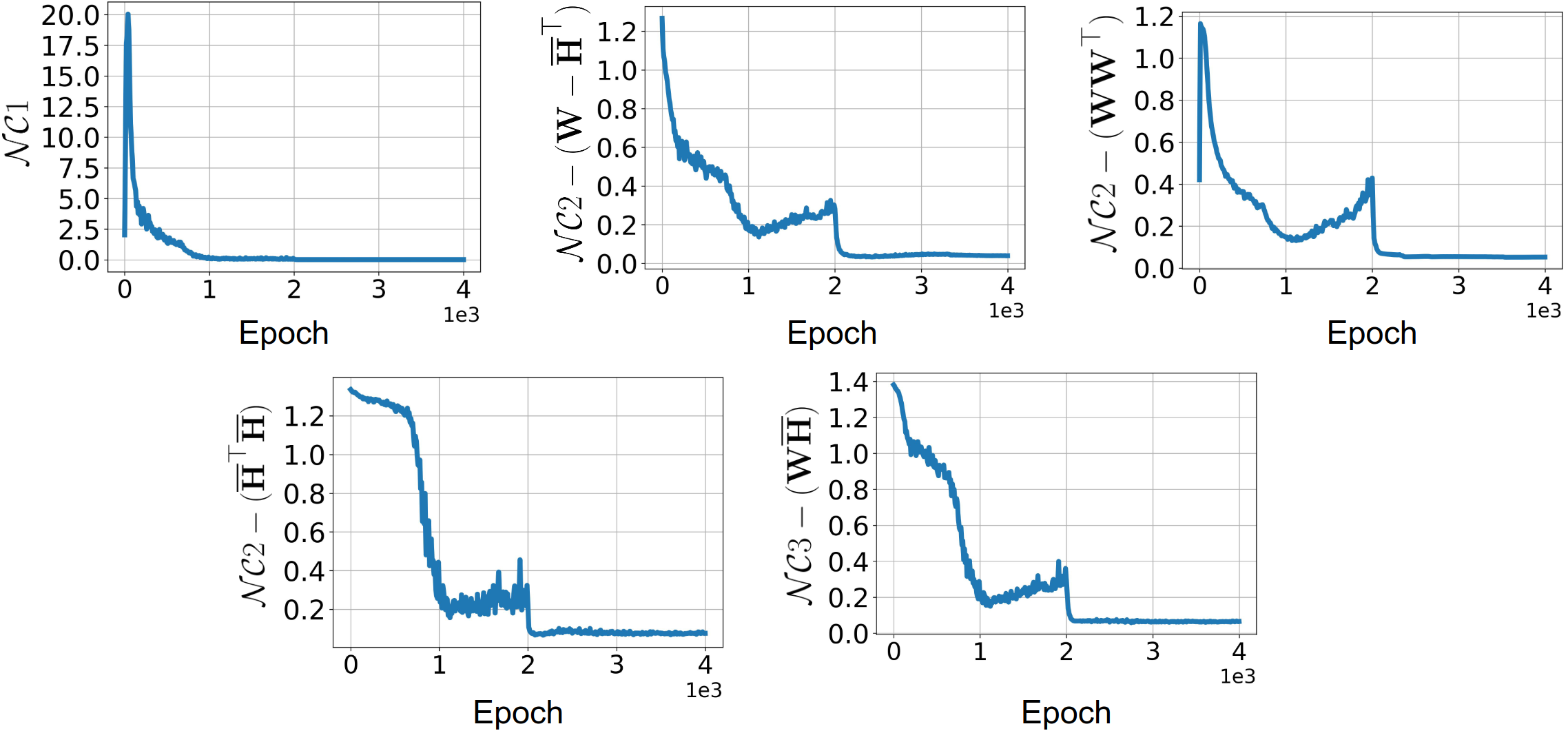}
    \caption{\small MLP}
    \label{fig:mlp_cifar100}
\end{subfigure}
\hspace{0.1in}
\begin{subfigure}{0.42\textwidth}
    \includegraphics[width=\textwidth]{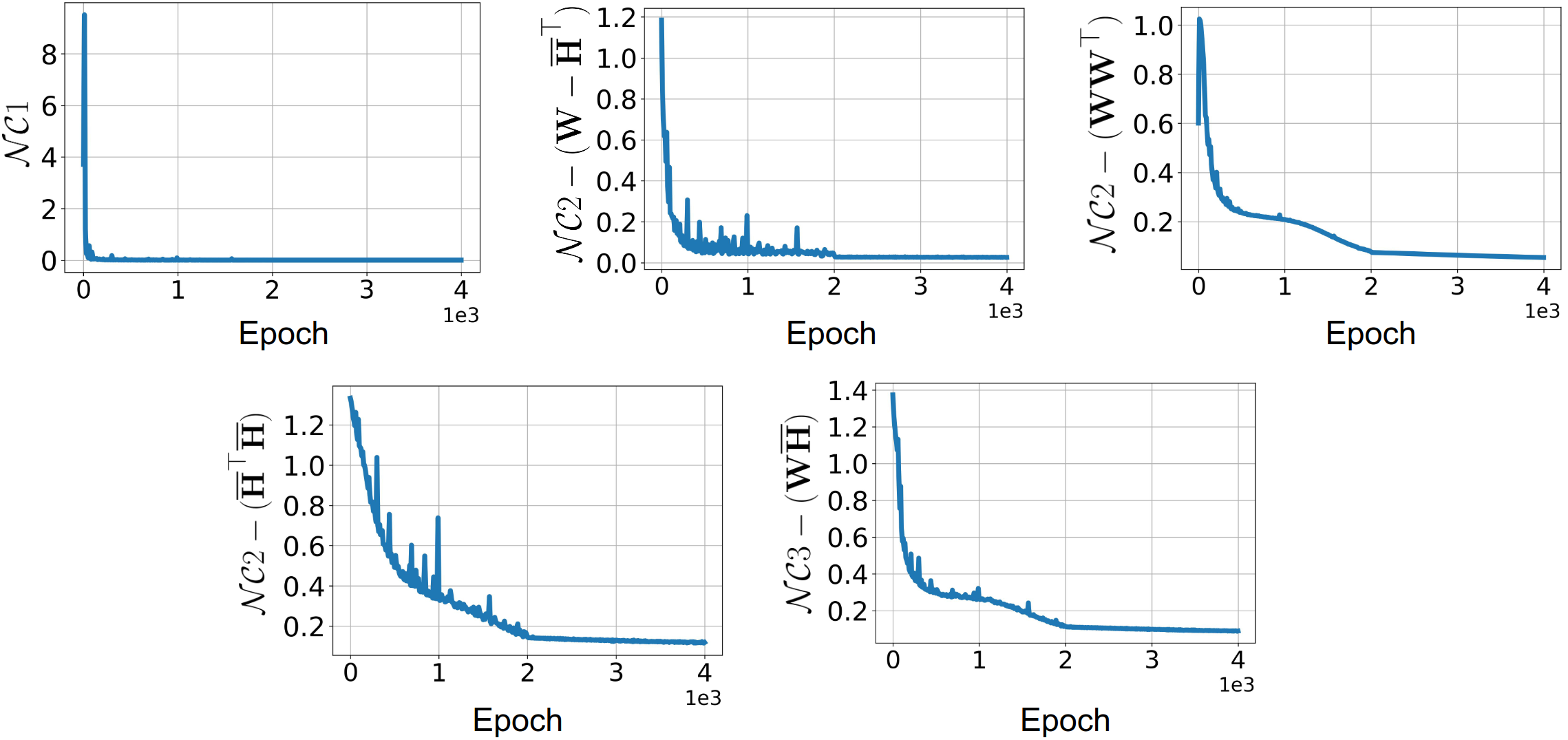}
    \caption{\small VGG11}
    \label{fig:vgg11_cifar100}
\end{subfigure}
\begin{subfigure}{0.42\textwidth}
    \includegraphics[width=\textwidth]{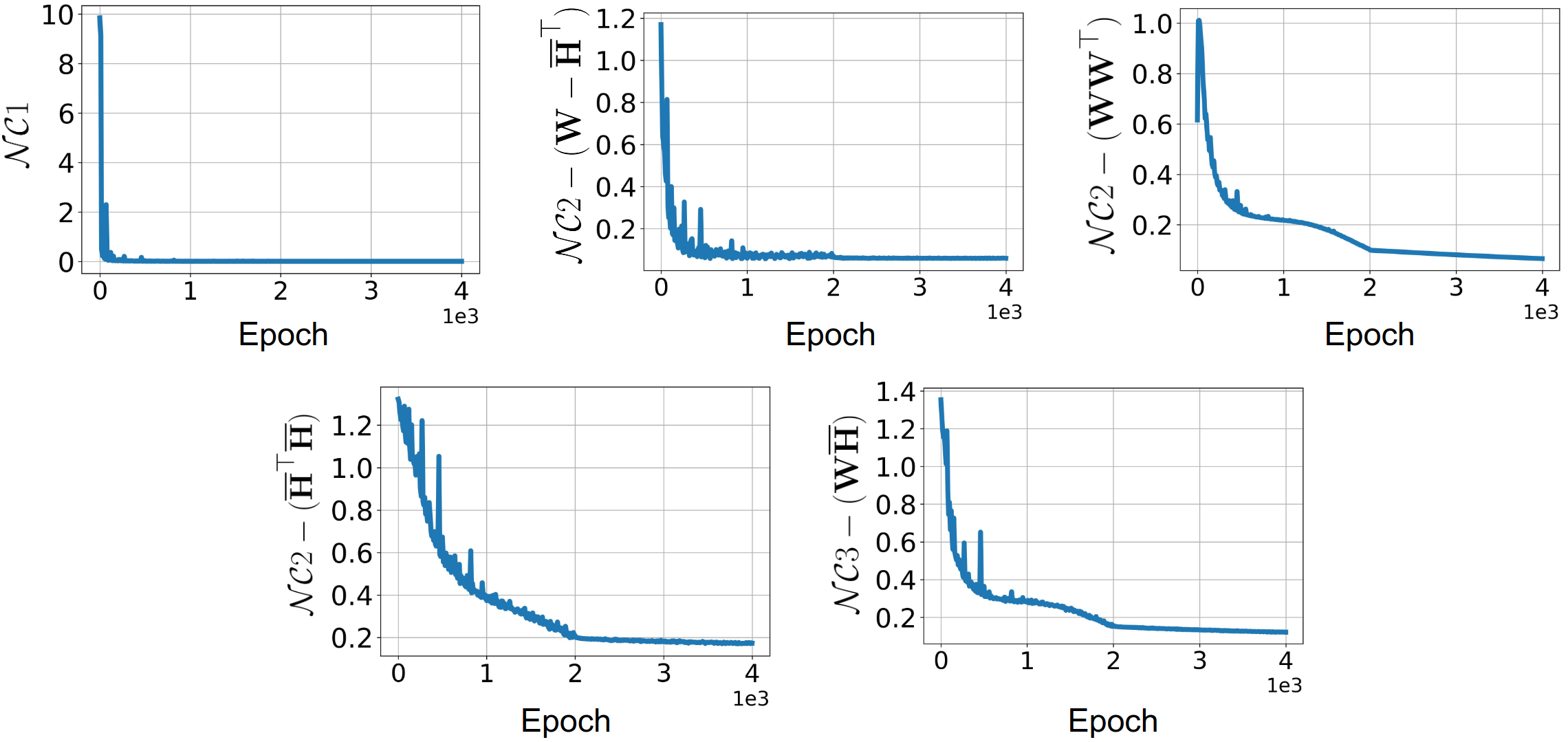}
    \caption{\small ResNet18}
    \label{fig:resnet18_cifar100}
\end{subfigure}
\vspace{-0.1in}
\caption{\small $\mathcal{NC}$ metrics evolution for three models trained on imbalanced subset of CIFAR100 dataset with cross entropy loss.}
\label{fig:cifar100_3_models}
\vspace{-0.2in}
\end{figure}

% \textbf{Image classification experiment on FashionMNIST dataset:} Similar to the MNIST experiment, we randomly sample a subset of FashionMNIST dataset with $\{100, 100, 200, 200, 300, 300,$ $400, 400, 500, 500\}$ samples per class. All other training settings are identical to the MNIST experiment. Figure~\ref{fig:fashion_3_models} further illustrates that the convergence of all models after the training also well-aligns with our Theorem~\ref{thm:CE_main}.

\textbf{Image classification experiment on CIFAR10:} For this experiment, a subset of the CIFAR10 dataset with $\{1000, 1000, 2000, 2000, 3000, 3000,$ $4000, 4000, 5000, 5000\}$ random samples per class is utilized as training data. We train each backbone model with Adam optimizer with batch size $256$, the weight decay is $\lambda_{W} = \num{1e-4}$. Feature decay $\lambda_{H}$ is set to $\num{1e-5}$ for MLP and VGG11, and to $\num{1e-4}$ for ResNet18. 
In Figure~\ref{fig:cifar10_3_models}, we observe the convergence of $\mathcal{NC}$ metrics to small values as training progresses, which corroborates our theoretical prediction.

\textbf{Image classification experiment on CIFAR100:} We create a random subset of the CIFAR100 dataset with 100 samples per class for the first 20 classes, 200 samples per class for the next 20 classes,..., 500 samples per class for the remaining 20 classes.
Each backbone model is then trained with Adam optimizer with batch size $256$, the learning rate is  $\num{2e-4}$ for VGG11, ResNet18 and $\num{1e-4}$ for MLP. Weight decay $\lambda_{W}$ and feature decay $\lambda_{H}$ is set to $\num{1e-4}$ and $\num{1e-5}$, respectively.
Figure~\ref{fig:cifar100_3_models} empirically verifies Theorem~\ref{thm:CE_main} in this setting with a large number of classes ($K = 100$).

\textbf{Varying imbalance ratio:} In this experiment, we validate our theoretical predictions for multiple levels of data imbalance. We train MLP and VGG models on random subsets of the CIFAR10 and MNIST datasets with varying imbalance ratios ($R=5, 10, 20, 50$). Figures~\ref{fig:r_mnist} and \ref{fig:r_cifar10} demonstrate the convergence of $\mathcal{NC}$ metrics for MNIST and CIFAR10 datasets, respectively.

\begin{figure}
\centering
\begin{subfigure}{0.42\textwidth}
    \includegraphics[width=\textwidth]{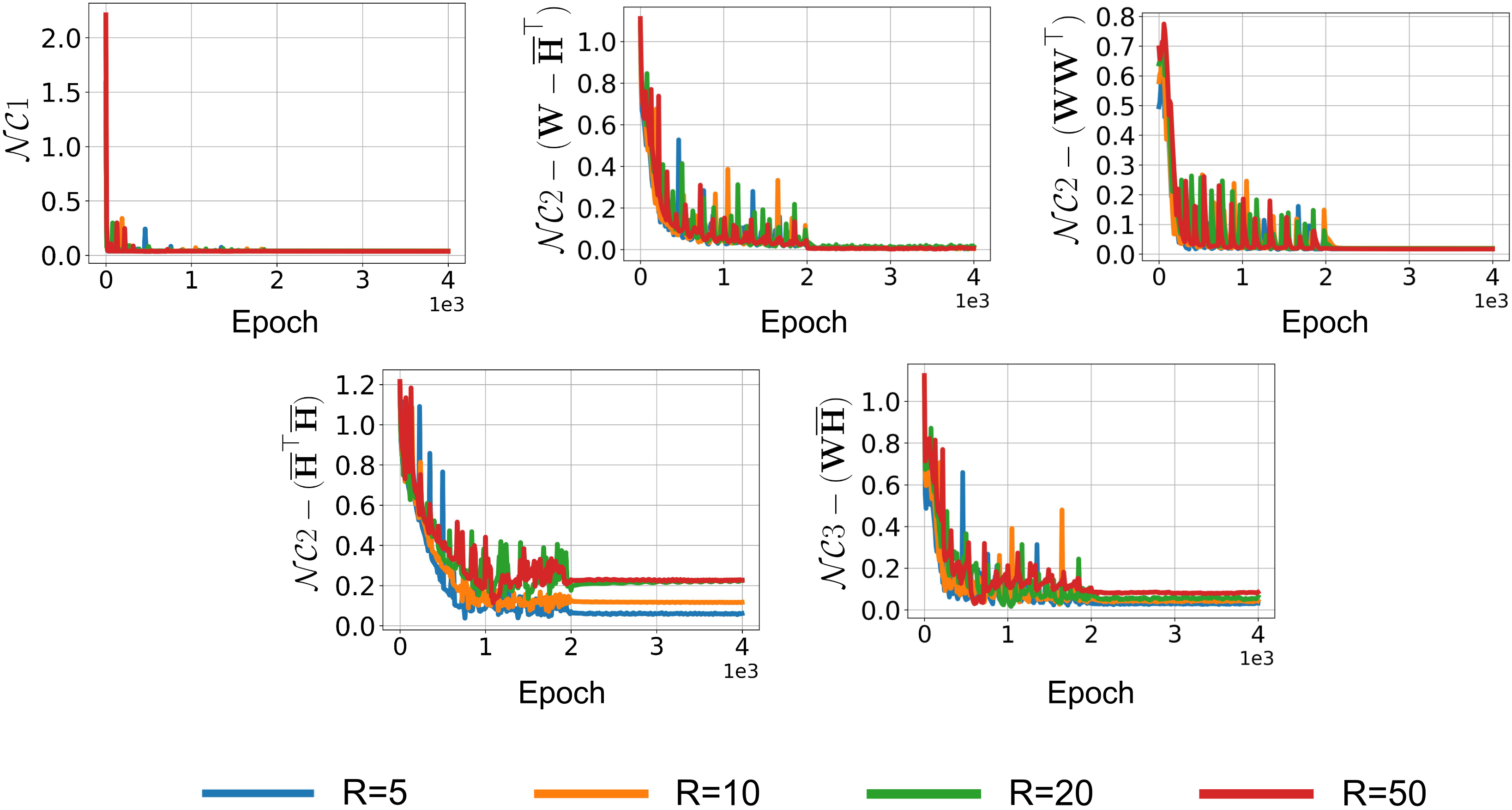}
    \vspace{-0.2in}
    \caption{\small MLP}
    \label{fig:r_mlp_mnist}
\end{subfigure}
\begin{subfigure}{0.42\textwidth}
    \includegraphics[width=\textwidth]{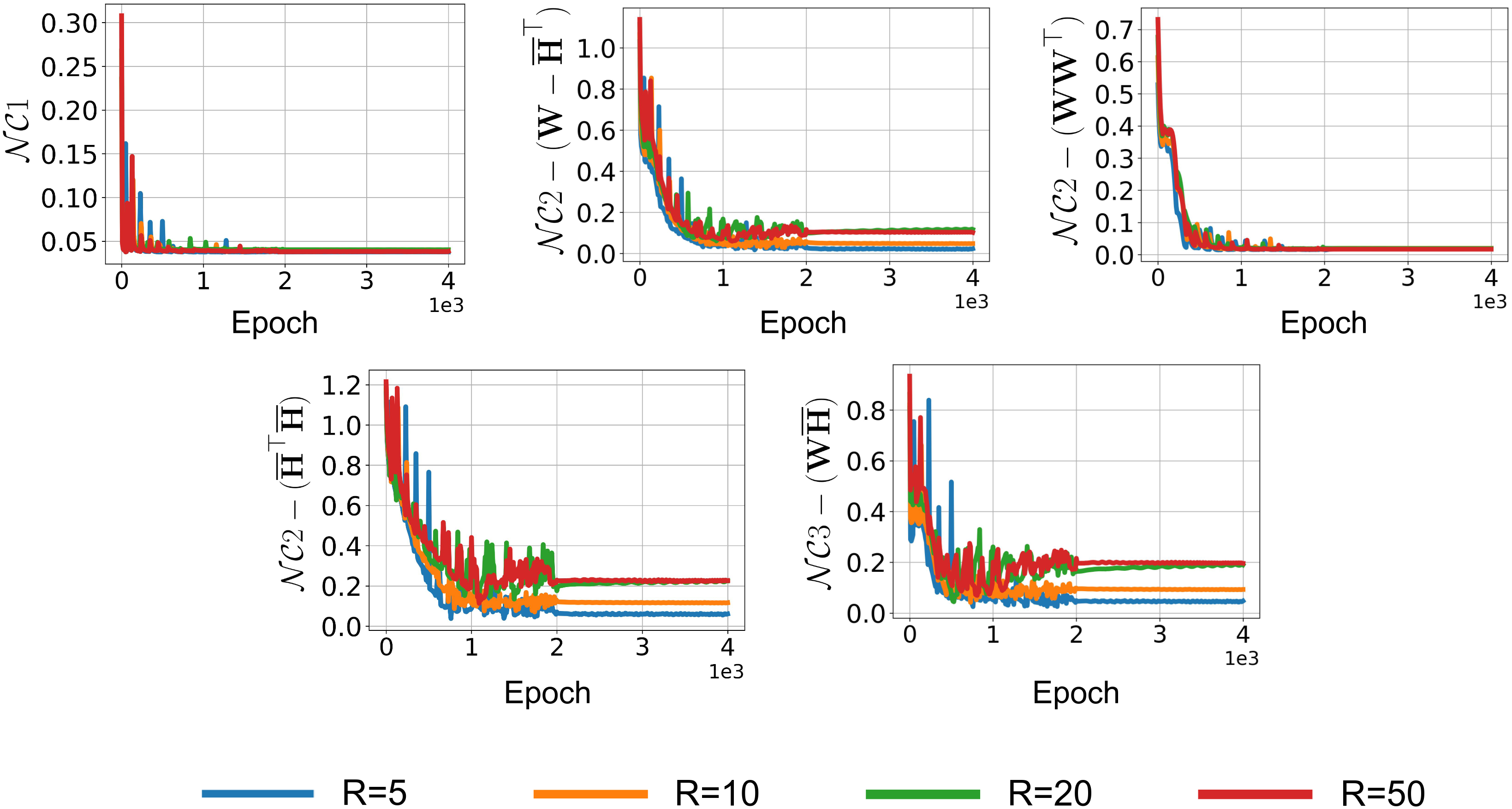}
    \vspace{-0.2in}
    \caption{\small VGG11}
    \label{fig:r_vgg11_mnist}
\end{subfigure}
\vspace{-0.1in}
\caption{\small $\mathcal{NC}$ metrics evolution of MLP and VGG11 backbone trained on MNIST imbalanced subsets with cross entropy loss}
\label{fig:r_mnist}
\vspace{-0.2in}
\end{figure}

\noindent \textbf{Illustration of $\overline{\bh}^{\top} \overline{\bh}$:} We normalize the $\overline{\bh}^{\top} \overline{\bh}$ matrix obtained from the last epoch of the MLP model trained on the CIFAR10 dataset. The orthogonal structure of the learned features along with the theoretical prediction derived from Theorem~\ref{thm:CE_main} are demonstrated in Figure~\ref{fig:MLP_CE_nobias_HTH}. 

% It is clear that our theoretical prediction closely matches the structure of the learned feature at the end of training.

\begin{figure}[h]
\centering
\includegraphics[width=0.45\textwidth]{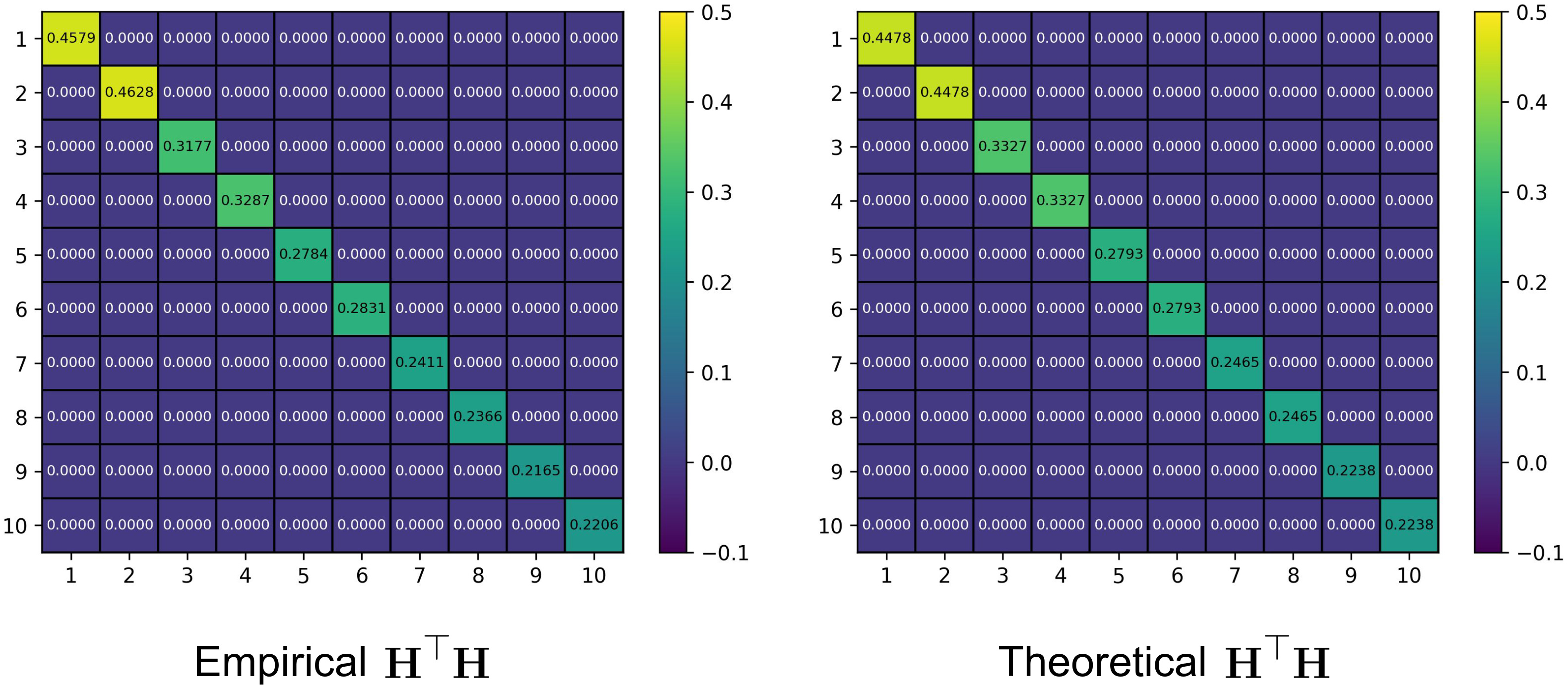}
\caption{$\overline{\bh}^{\top} \overline{\bh}$ matrix extracted from the last epoch of the trained MLP model.}
\label{fig:MLP_CE_nobias_HTH}
\vspace{-0.2in}
\end{figure}

\section{Concluding Remarks}
\label{sec:conclusion}

In this work, we present a rigorous and explicit study of Neural Collapse phenomenon in the setting of imbalanced dataset using unconstrained non-negative features model and cross-entropy loss. In particular, we provide a closed-form characterization of the last-layer features and classifier weights learned by the network training. We find that while the variability collapse property still holds, the geometry of the learned features and learned classifier weights are different from the original definition of Neural Collapse, due to the class-imbalance of the training data. Specifically, we prove that at optimality, the features form an orthogonal structure while the classifier weights are aligned to the scaled and centered class-means, which generalizes the original definition of Neural Collapse in class-balanced settings. Furthermore, with closed-form derivations of the solution, we are able to quantify the norms and the angles between the learned features and classifier weights across class distribution. As a limitation, we only study the convergence geometries under the condition that the feature dimension $d$ is at least the number of classes $K$. The geometric structure of the features and classifier in the bottleneck situation $d < K$ is still unaddressed and we leave it for future work.

% Acknowledgements should only appear in the accepted version.
\section*{Acknowledgements}
This research/project is supported by the National Research Foundation Singapore under the AI
Singapore Programme (AISG Award No: AISG2-TC-2023-012-SGIL). NH acknowledges support from the NSF IFML 2019844 and the NSF AI Institute for Foundations of Machine Learning.

\section*{Impact Statement}
This paper presents work whose goal is to advance the field of Machine Learning. There are many potential societal consequences of our work, none which we feel must be specifically highlighted here.

% \textbf{Do not} include acknowledgements in the initial version of
% the paper submitted for blind review.

% If a paper is accepted, the final camera-ready version can (and
% probably should) include acknowledgements. In this case, please
% place such acknowledgements in an unnumbered section at the
% end of the paper. Typically, this will include thanks to reviewers
% who gave useful comments, to colleagues who contributed to the ideas,
% and to funding agencies and corporate sponsors that provided financial
% support.

% In the unusual situation where you want a paper to appear in the
% references without citing it in the main text, use \nocite
% \nocite{langley00}

\bibliography{icml2024}
\bibliographystyle{icml2024}

%%%%%%%%%%%%%%%%%%%%%%%%%%%%%%%%%%%%%%%%%%%%%%%%%%%%%%%%%%%%%%%%%%%%%%%%%%%%%%%
%%%%%%%%%%%%%%%%%%%%%%%%%%%%%%%%%%%%%%%%%%%%%%%%%%%%%%%%%%%%%%%%%%%%%%%%%%%%%%%
% APPENDIX
%%%%%%%%%%%%%%%%%%%%%%%%%%%%%%%%%%%%%%%%%%%%%%%%%%%%%%%%%%%%%%%%%%%%%%%%%%%%%%%
%%%%%%%%%%%%%%%%%%%%%%%%%%%%%%%%%%%%%%%%%%%%%%%%%%%%%%%%%%%%%%%%%%%%%%%%%%%%%%%
\newpage
\appendix
\onecolumn

\begin{center}
{\bf \Large Appendix for ``Neural Collapse for Cross-entropy Class-Imbalanced Learning with Unconstrained ReLU Features Model''}
\end{center}

\section{Additional Experiments and Network Training details}
\label{sec:additional_exp}

\subsection{Metric definitions}

We define the $\mathcal{NC}$ metrics used in our experiments to measure the discrepancy between the learned model and our derived geometry for the last-layer features and classifier. We recall the notation $\mathbf{h}_{k} \coloneqq \frac{1}{n} \sum_{i=1}^n \mathbf{h}_{k,i}$, i.e., the class-means of class $k$ and $\mathbf{h}_{G} \coloneqq \frac{1}{K n} \sum_{k=1}^K \sum_{i=1}^n \mathbf{h}_{k,i}$ is the feature global-mean. We calculate the within-class covariance matrix $\mathbf{\Sigma}_W \coloneqq \frac{1}{N} \sum_{k=1}^{K} \sum_{i=1}^{n} (\mathbf{h}_{k,i} - \mathbf{h}_{k}) (\mathbf{h}_{k,i} - \mathbf{h}_{k})^{\top}$ and the between-class covariance matrix $\mathbf{\Sigma}_B \coloneqq \frac{1}{K} \sum_{k=1}^K (\mathbf{h}_{k} - \mathbf{h}_{G}) (\mathbf{h}_{k} - \mathbf{h}_{G})^{\top}$. The class-mean matrix is denoted as $\overline{\bh}$. $\{ M_k \}_{k=1}^{K}$ are the constants defined in Eqn. \eqref{eq:M_k} in our main paper.

\noindent \textbf{Feature collapse:}
\begin{align}
\mathcal{NC}1 \coloneqq \frac{1}{K} \text{trace} (\mathbf{\Sigma}_W \mathbf{\Sigma}_B^\dagger), \nonumber
\end{align}
where $\mathbf{\Sigma}_B^\dagger$ is the Moore-Penrose inverse of $\mathbf{\Sigma}_B$.

\noindent  \textbf{Relation between the classifier $\bw$ and features $\bh$}: 
\begin{align}
            &\mathcal{NC}2 - (\bw - \overline{\bh}^{\top}) \coloneqq \left\| \frac{\bw}{\| \bw \|_F} 
            - \frac{\bw_{\text{UFM}_{+}} (\bh) }{\| \bw_{\text{UFM}_{+}} (\bh) \|_F} \right\|_F,
            \text{where } 
            \bw_{\text{UFM}_{+}} (\bh) = 
            \begin{bmatrix}
            K \sqrt{n_1} \bsh_1^{\top} - \sum_{m=1}^{K} \sqrt{n_m}\bsh_m^{\top}    
            \\
            K \sqrt{n_2} \bsh_2^{\top} - \sum_{m=1}^{K} \sqrt{n_m}\bsh_m^{\top} 
            \\
            \ldots
            \\
            K \sqrt{n_K} \bsh_K^{\top} - \sum_{m=1}^{K} \sqrt{n_m}\bsh_m^{\top} \nonumber
            \end{bmatrix}.
\end{align}

\noindent  \textbf{Classifier Gram matrix $\bw \bwt$:}
\begin{align}
    &\mathcal{NC}2 - (\bw \bwt) \coloneqq
    \left\| 
    \frac{\bw \bwt}{\| \bw \bwt \|_F} - \frac{\bw \bwt_{\text{UFM}}}{\| \bw \bwt_{\text{UFM}} \|_F}
    \right\|,
    \nonumber \\
    &\text{where } (\bw \bwt_{\text{UFM}})_{k k}
    = \| \bsw_{k} \|^2 =  \alpha \left(
    (K - 1)^2 \sqrt{n_k} M_k + \sum_{m \neq k}
    \sqrt{n_m} M_m
    \right), 
    \nonumber \\
    &\text{and } (\bw \bwt_{\text{UFM}})_{k j}
    = \bsw_k^{\top} \bsw_j =
        \alpha
        \Bigg[
        - (K - 1)
         \sqrt{n_k} M_k
        - (K - 1)
         \sqrt{n_j} M_j  
         + \sum_{m \neq k, j}
         \sqrt{n_m} M_m \Bigg], \forall \: k \neq j. \nonumber 
\end{align}
The calculation of $\bw \bwt_{\text{UFM}}$ is from the Proposition \ref{prop:classifier_norm}.

\noindent  \textbf{Class-mean Gram matrix $\overline{\bh}^{\top} \overline{\bh}$:}

\begin{align}
    &\mathcal{NC}2 - (\overline{\bh}^{\top} \overline{\bh}) \coloneqq 
    \left\| 
    \frac{\overline{\bh}^{\top} \overline{\bh}}{\| \overline{\bh}^{\top} \overline{\bh} \|_F} 
    - \frac{\overline{\bh}^{\top} \overline{\bh}_{\text{UFM}}}{\| \overline{\bh}^{\top} \overline{\bh}_{\text{UFM}} \|_F}
    \right\|,
    \nonumber \\
    &\text{where } (\overline{\bh}^{\top} \overline{\bh}_{\text{UFM}})_{k k}
    = \| \bsh_{k} \|^2 = \sqrt{\frac{K-1}{K} \frac{\lambda_{W}}{ \lambda_{H}} \frac{1}{n_k}}
    M_k
    \text{ and } (\overline{\bh}^{\top} \overline{\bh}_{\text{UFM}})_{k j}
    = 0. \nonumber 
\end{align}
The calculation of $\overline{\bh}^{\top} \overline{\bh}_{\text{UFM}}$ is from Theorem \ref{thm:CE_main}.

\noindent \textbf{Prediction matrix $\bw \overline{\bh}$:} 
\begin{align}
            &\mathcal{NC}3 - \bw \overline{\bh} \coloneqq \left\| \frac{\bw \bh}{\| \bw \bh \|_{F}} -  \frac{\bw \bh_{\text{UFM}_{+}}}{\| \bw \bh_{\text{UFM}_{+}} \|_{F}} \right\|_{F}, 
            \text{where } \bW \bH_{\text{UFM}_{+} = }
            \begin{bmatrix}
            \frac{K-1}{K} M_1 & \frac{-1}{K} M_2  & ...  & \frac{-1}{K} M_K \\ 
            \frac{-1}{K} M_1 & \frac{K-1}{K} M_2 &  ...& \frac{-1}{K} M_K \\ 
            ... & ... & ... & \\ 
            \frac{-1}{K} M_1 & \frac{-1}{K} M_2  & ... & \frac{K-1}{K} M_K \nonumber
            \end{bmatrix}.
\end{align}

%mnist
\begin{figure}[t]
\centering
\begin{subfigure}{0.4\textwidth}
    \includegraphics[width=\textwidth]{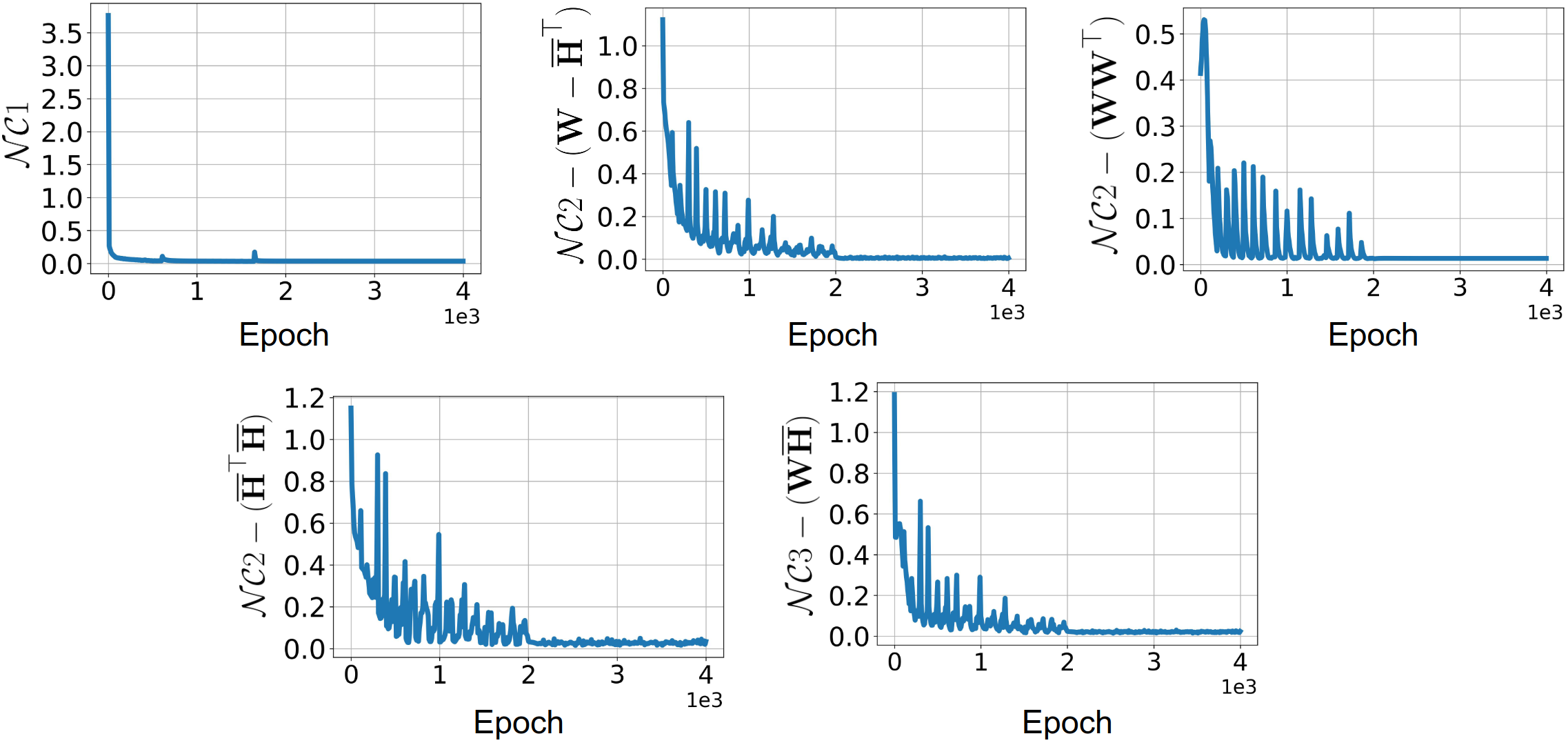}
    \vspace{-0.2in}
    \caption{\small MLP}
    \label{fig:mlp_mnist}
\end{subfigure}
\begin{subfigure}{0.4\textwidth}
    \includegraphics[width=\textwidth]{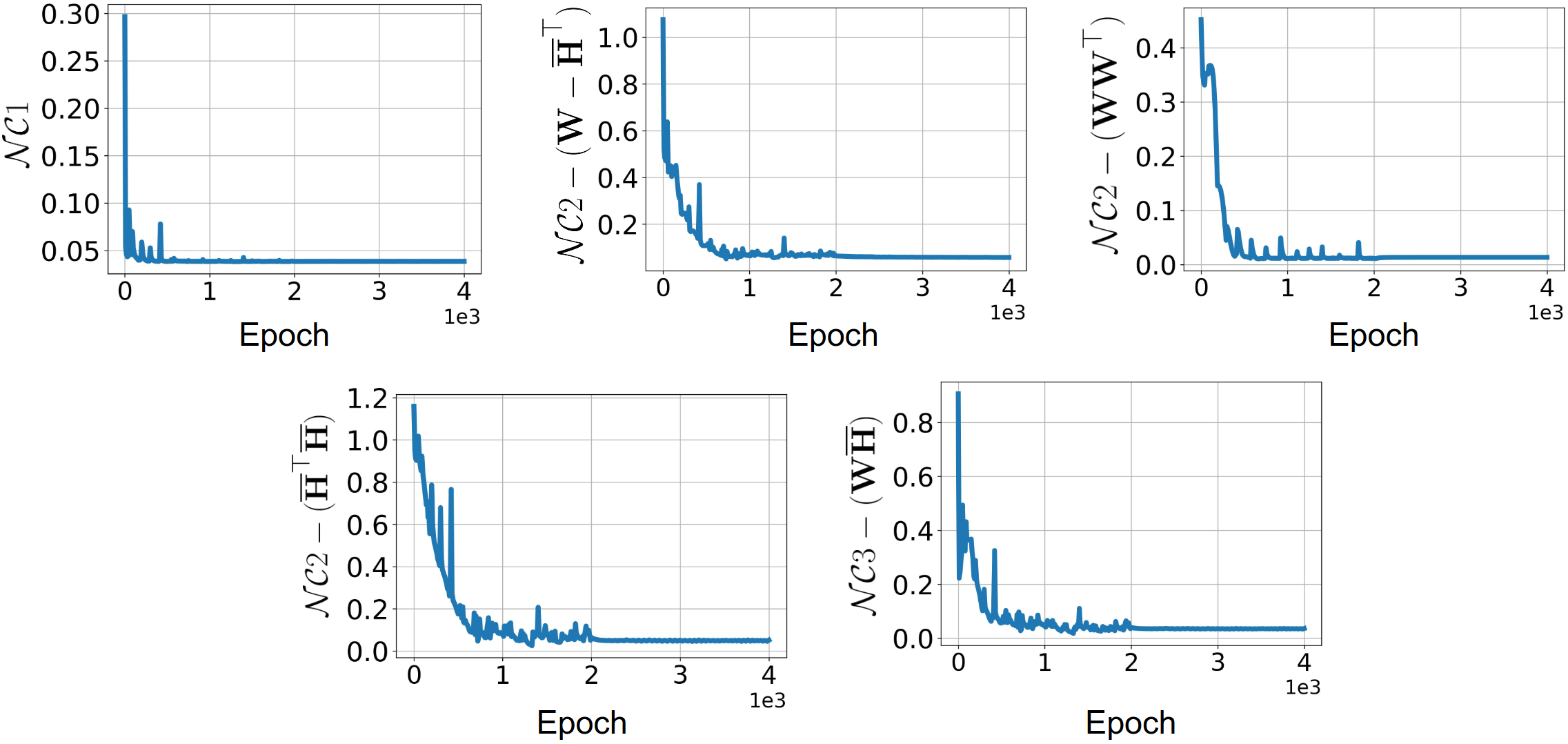}
    \vspace{-0.2in}
    \caption{\small VGG11}
    \label{fig:vgg11_mnist}
\end{subfigure}
\begin{subfigure}{0.4\textwidth}
    \includegraphics[width=\textwidth]{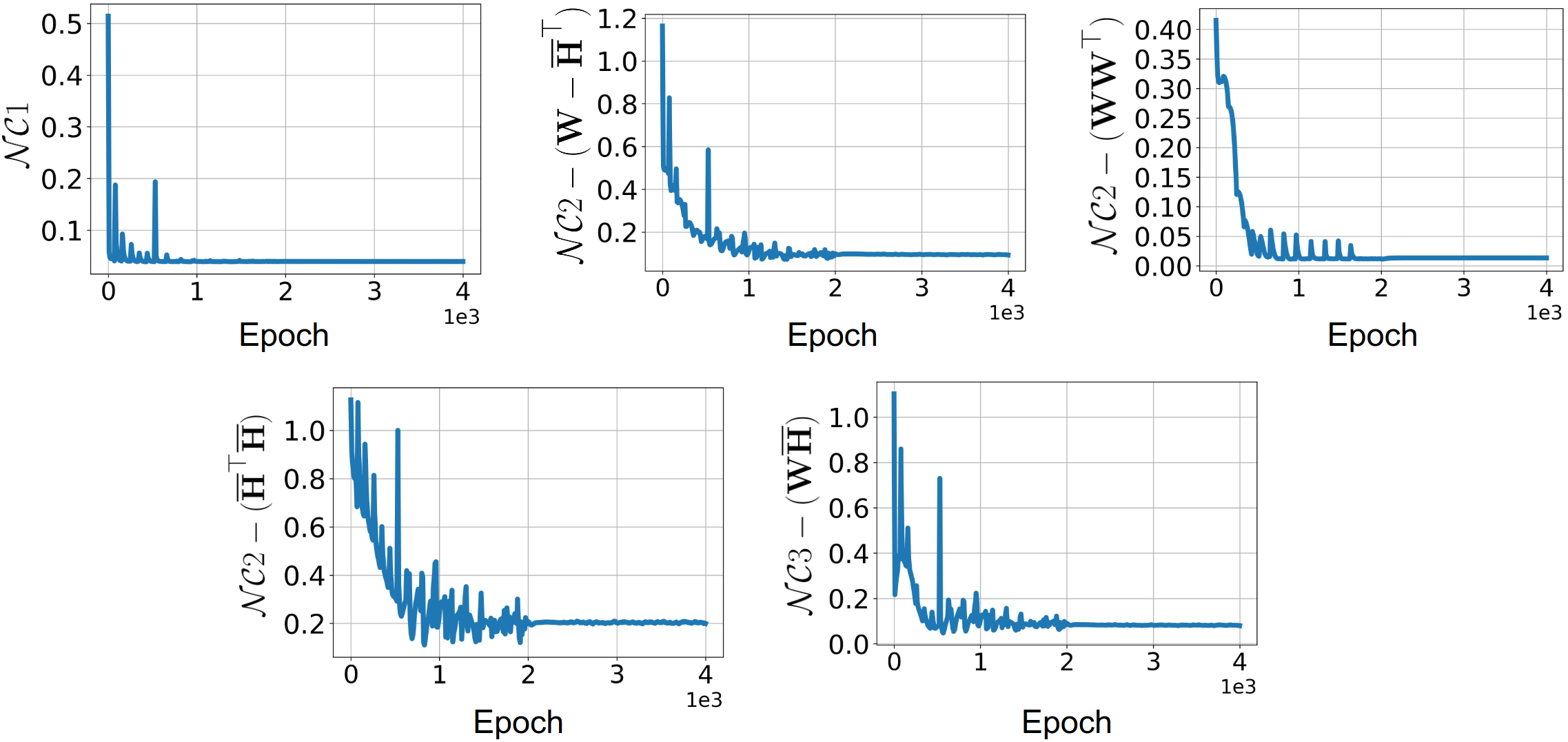}
    \vspace{-0.2in}
    \caption{\small ResNet18}
    \label{fig:resnet18_mnist}
\end{subfigure}
\vspace{-0.1in}
\caption{\small $\mathcal{NC}$ metrics evolution for three models trained on imbalanced subset of MNIST dataset with cross entropy loss.}
\label{fig:mnist_3_models}
\vspace{-0.2in}
\end{figure}

% fashion_3_models
\begin{figure}[h!]
\centering
\begin{subfigure}{0.4\textwidth}
\centering
    \includegraphics[width=\textwidth]{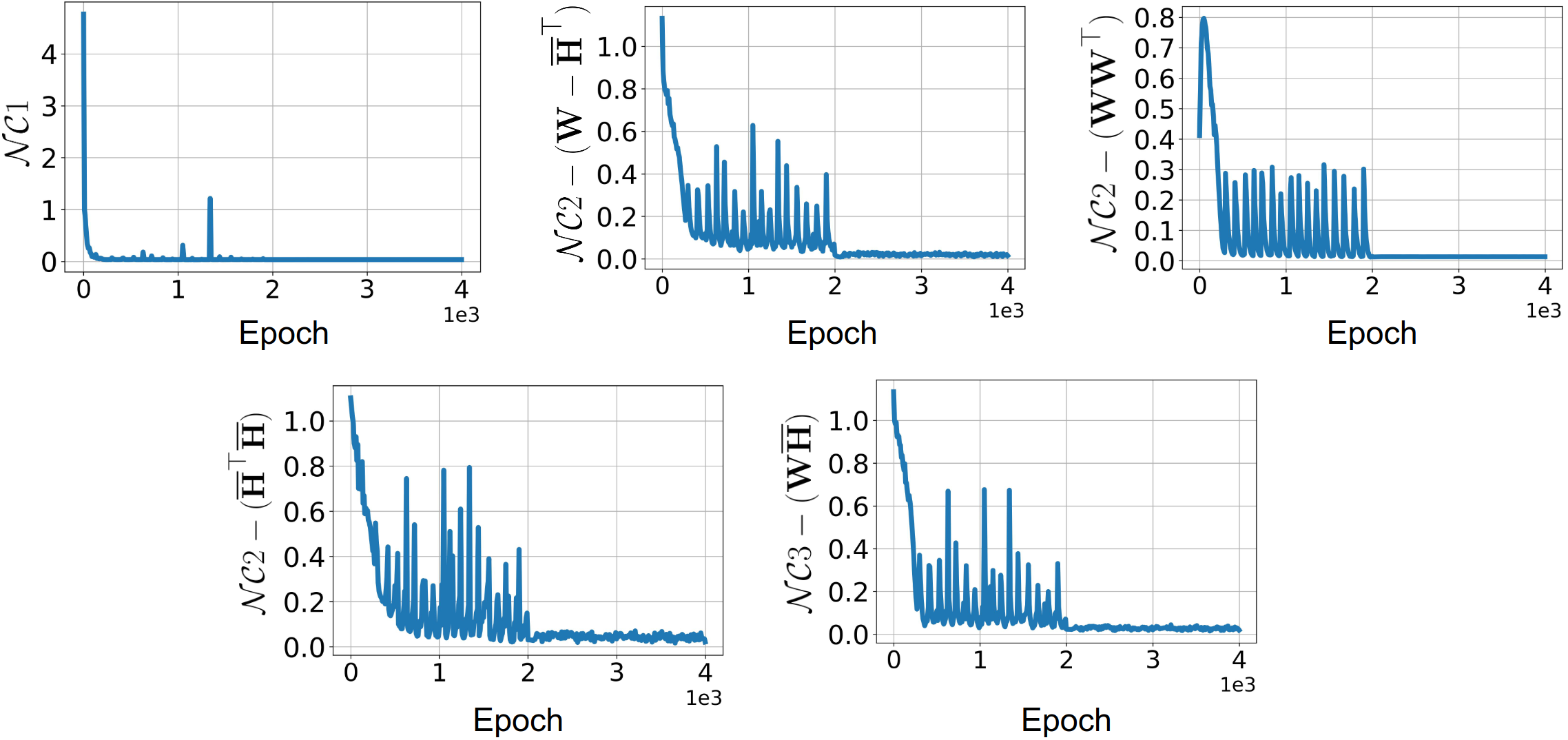}
    \caption{\small MLP}
    \label{fig:mlp_fashion}
\end{subfigure}
\hspace{0.1in}
\begin{subfigure}{0.45\textwidth}
\centering
    \includegraphics[width=\textwidth]{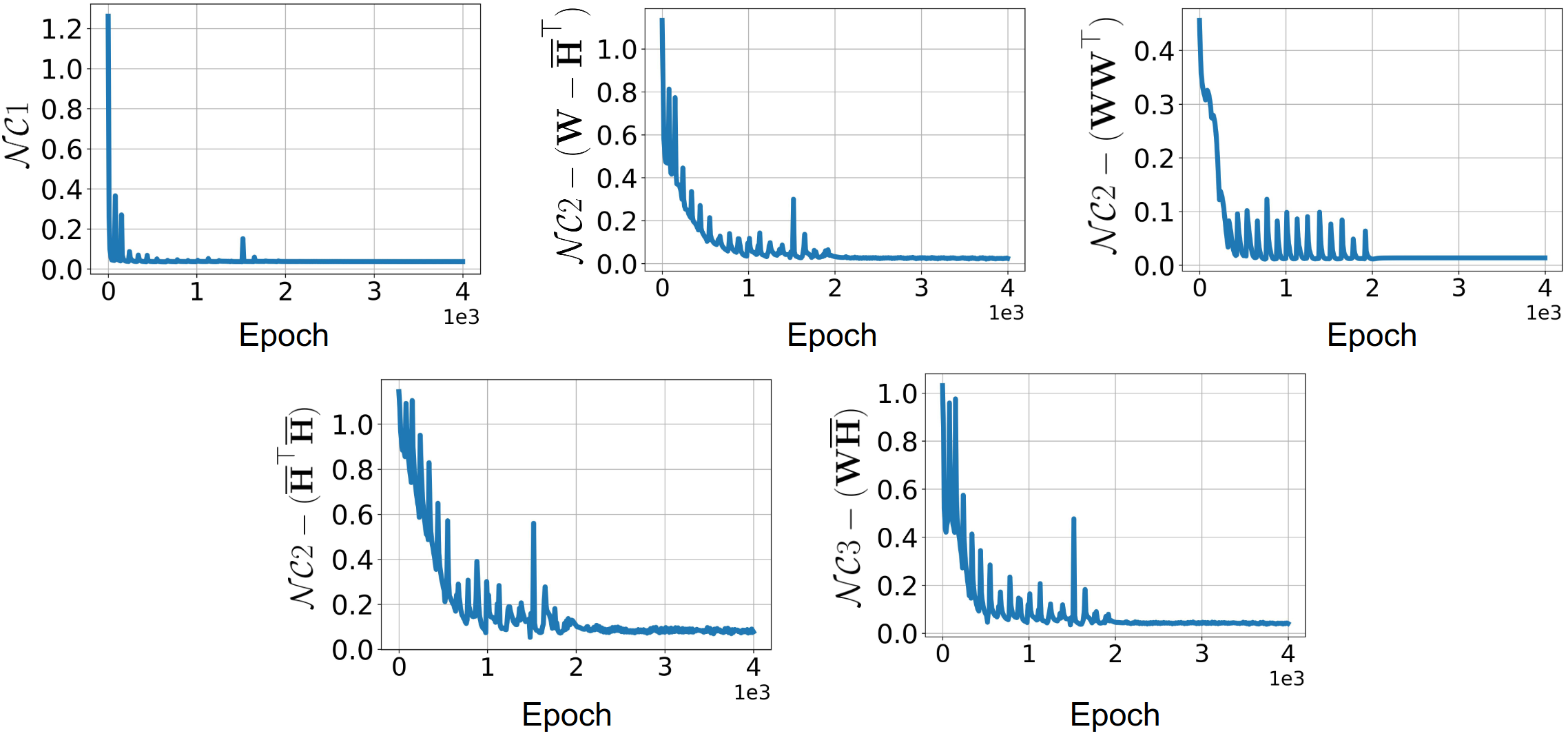}
    \caption{\small VGG11}
    \label{fig:vgg11_fashion}
\end{subfigure}
\begin{subfigure}{0.45\textwidth}
\centering
    \includegraphics[width=\textwidth]{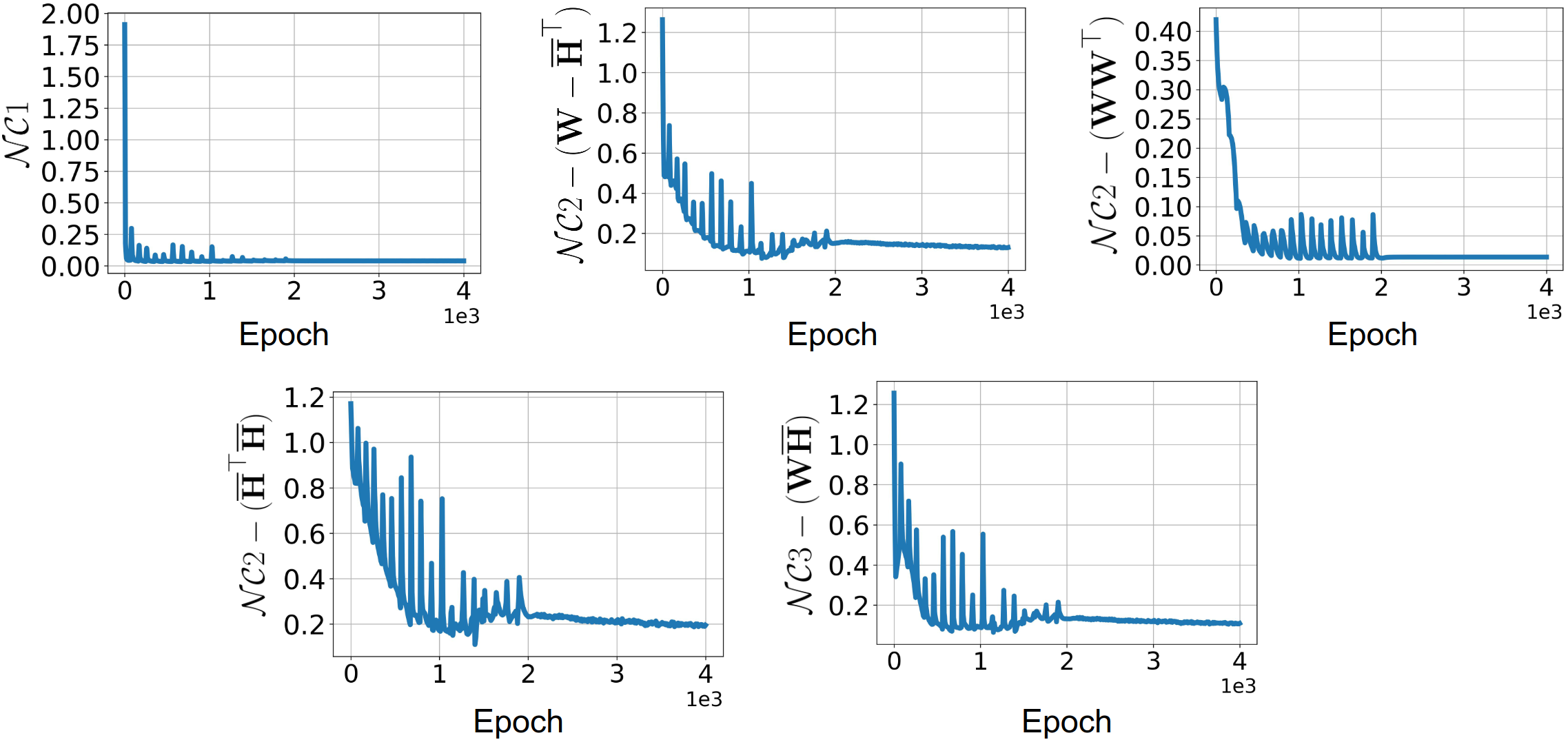}
    \caption{\small ResNet18}
    \label{fig:resnet18_fashion}
\end{subfigure}
\vspace{-0.1in}
\caption{\small $\mathcal{NC}$ metrics evolution for three models trained on imbalanced subset of FashionMNIST dataset with cross entropy loss.}
\label{fig:fashion_3_models}
\vspace{-0.2in}
\end{figure}

% r_cifar10
\begin{figure}[h!]
\centering
\begin{subfigure}{0.4\textwidth}
    \includegraphics[width=\textwidth]{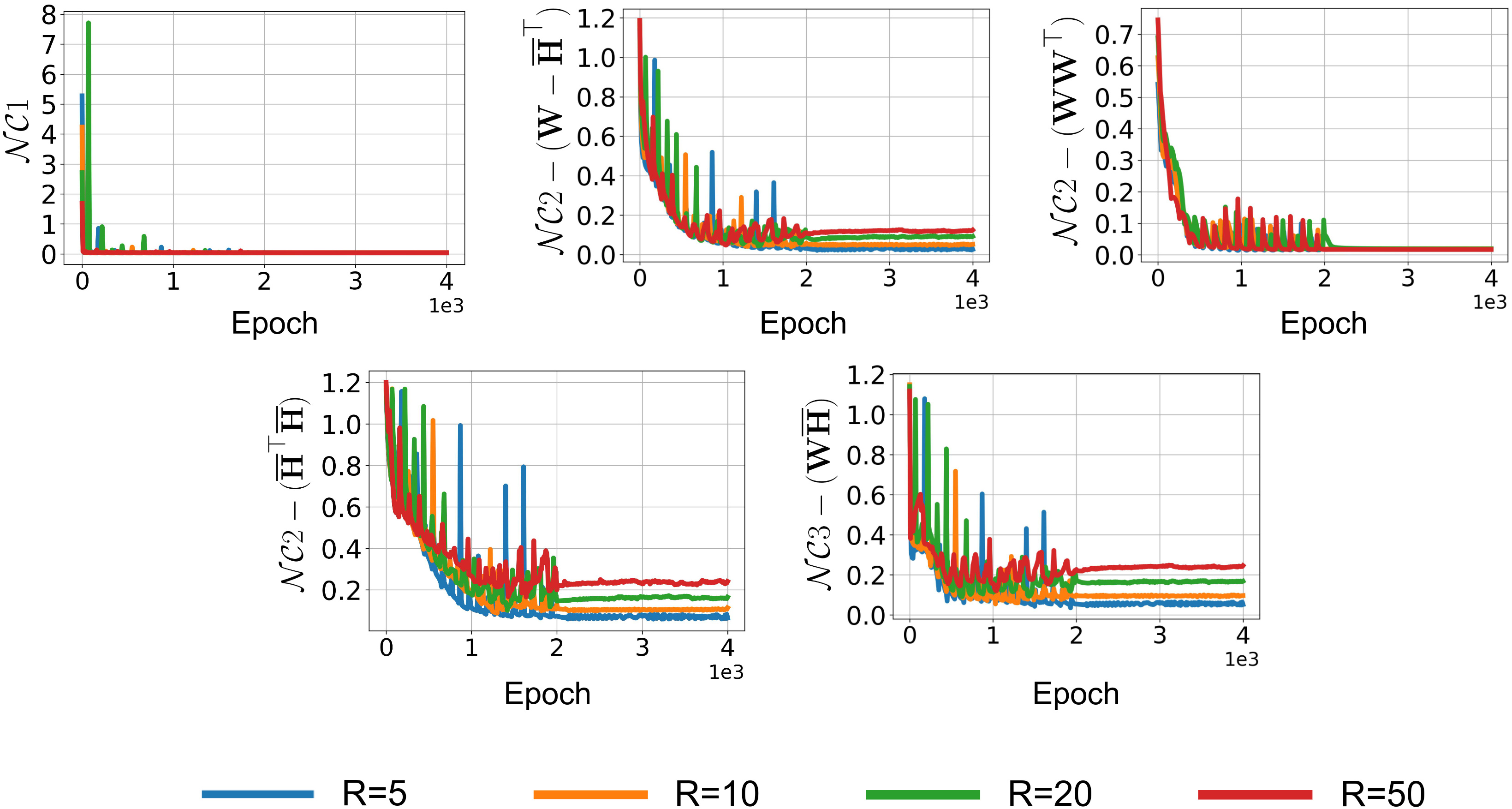}
    \vspace{-0.2in}
    \caption{\small MLP}
    \label{fig:r_mlp_cifar10}
\end{subfigure}
\begin{subfigure}{0.4\textwidth}
    \includegraphics[width=\textwidth]{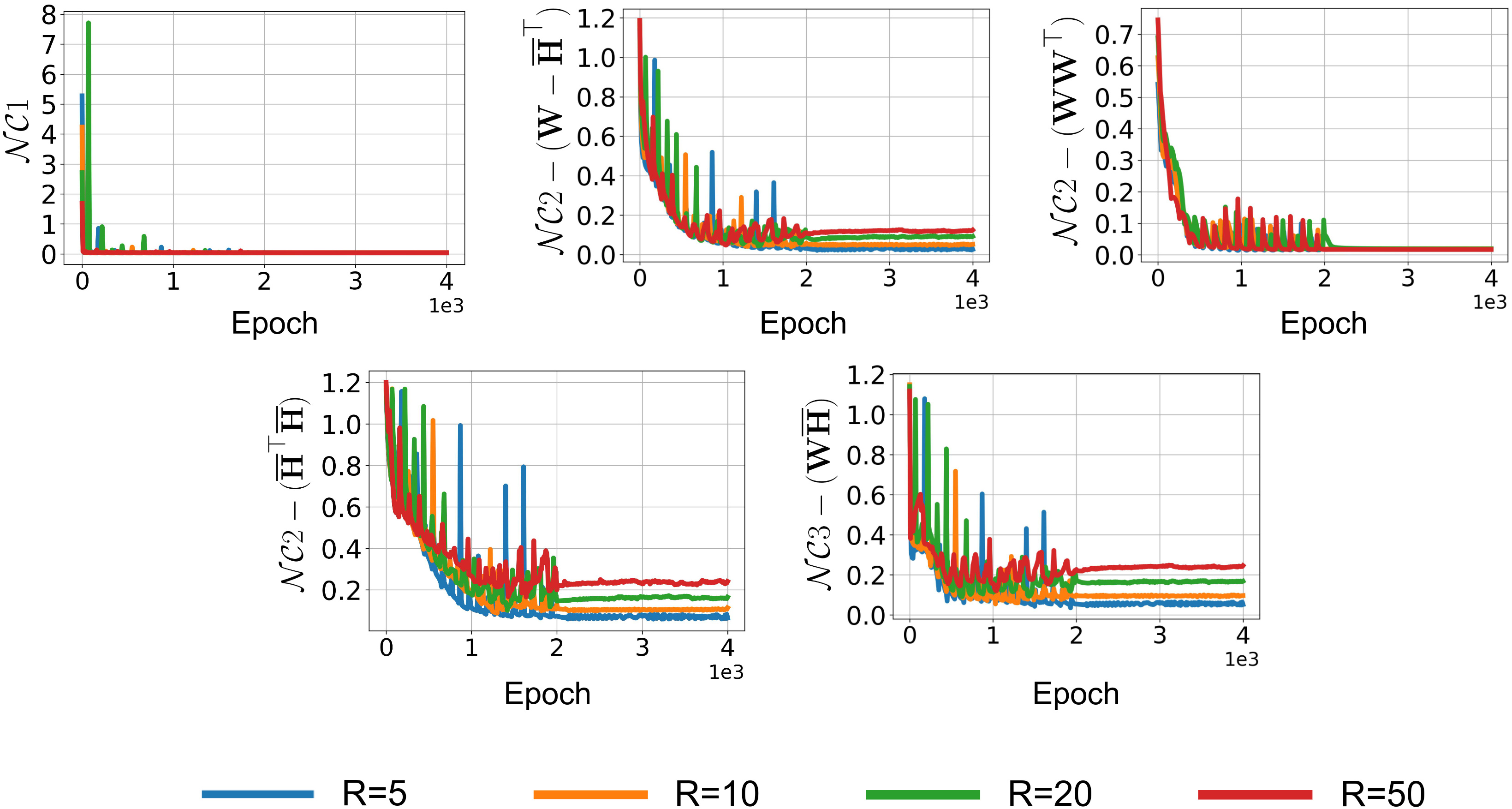}
    \vspace{-0.2in}
    \caption{\small VGG11}
    \label{fig:r_vgg11_cifar10}
\end{subfigure}
\vspace{-0.1in}
\caption{\small $\mathcal{NC}$ metrics evolution of MLP and VGG11 backbone trained on CIFAR10 imbalanced subsets with cross entropy loss}
\label{fig:r_cifar10}
\vspace{-0.2in}
\end{figure}

\subsection{Network training details}

Unless stated otherwise, all models in Section~\ref{subsec:experiment_details} are trained for 4000 epochs with Adam optimizer, we set the general learning rate to $\num{1e-3}$ with decay of $0.1$ at $2000$-th epoch. All MLP models share the same hidden dimension of $1024$.

\textbf{Image classification experiment on CIFAR10:} For this experiment, a subset of the CIFAR10 dataset with $\{1000, 1000, 2000, 2000, 3000, 3000,$ $4000, 4000, 5000, 5000\}$ random samples per class is utilized as training data. We train each backbone model with Adam optimizer with batch size $256$, the weight decay is $\lambda_{W} = \num{1e-4}$. Feature decay $\lambda_{H}$ is set to $\num{1e-5}$ for MLP and VGG11, and to $\num{1e-4}$ for ResNet18. 

\textbf{Image classification experiment on CIFAR100:} We create a random subset of the CIFAR100 dataset with 100 samples per class for the first 20 classes, 200 samples per class for the next 20 classes,..., 500 samples per class for the remaining 20 classes.
Each backbone model is then trained with Adam optimizer with batch size $256$, the learning rate is  $\num{2e-4}$ for VGG11, ResNet18 and $\num{1e-4}$ for MLP. Weight decay $\lambda_{W}$ and feature decay $\lambda_{H}$ is set to $\num{1e-4}$ and $\num{1e-5}$, respectively.

\textbf{Image classification experiment on MNIST dataset:} In this experiment, a randomly sampled subset of MNIST dataset with the number of samples per class $\in \{100, 100, 200, 200, 300, 300,$ $400, 400, 500, 500\}$ is utilized. Each backbone model is trained with batch size $16$. Feature decay rate is $\lambda_{H} = \num{1e-5}$ and weight decay rate is $\lambda_{W} = \num{1e-4}$. The results are shown in Figure \ref{fig:mnist_3_models}.

\textbf{Image classification experiment on FashionMNIST dataset:} Similar to MNIST experiment, we randomly sample a subset of FashionMNIST dataset with $\{100, 100, 200, 200, 300, 300,$ $400, 400, 500, 500\}$ samples per class. Each backbone model is trained with batch size $16$. Feature decay rate is $\lambda_{H} = \num{1e-5}$ and weight decay rate is $\lambda_{W} = \num{1e-4}$. The results are shown in Figure \ref{fig:fashion_3_models}.

% \textbf{Image classification experiment on CIFAR10 dataset:} For this experiment, we use a randomly sampled subset of CIFAR10 with the number of samples per class $\{1000, 1000, 2000, 2000, 3000, 3000,$ $4000, 4000, 5000, 5000\}$. Each model is trained with batch size $256$, weight decay $\lambda_{W} = \num{1e-4}$. Feature decay $\lambda_{H}$ is set to $\num{1e-5}$ for MLP and VGG11, and to $\num{1e-4}$ for ResNet18.

% \textbf{Image classification experiment on CIFAR100 dataset:} We create a random subset of the CIFAR100 dataset with $\{100, \dots, 100, 200, \dots 200, 300, \dots 300, 400, \dots, 400, 500, \dots 500\}$ samples per class, with divides 100 classes into 5 groups of 20 classes each group, with each group has the same number of samples per class. Each model is trained with batch size $256$, learning rate $\num{2e-4}$ for VGG11, ResNet18 and $\num{1e-4}$ for MLP. Weight decay $\lambda_{W}$ and feature decay $\lambda_{H}$ is set to $\num{1e-4}$ and $\num{1e-5}$, respectively.

\textbf{Varying imbalance ratio:}  Each model is trained with a batch size of 32, weight decay $\lambda_{W}$ of $\num{1e-4}$, and feature decay $\lambda_{H}$ of $\num{1e-5}$. The training data is randomly drawn from CIFAR10 and MNIST dataset with 5 majority classes with 500 samples per class, and the other 5 minority classes with $500 / R$ ($R = 5, 10, 20, 50$) samples per class.

\section{Proof of Theorem \ref{thm:CE_main} and Proposition \ref{prop:classifier_norm}}
\label{sec:proof}
\noindent Recall the training problem:
\begin{align}
    &\min_{\mathbf{W}, \mathbf{H}} 
    \mathcal{L}_{0} (\bw, \bh)
    := 
    \frac{1}{N} \sum_{k=1}^{K} \sum_{i=1}^{n_k} 
    \mathcal{L}_{CE} (\mathbf{W} \mathbf{h}_{k,i}, \mathbf{y}_{k}) 
    +
    \frac{\lambda_{W}}{2} \| \mathbf{W} \|^2_F + \frac{\lambda_{H}}{2} \| \mathbf{H} \|^2_F,
\end{align}
where $\mathbf{h}_{k,i} \geq 0 \: \forall \: k, i$ and:
\begin{align}
    \mathcal{L}_{CE} (\mathbf{z}, \mathbf{y}_{k}) 
    := -\log \left(
    \frac{ e^{z_{k}} }{ \sum_{m=1}^{K} e^{z_{m}}}
    \right).
    \nonumber
\end{align}

\noindent Denoting the class-mean of $k$-th class as $\bsh_k = \frac{1}{n_k} \sum_{i=1}^{n_k} \bsh_{k,i}$, the $m$-th row vector of $\bw$ as $\bsw_{m}$. We denote $\bsz_{k,i} := \bw \bsh_{k,i}$ and $\bsz^{(m)}$ is the $m$-th component of vector $\bsz$.
\\

\noindent \textbf{Step 1:} We introduce a lower bound on the loss $\mathcal{L}_{0}$ by grouping the cross-entropy term and regularization term for features within the same class. 
\\

\noindent We have:
\begin{align}
\begin{aligned}
    &\mathcal{L}_{0} (\bw, \bh) = \frac{1}{N} \sum_{k=1}^{K} \sum_{i=1}^{n_k} 
    \mathcal{L}_{CE} (\bsz_{k,i}, \mathbf{y}_{k}) 
    +
    \frac{\lambda_{W}}{2} \| \mathbf{W} \|^2_F + \frac{\lambda_{H}}{2} \| \mathbf{H} \|^2_F \nonumber \\
    &= 
    \frac{1}{N} \sum_{k=1}^{K}
    \left( \sum_{i=1}^{n_k} 
    \log 
    \left( \frac{ \sum_{m = 1}^K \exp \left( \bsz_{k,i}^{(m)} \right)}{\exp \left( \bsz_{k,i}^{(k)} \right)} \right) \right) 
    + 
    \frac{\lambda_{W}}{2} \| \mathbf{W} \|^2_F 
    + \frac{\lambda_{H}}{2} \sum_{k=1}^{K} \left( \sum_{i=1}^{n_k} \| \bsh_{k,i} \|^2 \right)
    \\ 
    &=
    \frac{1}{N} \sum_{k=1}^{K}
    \left( \sum_{i=1}^{n_k} 
    \log 
    \left( 1 + \sum_{m \neq k} \exp \left( \bsz_{k,i}^{(m)} - \bsz_{k,i}^{(k)} \right) \right) \right) 
    + 
    \frac{\lambda_{W}}{2} \| \mathbf{W} \|^2_F 
    + \frac{\lambda_{H}}{2} \sum_{k=1}^{K} \left( \sum_{i=1}^{n_k} \| \bsh_{k,i} \|^2 \right)
    \\
    %%%%%%%%%%%%%%%%%%%5
    &\geq 
     \frac{1}{N} \sum_{k=1}^{K}
    \left( \sum_{i=1}^{n_k} 
    \log 
    \left( 1 + (K - 1) \exp \left(
    \sum_{m \neq k} 
    \frac{\bsz_{k,i}^{(m)} - \bsz_{k,i}^{(k)}}{K-1}
    \right) \right) \right) 
    + 
    \frac{\lambda_{W}}{2} \| \mathbf{W} \|^2_F 
    + \frac{\lambda_{H}}{2} \sum_{k=1}^{K} \left( \sum_{i=1}^{n_k} \| \bsh_{k,i} \|^2 \right) 
    \\
    &=
    \frac{1}{N} \sum_{k=1}^{K}
    \left( \sum_{i=1}^{n_k} 
    \log 
    \left( 1 + (K - 1) \exp \left(
    \frac{ \sum_{m =1}^{K} \bsz_{k,i}^{(m)} - K \bsz_{k,i}^{(k)}}{K-1}
    \right) \right) \right) 
    + 
    \frac{\lambda_{W}}{2} \| \mathbf{W} \|^2_F 
    + \frac{\lambda_{H}}{2} \sum_{k=1}^{K} \left( \sum_{i=1}^{n_k} \| \bsh_{k,i} \|^2 \right) 
    \\
    %%%%%%%%%%%%%%%%%%%5
    &\geq 
    \frac{1}{N} \sum_{k=1}^{K}
    n_k  
    \log 
    \left( 1 + (K - 1) \exp \left(
    \frac{1}{n_k}
    \sum_{i=1}^{n_k}
    \frac{ \sum_{m =1}^{K} \bsz_{k,i}^{(m)} - K \bsz_{k,i}^{(k)}}{K-1}
    \right) \right) 
    + 
    \frac{\lambda_{W}}{2} \| \mathbf{W} \|^2_F 
    + \frac{\lambda_{H}}{2} \sum_{k=1}^{K} \left( 
    \frac{1}{n_k} \left\| \sum_{i=1}^{n_k} \bsh_{k,i}  \right\|^2
    \right) 
    \\
    &= 
    \frac{1}{N} \sum_{k=1}^{K}
    n_k  
    \log 
    \left( 1 + (K - 1) \exp \left(
    \frac{ \sum_{m =1}^{K} \bsz_{k}^{(m)} - K \bsz_{k}^{(k)}}{K-1}
    \right) \right)  
    + 
    \frac{\lambda_{W}}{2} \| \mathbf{W} \|^2_F 
    + \frac{\lambda_{H}}{2} \sum_{k=1}^{K}  
    n_k \left\|  \bsh_{k}  \right\|^2
    \\
    &= 
    \frac{1}{N} \sum_{k=1}^{K}
    n_k  
    \log 
    \left( 1 + (K - 1) \exp \left(
    \frac{ \sum_{m =1}^{K} \bsw_{m} \bsh_{k} - K \bsw_{k} \bsh_{k}}{K-1}
    \right) \right)  
    + 
    \frac{\lambda_{W}}{2} \| \mathbf{W} \|^2_F 
    + \frac{\lambda_{H}}{2} \sum_{k=1}^{K}  
    n_k \left\|  \bsh_{k}  \right\|^2
    \\
    &:= \mathcal{L}_{1} (\bw, \bh)
\end{aligned}
\end{align}
where $\bsz_{k} := \frac{1}{n_k} \sum_{i=1}^{n_k} \bsz_{k,i}$. We denote the function:
\begin{align}
    g(\bw \bh) := \frac{1}{N} \sum_{k=1}^{K}
    n_k  
    \log 
    \left( 1 + (K - 1) \exp \left(
    \frac{ \sum_{m =1}^{K} \bsw_{m} \bsh_{k} - K \bsw_{k} \bsh_{k}}{K-1}
    \right) \right),  
\end{align}
thus, $\mathcal{L}_{1}(\bw, \bh) = g(\bw \bh) + \frac{\lambda_{W}}{2} \| \mathbf{W} \|^2_F 
+ \frac{\lambda_{H}}{2} \sum_{k=1}^{K}  
n_k \left\|  \bsh_{k}  \right\|^2$.
\\

\noindent The first inequality above follows from Jensen inequality that:
\begin{align}
    \sum_{m \neq k} \exp \left( \bsz_{k,i}^{(m)} - \bsz_{k,i}^{(k)} \right)
    \geq 
    (K - 1) \exp \left(
    \sum_{m \neq k} 
    \frac{\bsz_{k,i}^{(m)} - \bsz_{k,i}^{(k)}}{K-1}
    \right),
\end{align}
which become equality when and only when $\bsz_{k,i}^{(m)} = \bsz_{k,i}^{(l)} \: \forall \: m, l \neq k$.  The second inequality includes two inequalities as following. The first one is
\begin{align}
    \sum_{i=1}^{n_k} 
    \log 
    \left( 1 + (K - 1) \exp \left(
    \frac{ \sum_{m =1}^{K} \bsz_{k,i}^{(m)} - K \bsz_{k,i}^{(k)}}{K-1}
    \right) \right)
    \geq 
    n_k  
    \log 
    \left( 1 + (K - 1) \exp \left(
    \frac{1}{n_k}
    \sum_{i=1}^{n_k}
    \frac{ \sum_{m =1}^{K} \bsz_{k,i}^{(m)} - K \bsz_{k,i}^{(k)}}{K-1}
    \right) \right),
    \label{eq:jensen}
\end{align}
where we use Jensen inequality since the function $\log(1 + (K-1) \exp(x))$ is a convex function. The second sub-inequality is that for any $k \in [K]$, $\sum_{i=1}^{n_k} \| \bsh_{k,i} \|^2 \geq \frac{1}{n_k} \| \sum_{k=1}^{K} \bsh_{k,i} \|^2$, which achieves equality if and only if $\bsh_{k,i} = \bsh_{k,j} \: \forall \: i \neq j$. 
This equality condition, $\bsh_{k,i} = \bsh_{k,j} \: \forall \: i \neq j$, also satisfies the equality condition of the inequality \eqref{eq:jensen}, hence we only need to satisfy this property to achieve equality.
\\

\noindent \textbf{Step 2:} We further lower bound the $\log$ term of $\mathcal{L}_{1}$, the idea of this bound is inspired from Lemma D.5 in \cite{Zhu21}
\\

\noindent For any $k \in [K]$ and any $t_k > 0$, we have:
\begin{align}
\begin{aligned}
& \log \left( 1 + (K - 1) \exp \left(
    \frac{ \sum_{m =1}^{K} \bsz_{k}^{(m)} - K \bsz_{k}^{(k)}}{K-1}
    \right) \right)   \\
=& \log \left(\frac{t_k}{1 + t_k} \frac{1 + t_k}{t_k}+\frac{1}{1 + t_k}\left(1 + t_k \right)(K-1) \exp \left(
    \frac{ \sum_{m =1}^{K} \bsz_{k}^{(m)} - K \bsz_{k}^{(k)}}{K-1}
    \right) \right) \\
\geq & \frac{1}{1 + t_k} \log \left(\left(1 + t_k \right) (K-1) \exp \left(
    \frac{ \sum_{m =1}^{K} \bsz_{k}^{(m)} - K \bsz_{k}^{(k)}}{K-1}
    \right) \right) + \frac{t_k}{1 + t_k} \log \left(\frac{1 + t_k}{t_k} \right) 
\\
= & \frac{1}{1 + t_k} \frac{ \sum_{m =1}^{K} \bsz_{k}^{(m)} - K \bsz_{k}^{(k)}}{K-1} + \frac{1}{1 + t_k} \log \left(\left( 1 + t_k \right)(K-1) \right) 
+ \frac{t_k}{1 + t_k} \log \left(\frac{ 1 + t_k}{t_k}\right) 
\\
= & \underbrace{\frac{\sqrt{n_k}}{1 + t_k}}_{c_{1,k}} \sqrt{\frac{1}{n_k}} \frac{ \sum_{m =1}^{K} \bsz_{k}^{(m)} - K \bsz_{k}^{(k)}}{K-1} + \underbrace{\frac{1}{1 + t_k} \log \left(\left( 1 + t_k \right)(K-1)\right) + \frac{t_k}{1 + t_k} \log \left(\frac{ 1 + t_k}{t_k}\right)}_{c_{2,k}} \\
= & \frac{c_{1,k}}{K-1} \frac{ \sum_{m =1}^{K} \bsz_{k}^{(m)} - K \bsz_{k}^{(k)}}{\sqrt{n_k}} + c_{2,k},
\end{aligned}
\end{align}
where the inequality above is from the concavity of the $\log(x)$ function, i.e., $\log(tx + (1-t)y) \geq t\log(x) + (1 - t)\log(y)$ for any $x, y$ and $t \in [0, 1]$. The inequality becomes an equality if any only if:
$$
\frac{1 + t_k}{t_k}=\left(1 + t_k \right)(K-1) \exp \left(\frac{ \sum_{m =1}^{K} \bsz_{k}^{(m)} - K \bsz_{k}^{(k)}}{K-1} \right) \quad \text { or } \quad t_k = 0, \quad \text { or } \quad t_k = +\infty .
$$
However, when $t_k = 0$ or $t_k = +\infty$, the equality is trivial. Therefore, we have:
$$
t_k = \left[(K-1) \exp \left(\frac{ \sum_{m =1}^{K} \bsz_{k}^{(m)} - K \bsz_{k}^{(k)}}{K-1} \right)\right]^{-1}.
$$

\noindent To summary, at this step, we have that for any $k \in [K]$ and any $t_k > 0$:
\begin{align}
    \log \left( 1 + (K - 1) \exp \left(
    \frac{ \sum_{m =1}^{K} \bsz_{k}^{(m)} - K \bsz_{k}^{(k)}}{K-1}
    \right) \right)
    \geq 
    \frac{c_{1,k}}{K-1} \frac{ \sum_{m =1}^{K} \bsz_{k}^{(m)} - K \bsz_{k}^{(k)}}{\sqrt{n_k}} + c_{2,k},
    \label{eq:log_bound}
\end{align}
where $c_{1,k} = \sqrt{n_k}/(1 + t_k)$ and $c_{2,k} = \frac{1}{1 + t_k} \log \left(\left( 1 + t_k \right)(K-1)\right) + \frac{t_k}{1 + t_k} \log \left(\frac{ 1 + t_k}{t_k}\right)$. The inequality becomes an equality when:
\begin{align}
    \label{eq:equality_t_k}
    t_k = \left[(K-1) \exp \left(\frac{ \sum_{m =1}^{K} \bsz_{k}^{(m)} - K \bsz_{k}^{(k)}}{K-1} \right)\right]^{-1}.
\end{align}

\noindent \textbf{Step 3:} We apply the result from $\textbf{Step 2}$ and choose the same $c_{1,k}$ for all classes to lower bound $g(\bw \bh)$ w.r.t. the L2-norm of the class-mean $\| \bsh_{k} \|^2 := x_k$.
\\

\noindent By using the inequality \eqref{eq:log_bound} for $\mathbf{z}_{k,i} =  \bw \mathbf{h}_{k,i}$ and choosing the same scalar $c_{1} := c_{1,1} = \ldots = c_{1,k}$ (recall that $c_{1,k}$ can be chosen arbitrarily), we have:
    \begin{align}
    \begin{aligned}
        \label{eq:first_ine}
        &\frac{K-1}{c_{1}} \left[ g(\bw \bh) - \sum_{k=1}^{K} \frac{n_k}{N} c_{2,k} \right] \\
        = \: &\frac{K-1}{c_1} \left[ \frac{1}{N} \sum_{k=1}^{K} n_k   \log 
        \left( 1 + (K - 1) \exp \left(
        \frac{ \sum_{m =1}^{K} \bsw_{m} \bsh_{k} - K \bsw_{k} \bsh_{k}}{K-1}
        \right) \right)  - \sum_{k=1}^{K} \frac{n_k}{N} c_{2,k} 
        \right] \\
        \geq \: &\frac{1}{N} \sum_{k=1}^{K} n_k 
        \sqrt{\frac{1}{n_k}}
        \left[ 
        \sum_{m=1}^{K} \bsw_{m} \mathbf{h}_{k} - K \bsw_{k} \mathbf{h}_{k} \right] \\
        = \: &\frac{1}{N} \sum_{m=1}^{K} \bsw_{m} \left( \sum_{k=1}^{K} \sqrt{n_k} \bsh_{k} - K \sqrt{n_m} \bsh_{m} \right).
    \end{aligned}
    \end{align}
    
\noindent We know that from the Cauchy-Schwarz inequality for inner product that for any $\mathbf{u}, \mathbf{v} \in \mathbb{R}^{K}$ and any $c_{3} > 0$,
    \begin{align}
        \mathbf{u}^{\top} \mathbf{v} \geq - \frac{c_{3}}{2} \| \mathbf{u} \|_2^2 - \frac{1}{2 c_{3}} \| \mathbf{v} \|_2^2. \nonumber
    \end{align}

\noindent The equality holds when $c_{3} \mathbf{u} = -\mathbf{v}$. Therefore, by applying this inequality for each term  $ \bsw_{m} \left( \sum_{k=1}^{K} \sqrt{n_k} \bsh_{k} - K \sqrt{n_m} \bsh_{m} \right)$, we have:
    \begin{align}
    \begin{aligned}
        & \frac{N(K-1)}{c_1} \left[ g(\bw \bh) - \sum_{k=1}^{K} \frac{n_k}{N} c_{2,k} \right] \\
        \geq & - \frac{c_3}{2} \sum_{m=1}^K
        \left
        \| \bsw_{m} \right\|_2^2
        - \frac{1}{2 c_3} \sum_{m=1}^K
         \left\| \sum_{k=1}^{K} \sqrt{n_k} \bsh_{k} - K \sqrt{n_m} \bsh_{m}
        \right\|_2^2 \\
        = & - \frac{c_3}{2} \| \bw \|_F^2
        - \frac{1}{2 c_3} \sum_{m = 1}^K \| \hat{\bsh}_{m} \|_2^2,
        \label{eq:first_ine}
    \end{aligned}
    \end{align}
    where we denote $\hat{\bsh}_{m} := \sum_{k=1}^{K} \sqrt{n_k} \bsh_{k} - K \sqrt{n_m} \bsh_{m}, \forall \: m \in [K]$, and the above inequality becomes an equality if and only if:
    \begin{align}
        c_{3} \bsw_{m} = - \sum_{k=1}^{K} \sqrt{n_k} \bsh_{k} + K \sqrt{n_m} \bsh_{m}, \: \forall m \in [K]
        \label{eq:c_3_equality}
    \end{align}
    
    \noindent We further have:
    \begin{align}
        \label{eq: hat_h}
        \sum_{m=1}^{K} \| \hat{\bsh}_{m} \|^2
        &= \sum_{m=1}^{K} \left\| \sum_{k=1}^{K} \sqrt{n_k} \bsh_{k} - K \sqrt{n_m} \bsh_{m} \right\|^2
        \nonumber \\
        &= \sum_{m=1}^{K} \left( 
        \left\|  \sum_{k=1}^{K} \sqrt{n_k} \bsh_{k}  \right\|^2
        + K^2 \left\|  \sqrt{n_m} \bsh_{m} \right\|^2 
        - 2K \left\langle \sum_{k=1}^{K} \sqrt{n_k} \bsh_{k}, \sqrt{n_m} \bsh_{m} \right\rangle \right) \nonumber \\
        &= K^2 \sum_{m=1}^{K} 
        \left\| \sqrt{n_m} \bsh_{m} \right\|^2
        + K \left\|  \sum_{k=1}^{K} \sqrt{n_k} \bsh_{k}  \right\|^2
        - 2K \left\langle \sum_{k=1}^{K} \sqrt{n_k} \bsh_{k}, \sum_{m=1}^{K} \sqrt{n_m} \bsh_{m} \right\rangle \nonumber \\
        &= K^2 \sum_{m=1}^{K} 
        \left\| \sqrt{n_m} \bsh_{m} \right\|^2 
        - K \left\|  \sum_{k=1}^{K} \sqrt{n_k} \bsh_{k}  \right\|^2
    \end{align}

    \noindent We lower bound the second term of Eqn. \eqref{eq: hat_h} as following:
    \begin{align}
         \left\|  \sum_{k=1}^{K} \sqrt{n_k} \bsh_{k}  \right\|^2
         &= \sum_{k=1}^{K} \left\|  \sqrt{n_k} \bsh_{k}  \right\|^2 
         + \sum_{k,l, k \neq l} \left\langle \sqrt{n_k} \bsh_{k}, \sqrt{n_l} \bsh_{l}   \right\rangle
         \nonumber \\
         &\geq \sum_{k=1}^{K} \left\|  \sqrt{n_k} \bsh_{k}  \right\|^2,
    \end{align}
    where we use the non-negativity of the features and the equality happens iff $\langle \bsh_{k}, \bsh_{l} \rangle = 0, \forall \: k \neq l.$
    \\
    
    \noindent Thus, we have:
    \begin{align}
        \sum_{m=1}^{K} \| \hat{\bsh}_{m} \|^2
        &\leq 
        K(K-1) \sum_{k=1}^{K} n_k \| \bsh_{k} \|^2,
        \label{eq:overline_h_bound}
    \end{align}
    the equality happens iff $\langle \bsh_{k}, \bsh_{l} \rangle = 0, \forall \: k \neq l.$
    \\
    
    \noindent Now, let $x_k := \| \bsh_{k} \|^2$, at critical points of $\mathcal{L}_{1}$, from Lemma \ref{lm:critical_point} , we have:
    \begin{align}
        \| \bw \|_F^2 = \frac{\lambda_{H}}{\lambda_{W}} \sum_{k=1}^{K} n_k x_k.
    \end{align}

    \noindent Hence:
    \begin{align}
        \frac{N (K-1) }{c_1} \left[ g(\bw \bh) - \sum_{k=1}^{K} \frac{n_k}{N} c_{2, k} \right]
         &\geq 
         - \frac{c_3}{2} \frac{\lambda_{H}}{\lambda_{W}} \left( \sum_{k=1}^{K} n_k x_k \right) - \frac{K(K-1)}{2 c_3} \left( \sum_{k=1}^{K} n_k x_k \right)
         \label{eq:pre_c3}
    \end{align}

    \noindent We will choose $c_{3}$ in advance to let the inequality \eqref{eq:pre_c3} hold. From the equality conditions \eqref{eq:c_3_equality} and  \eqref{eq:overline_h_bound}, we can choose $c_3$ as follows:
    \begin{align}
     &c_{3} \bsw_{m} = - \sum_{k=1}^{K} \sqrt{n_k} \bsh_{k} + K \sqrt{n_j} \bsh_{m}, \: \forall m \in [K] \nonumber 
       \\
    \Rightarrow c_3^2 &= \frac{ \sum_{k=1}^{K} \| \hat{\bsh}_{k} \|^2 }{\sum_{k=1}^{K} \| \bsw_k \|^2} =  \frac{K(K-1) \left( \sum_{k=1}^{K} n_k x_k \right)}{\frac{\lambda_{H}}{\lambda_{W}} \left( \sum_{k=1}^{K} n_k x_k \right) } = 
       \frac{\lambda_{W}}{\lambda_{H}} K(K-1)
       \label{eq:c3} 
    \end{align}
    
    \noindent In summary, from \eqref{eq:pre_c3}, we have the lower bound of $g(\bw \bh)$:
    \begin{align}
        g(\bw \bh)
        \geq 
        \frac{-c_1}{N} \sqrt{\frac{\lambda_{H}}{\lambda_{W}} \frac{K}{K-1}}
        \left( \sum_{k=1}^{K} n_k x_k \right)
        + \sum_{k=1}^{K} \frac{n_k}{N} c_{2, k},
        \label{eq:lower_g}
    \end{align}
    for any $c_1 > 0$. The equality conditions of \eqref{eq:lower_g} is derived at Lemma \ref{lemma:equality_g}.
\\

\noindent \textbf{Step 4:} Now, we use the lower bound of $g(\bw \bh)$ above into the bounding of the loss $\mathcal{L}_{1}(\bw, \bh)$ and use the equality conditions from Lemma \ref{lemma:equality_g} to finish the bounding process. 
\\

\noindent Recall that $x_k = \| \bsh_{k} \|^2$, we have at critical points of $\mathcal{L}_{1}$:
\begin{align}
\begin{aligned}
    \mathcal{L}_{1} (\bw, \bh) 
    &= g(\bw \bh) + 
    \frac{\lambda_{W}}{2} \| \mathbf{W} \|^2_F 
    + \frac{\lambda_{H}}{2} \sum_{k=1}^{K}  
    n_k \left\|  \bsh_{k}  \right\|^2
    \nonumber \\
    &\geq \frac{-c_1}{N} \sqrt{\frac{\lambda_{H}}{\lambda_{W}} \frac{K}{K-1}}
        \left( \sum_{k=1}^{K} n_k x_k \right)
        + \sum_{k=1}^{K} \frac{n_k}{N} c_{2, k} + \lambda_{H}
    \sum_{k=1}^{K} n_k x_k \nonumber \\
    &:= \xi(c_1, x_1, \ldots, x_K),
\end{aligned}
\end{align}
for any $c_1 > 0$ ($c_{2,k}$ can be calculated from $c_1$). From Lemma \ref{lemma:equality_g}, we know that the inequality  $\mathcal{L}_{1} (\bw, \bh) \geq \xi(c_1, x_1, \ldots, x_k)$ becomes an equality if and only if:
\begin{align}
\begin{aligned}
            %%%%%%%%%
        &\bsh_{k}^{\top} \bsh_{l} = 0, \: \forall \: k \neq l \\
        %%%%%%%5
        &\bsw_{m}
         = \sqrt{\frac{1}{K (K-1)}}
         \sqrt{\frac{\lambda_{H}}{ \lambda_{W}}} 
         \left( K \sqrt{n_m} \bsh_{m} - \sum_{k=1}^{K} \sqrt{n_k} \bsh_{k}
         \right), 
         \: \forall m \in [K] \\
        %%%%%%%%%%%%%%%%%%%5
        &t_{k} = \frac{1}{K-1} \exp \left( \sqrt{\frac{K}{K-1}}  \sqrt{\frac{\lambda_{H}}{ \lambda_{W}}} \sqrt{n_k} 
         \| \bsh_{k} \|^2 \right), \\
        %%%%%%%
        &c_1 = \frac{\sqrt{n_k}} {1 + \frac{1}{K-1} \exp \left( \sqrt{\frac{K}{K-1}}  \sqrt{\frac{\lambda_{H}}{ \lambda_{W}}} \sqrt{n_k} 
         \| \bsh_{k} \|^2 \right)} = \frac{\sqrt{n_l}} {1 + \frac{1}{K-1} \exp \left( \sqrt{\frac{K}{K-1}}  \sqrt{\frac{\lambda_{H}}{ \lambda_{W}}} \sqrt{n_l} 
         \| \bsh_{l} \|^2 \right)}, \: \forall \: k \neq l \nonumber
\end{aligned}
\end{align}

\noindent Next, we will lower bound $\xi(c_1, x_1, \ldots, x_K)$ under these equality conditions for arbitrary values of $x_1, \ldots, x_K$, as following:
\begin{align}
\begin{aligned}
    & \xi(c_1, x_1, \ldots, x_K) \nonumber \\
    &= \frac{-c_1}{N} \sqrt{\frac{\lambda_{H}}{\lambda_{W}} \frac{K}{K-1}}
        \left( \sum_{k=1}^{K} n_k x_k \right)
        + \sum_{k=1}^{K} \frac{n_k}{N} c_{2, k} + 
        \lambda_{H} \sum_{k=1}^{K} n_k x_k
    \\
    &= - \sum_{k=1}^{K} \frac{1}{N}
    \sqrt{\frac{\lambda_{H}}{\lambda_{W}} \frac{K}{K-1}}
    \frac{n_k \sqrt{n_k} x_k}{1 + t_k}
    + \sum_{k=1}^{K} \frac{n_k}{N}
    \left(
    \frac{1}{1 + t_k} \log((K-1)(1 + t_k))
    + \frac{t_k}{1 + t_k} 
    \log \left( \frac{1 + t_k}{t_k} \right)
    \right) + \lambda_{H}
    \sum_{k=1}^{K} n_k x_k.
\end{aligned}
\end{align}

\noindent Due to the separation of the $x_k$'s, we can minimize them individually. Consider the following function, for any $k \in [K]$:
\begin{align}
    g(x) &= - \frac{1}{N}
    \sqrt{\frac{\lambda_{H}}{\lambda_{W}} \frac{K}{K-1}}
    \frac{n_k \sqrt{n_k} x}{1 + t}
    + \frac{n_k}{N}
    \left(
    \frac{1}{1 + t} \log((K-1)(1 + t))
    + \frac{t}{1 + t} 
    \log \left( \frac{1 + t}{t} \right)
    \right) 
    + \lambda_{H}
     n_k x, \quad x \geq 0 
\end{align}
where $t = \frac{1}{K-1} \exp \left( \sqrt{\frac{K}{K-1}}  \sqrt{\frac{\lambda_{H}}{ \lambda_{W}}} \sqrt{n_k} x \right)$.
\\

\noindent We note that:
\begin{align}
\begin{aligned}
    &\frac{1}{1 + t} \log((K-1)(1 + t))
    + \frac{t}{1 + t} 
    \log \left( \frac{1 + t}{t} \right)
    \nonumber \\
    =  &\frac{1}{1 + t} \log((K-1)(1 + t))
    - \frac{1}{1 + t}  \log \left( \frac{1 + t}{t} \right)
    +  \log \left( \frac{1 + t}{t} \right)    \\
    = &\frac{1}{1 + t} \log((K-1)t) 
    +  \log \left( \frac{1 + t}{t} \right)   \\
    = &\frac{ \sqrt{\frac{\lambda_{H}}{\lambda_{W}} \frac{K}{K-1} n_k} x}{1 + t}
     +  \log \left( \frac{1 + t}{t} \right).
\end{aligned}
\end{align}

\noindent Hence:
\begin{align}
    g(x) &= \frac{n_k}{N} \log \left(1 + (K - 1) 
    \exp \left( - \sqrt{\frac{K}{K-1}}  \sqrt{\frac{\lambda_{H}}{ \lambda_{W}}} \sqrt{n_k} x \right) 
    \right) + 
    \lambda_{H}
     n_k x \nonumber \\
     %%%%%%%%%%%%%%%%%%%%
     g^{\prime}(x) 
     &= - \frac{n_k}{N}
     \frac{ \sqrt{\frac{K}{K-1}}  \sqrt{\frac{\lambda_{H}}{ \lambda_{W}}} \sqrt{n_k}}{1 + 
     \frac{1}{K-1} 
     \exp \left(\sqrt{\frac{K}{K-1}}  \sqrt{\frac{\lambda_{H}}{ \lambda_{W}}} \sqrt{n_k} x \right) 
     }
     + 
     \lambda_{H} n_k \\
    %%%%%%%%%%%%%%%%%%%%%%%
    g^{\prime}(x) &= 0 \Rightarrow 
    c_1 =  \frac{\sqrt{n_k}} {1 + \frac{1}{K-1} \exp \left( \sqrt{\frac{K}{K-1}}  \sqrt{\frac{\lambda_{H}}{ \lambda_{W}}} \sqrt{n_k} x \right)}
    = N \sqrt{\frac{K-1}{K} \lambda_{W} 
    \lambda_{H}}, \\
    &\Rightarrow 
    x^{*} = 
    \sqrt{\frac{K-1}{K} \frac{\lambda_{W}}{ \lambda_{H}} \frac{1}{n_k}}
    \left( \log(K-1) + 
    \log \left(
    \frac{\sqrt{n_k}}{N \sqrt{\frac{K-1}{K} \lambda_{W} 
    \lambda_{H}}}
    - 1 \right)
    \right).
\end{align}

\noindent Since $x = \| \bsh \|^2 \geq 0$, we have that $x^{*} > 0$ if $(K - 1) 
\left(\frac{\sqrt{n_k}}{N \sqrt{\frac{K-1}{K} \lambda_{W} \lambda_{H}}  } - 1 \right) > 1$ or equivalently, $\frac{N}{\sqrt{n_k}} \sqrt{ \lambda_{W} \lambda_{H}} < \sqrt{\frac{K-1}{K}}$. Otherwise, if $\frac{N}{\sqrt{n_k}} \sqrt{ \lambda_{W} \lambda_{H}} \geq \sqrt{\frac{K-1}{K}}$, we have $g^{\prime}(x) > 0 \quad \forall x > 0$ and thus, $x^{*} = 0$.
\\

\noindent In conclusion, we have:
\begin{align}
    \mathcal{L}_{1} (\bw, \bh) = \xi(c_1, x_1, \ldots, x_K) \geq 
    \sum_{k=1}^{K} g(x_k^*) = \text{const} \nonumber
\end{align}

\noindent For any $(\bw, \bh)$ that the equality conditions at Lemma \ref{lemma:equality_g} do not hold, we have that $\mathcal{L}_{1} (\bw, \bh) > \xi \left(c_1 = N \sqrt{\frac{K-1}{K} \lambda_{W} \lambda_{H}}, x_1, \ldots, x_K \right)$ and:
\begin{align}
\begin{aligned}
    &\xi \left(c_1 = N \sqrt{\frac{K-1}{K} \lambda_{W} \lambda_{H}}, x_1, \ldots, x_K \right) \nonumber \\
    = 
    & \frac{-c_1}{N} \sqrt{\frac{\lambda_{H}}{\lambda_{W}} \frac{K}{K-1}}
        \left( \sum_{k=1}^{K} n_k x_k \right)
        + \sum_{k=1}^{K} \frac{n_k}{N} c_{2, k} + 
       \lambda_{H}
    \left( \sum_{k=1}^{K} n_k x_k \right)  \\
    =
    & \sum_{k=1}^{K} \frac{n_k}{N} c_{2, k} \\
    =
    & \sum_{k=1}^{K} \frac{n_k}{N} \left( \frac{1}{1 + t_k} \log((K-1) t_k) 
    +  \log \left( \frac{1 + t_k}{t_k} \right)
    \right) \qquad (\text{with } t_k = \sqrt{n_k}/c_1 - 1)
    \\
    =
    & \sum_{k=1}^{K} g(x^{*}_{k}),
\end{aligned}
\end{align}
hence, $(\bw, \bh)$ is not optimal.
\\

\noindent \textbf{Step 5:} We finish the proof since $\mathcal{L}_{0}(\bw, \bh) \geq \mathcal{L}_{1}(\bw, \bh) \geq \text{const}$ and we study the equality conditions.
\\

\noindent In conclusion, by summarizing all equality conditions, we have that any optimal $(\bw^{*}, \bh^{*})$ of the original training problem obey the following:
\begin{align}
\begin{aligned}
    &\text{i) } \forall \: k \in [K], \bsh_{k,i} = \bsh_{k,j} \quad \forall \: i \neq j \nonumber \\
    &\text{ii) } \bsh_{k}^{\top} \bsh_{l} = 0 \quad \forall \: k \neq l \\
    &\text{iii) } 
    \bsw_{k}
    = \sqrt{\frac{1}{K (K-1)}}
    \sqrt{\frac{\lambda_{H}}{ \lambda_{W}}} 
     \left( K \sqrt{n_k} \bsh_{k} - \sum_{m=1}^{K} \sqrt{n_m} \bsh_{m}
     \right), 
     \: \forall k \in [K] 
     \\
    &\text{and } \sum_{k=1}^{K} \bsw_{k} = \mathbf{0}
     \\
    &\text{iv) } \| \bsh_{k} \|^2 =
      \sqrt{\frac{K-1}{K} \frac{\lambda_{W}}{ \lambda_{H}} \frac{1}{n_k}}
    \log \left( (K - 1) \left(
    \frac{\sqrt{n_k}}{N
    \sqrt{\frac{K-1}{K} \lambda_{W} \lambda_{H}}}
    - 1\right) \right)
    \\
    &\text{v) For } m \neq k, \bsz_k^{(m)} = (\bw \bsh_k)^{(m)} =  
    - \frac{1}{K} \log \left( (K - 1) \left(
    \frac{\sqrt{n_k}}{N
    \sqrt{\frac{K-1}{K} \lambda_{W} \lambda_{H}}}
    - 1\right) \right), \\
    &\qquad 
    \bsz_k^{(k)} = (\bw \bsh_k)^{(k)}
    = \frac{K - 1}{K} 
    \log \left( (K - 1) \left(
    \frac{\sqrt{n_k}}{N
    \sqrt{\frac{K-1}{K} \lambda_{W} \lambda_{H}}}
    - 1\right) \right)
\end{aligned}
\end{align}

\noindent We proceed to deduce the results of Proposition \ref{prop:classifier_norm}:
\begin{align}
    \| \bsw_{k} \|^2 
    &= 
    \frac{1}{K(K-1)} 
    \frac{\lambda_H}{\lambda_W}
    \left\| (K - 1) \sqrt{n_k} \bsh_{k} - \sum_{m \neq k} \sqrt{n_m} \bsh_{m} \right\|^2 \nonumber \\
    &=
    \frac{1}{K(K-1)} 
    \frac{\lambda_H}{\lambda_W}
    \left(
    (K-1)^2 n_k \| \bsh_k \|^2
    + \sum_{m \neq k} 
    n_m \| \bsh_m \|^2
    \right) \nonumber \\
    &= \frac{1}{K \sqrt{K (K - 1)}}
    \sqrt{\frac{\lambda_H}{\lambda_W}}  
    \left(
    (K - 1)^2 \sqrt{n_k} M_k + \sum_{m \neq k}
    \sqrt{n_m} M_m
    \right),
    \nonumber \\
    \bsw_{k}^{\top} \bsw_{j}
    &= 
    \frac{1}{K(K-1)} \frac{\lambda_H}{\lambda_W}
    \left\langle
    (K - 1) \sqrt{n_k} \bsh_{k} - \sum_{m \neq k} \sqrt{n_m} \bsh_{m},
    (K - 1) \sqrt{n_j} \bsh_{j} - \sum_{m \neq j} \sqrt{n_m} \bsh_{m}
    \right\rangle
    \nonumber \\
    &= 
     \frac{1}{K(K-1)} \frac{\lambda_H}{\lambda_W}
     \bigg(
     - (K - 1) n_k \| \bsh_k \|^2
     - (K - 1) n_j \| \bsh_j \|^2
     + \sum_{m \neq k,j} n_m \| \bsh_m \|^2
     \bigg)
     \nonumber \\
     &=  \frac{1}{K \sqrt{K(K-1)}}
    \sqrt{\frac{\lambda_H}{\lambda_W}}
    \Bigg[
    - (K - 1)
     \sqrt{n_k} M_k
    - (K - 1)
     \sqrt{n_j} M_j  + \sum_{m \neq k, j}
     \sqrt{n_m} M_m \Bigg], k \neq j \nonumber
\end{align}

\subsection{Supporting lemmas}

\noindent \textbf{Remark:} Although our training problem is a constrained optimization problem, the constraints $\bsh_{k,i} \geq 0$ are affine functions, it is clear that strong duality holds with dual variables equal $0$'s. Then, the solutions of the primal problem and the optimal dual variables will satisfy KKT conditions and hence, we have $\nabla_{\mathbf{H}} \mathcal{L}_{1} = \mathbf{0}$ at optimal.

\begin{lemma}
    \label{lm:critical_point}
    Any critical points $(\bw, \bh)$ of $\mathcal{L}_{1}(\bw, \bh)$ satisfy:
    \begin{align}
        \| \bw \|_F^2 = \frac{\lambda_{H}}{\lambda_{W}} \sum_{k=1}^{K} n_k \| \bsh_{k} \|^2
    \end{align}
\end{lemma}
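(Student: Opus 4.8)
The plan is to exploit the scale invariance of the logit term $g(\bw\bh)$ under the rescaling $(\bw,\bh)\mapsto(c\,\bw,\,c^{-1}\bh)$, i.e.\ the standard Euler-homogeneity argument for unconstrained-features objectives. First I would observe that $\mathcal{L}_{1}$ depends on $\bh$ only through the class-means $\bsh_{k}$: both $g(\bw\bh)$ — which involves the features only via the inner products $\bsw_{m}^{\top}\bsh_{k}$ — and the feature-regularization term $\tfrac{\lambda_{H}}{2}\sum_{k}n_{k}\|\bsh_{k}\|^{2}$ are already written in terms of the $\bsh_{k}$. Consequently, replacing each $\bsh_{k,i}$ by $c^{-1}\bsh_{k,i}$ sends $\bsh_{k}$ to $c^{-1}\bsh_{k}$, and since $(c\,\bsw_{m})^{\top}(c^{-1}\bsh_{k})=\bsw_{m}^{\top}\bsh_{k}$, the value of $g(\bw\bh)$ is unchanged.

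Next, given a critical point $(\bw,\bh)$, I would set $\phi(c):=\mathcal{L}_{1}(c\,\bw,\,c^{-1}\bh)$ for $c>0$. By the observation above,
\[
\phi(c)=g(\bw\bh)+\tfrac{\lambda_{W}}{2}\,c^{2}\,\|\bw\|_{F}^{2}+\tfrac{\lambda_{H}}{2}\,c^{-2}\sum_{k=1}^{K}n_{k}\,\|\bsh_{k}\|^{2},
\]
so that $\phi'(1)=\lambda_{W}\|\bw\|_{F}^{2}-\lambda_{H}\sum_{k}n_{k}\|\bsh_{k}\|^{2}$ by a direct computation. On the other hand, the curve $c\mapsto(c\,\bw,\,c^{-1}\bh)$ passes through $(\bw,\bh)$ at $c=1$ with velocity $(\bw,-\bh)$, so by the chain rule $\phi'(1)=\langle\nabla_{\bw}\mathcal{L}_{1},\bw\rangle-\langle\nabla_{\bh}\mathcal{L}_{1},\bh\rangle$, which vanishes since all partial derivatives of $\mathcal{L}_{1}$ vanish at a critical point (the Remark preceding the lemma ensures that the constraints $\bh\ge 0$ contribute nothing, so stationarity is just $\nabla\mathcal{L}_{1}=\mathbf{0}$). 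Equating the two expressions for $\phi'(1)$ gives $\lambda_{W}\|\bw\|_{F}^{2}=\lambda_{H}\sum_{k}n_{k}\|\bsh_{k}\|^{2}$, which is the claim.

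An alternative, curve-free derivation contracts the stationarity equations directly: $\nabla_{\bw}\mathcal{L}_{1}=\nabla_{\bw}g+\lambda_{W}\bw=\mathbf{0}$ yields $\langle\nabla_{\bw}g,\bw\rangle=-\lambda_{W}\|\bw\|_{F}^{2}$, while $\nabla_{\bsh_{k,i}}\mathcal{L}_{1}=\tfrac{1}{n_{k}}\nabla_{\bsh_{k}}g+\lambda_{H}\bsh_{k}=\mathbf{0}$ (chain rule for $\bsh_{k}=n_{k}^{-1}\sum_{i}\bsh_{k,i}$) yields, after pairing with $\bsh_{k}$ and summing over $k$, $\sum_{k}\langle\nabla_{\bsh_{k}}g,\bsh_{k}\rangle=-\lambda_{H}\sum_{k}n_{k}\|\bsh_{k}\|^{2}$; since $g$ is a function of the products $\bsw_{m}^{\top}\bsh_{k}$ alone, $\langle\nabla_{\bw}g,\bw\rangle=\sum_{k}\langle\nabla_{\bsh_{k}}g,\bsh_{k}\rangle$, and this closes the argument.

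I do not expect a genuine obstacle here; the only two points needing a sentence of care are (i) verifying that $\mathcal{L}_{1}$ — and in particular $g$ — depends on the features solely through the class-means $\bsh_{k}$, so that the rescaling is admissible, and (ii) reading ``critical point'' as $\nabla\mathcal{L}_{1}=\mathbf{0}$ in spite of the affine constraints $\bh\ge 0$, which is precisely what the preceding Remark supplies via strong duality with vanishing dual variables.
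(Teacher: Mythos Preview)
Your proposal is correct, and your ``alternative, curve-free derivation'' is essentially the paper's own proof: the paper writes the stationarity conditions in matrix form as $\nabla_{\bz}g\cdot\barh^{\top}+\lambda_{W}\bw=\mathbf{0}$ and $\bw^{\top}\nabla_{\bz}g+\lambda_{H}[n_{1}\bsh_{1}\,\cdots\,n_{K}\bsh_{K}]=\mathbf{0}$, left-multiplies the first by $\bw^{\top}$, right-multiplies the second by $\barh^{\top}$, subtracts, and takes the trace---which is exactly your inner-product contraction using $\langle\nabla_{\bw}g,\bw\rangle=\sum_{k}\langle\nabla_{\bsh_{k}}g,\bsh_{k}\rangle$. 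Your primary argument via the rescaling $\phi(c)=\mathcal{L}_{1}(c\bw,c^{-1}\bh)$ is just the Euler-homogeneity repackaging of the same identity and is equally valid.
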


\begin{proof}[Proof of Lemma \ref{lm:critical_point}]
    Recall that 
    $\mathcal{L}_{1} (\bw, \bh) = g(\bw \bh) + 
    \frac{\lambda_{W}}{2} \| \mathbf{W} \|^2_F 
    + \frac{\lambda_{H}}{2} \sum_{k=1}^{K}  
    n_k \left\|  \bsh_{k}  \right\|^2$. We have:
    \begin{align}
        &\nabla_{\bw} \mathcal{L}_{1} (\bw, \bh) 
        = \nabla_{\bz = \bw \bh} \: g(\bw \bh) \bh^{\top} + \lambda_{W} \bw = \mathbf{0},
        \nonumber \\
        &\nabla_{\bh} \mathcal{L}_{1} (\bw, \bh) 
        = \bw^{\top} \nabla_{\bz = \bw \bh} \: g(\bw \bh) + \lambda_{H} 
        \begin{bmatrix}
        n_1 \bsh_1 & n_2 \bsh_2 & \ldots & n_K \bsh_{K} 
        \end{bmatrix} = \mathbf{0}.
        \nonumber
    \end{align}

    \noindent From $\mathbf{0} = \bw^{\top} \nabla_{\bw} \mathcal{L}_{1} (\bw, \bh) - \nabla_{\bh} \mathcal{L}_{1} (\bw, \bh) \bh^{\top}$, we have:
    \begin{align}
        \lambda_{W} \bw^{\top} \bw = \lambda_{H}  \begin{bmatrix}
        n_1 \bsh_1 & n_2 \bsh_2 & \ldots & n_K \bsh_{K} 
        \end{bmatrix} \bh^{\top} \nonumber 
    \end{align}
    Hence, by taking the trace of both sides, we have $\| \bw \|_F^2 = \frac{\lambda_{H}}{\lambda_{W}} \sum_{k=1}^{K} n_k \| \bsh_{k} \|^2$.
\end{proof}

\begin{lemma}
    \label{lemma:equality_g}
   The lower bound \eqref{eq:lower_g} is attained for any critical points $(\bw, \bh)$ if and only if the following hold:
    \begin{align}
        %%%%%%%%%
        &\bsh_{k}^{\top} \bsh_{l} = 0, \: \forall \: k \neq l \\
        %%%%%%%5
        &\bsw_{m}
         = \sqrt{\frac{1}{K (K-1)}}
         \sqrt{\frac{\lambda_{H}}{ \lambda_{W}}} 
         \left( K \sqrt{n_m} \bsh_{m} - \sum_{k=1}^{K} \sqrt{n_k} \bsh_{k}
         \right), 
         \: \forall m \in [K] \\
        %%%%%%%%%%%%%%%%%%%5
        &t_{k} = \frac{1}{K-1} \exp \left( \sqrt{\frac{K}{K-1}}  \sqrt{\frac{\lambda_{H}}{ \lambda_{W}}} \sqrt{n_k} 
         \| \bsh_{k} \|^2 \right) \\
        %%%%%%%
        &c_1 = \frac{\sqrt{n_k}} {1 + \frac{1}{K-1} \exp \left( \sqrt{\frac{K}{K-1}}  \sqrt{\frac{\lambda_{H}}{ \lambda_{W}}} \sqrt{n_k} 
         \| \bsh_{k} \|^2 \right)} = \frac{\sqrt{n_l}} {1 + \frac{1}{K-1} \exp \left( \sqrt{\frac{K}{K-1}}  \sqrt{\frac{\lambda_{H}}{ \lambda_{W}}} \sqrt{n_l} 
         \| \bsh_{l} \|^2 \right)}, \: \forall \: k \neq l
    \end{align}
\end{lemma}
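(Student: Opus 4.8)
The plan is to read off, link by link, the equality condition of every inequality used in Step~3 to derive \eqref{eq:lower_g}, and then to translate the conjunction of those conditions into the four displayed equalities, using the critical-point identity of Lemma~\ref{lm:critical_point} and one explicit logit computation. Recall that, at a critical point, \eqref{eq:lower_g} was built by composing: \textbf{(A)} the per-class bound \eqref{eq:log_bound} applied to each $\bsz_k=\bw\bsh_k$ with the common multiplier $c_1$, which is tight iff $t_k=\big[(K-1)\exp\!\big(\tfrac{\sum_m\bsz_k^{(m)}-K\bsz_k^{(k)}}{K-1}\big)\big]^{-1}$ for every $k$ (see \eqref{eq:equality_t_k}); \textbf{(B)} Young's inequality $\bsw_m^\top\hat{\bsh}_m\ge-\tfrac{c_3}{2}\|\bsw_m\|^2-\tfrac{1}{2c_3}\|\hat{\bsh}_m\|^2$ applied for each $m$, tight iff $c_3\bsw_m=-\hat{\bsh}_m$ for all $m$ (see \eqref{eq:c_3_equality}); \textbf{(C)} the inequality \eqref{eq:overline_h_bound}, $\sum_m\|\hat{\bsh}_m\|^2\le K(K-1)\sum_k n_k\|\bsh_k\|^2$, which comes from the non-negativity of $\bh$ and is tight iff $\bsh_k^\top\bsh_l=0$ for all $k\ne l$; the identity $\|\bw\|_F^2=\tfrac{\lambda_H}{\lambda_W}\sum_k n_k\|\bsh_k\|^2$ of Lemma~\ref{lm:critical_point} (automatic at a critical point); and the choice $c_3^2=\tfrac{\lambda_W}{\lambda_H}K(K-1)$ from \eqref{eq:c3}, which makes the subsequent AM--GM step tight. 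Since \eqref{eq:lower_g} sits at the end of this chain, it is attained at a critical point if and only if (A), (B) (for that $c_3$) and (C) are simultaneously equalities, and the job is to rewrite ``(A) and (B) and (C)'' as the four claimed equalities.

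For the forward direction, set $S:=\sum_k n_k\|\bsh_k\|^2$. If $S=0$ the non-negativity of $\bh$ forces $\bh=\mathbf{0}$, hence $\bw=\mathbf{0}$ by Lemma~\ref{lm:critical_point}, and the four equalities hold trivially; so assume $S>0$. Equality in (C) is the first claimed equality, $\bsh_k^\top\bsh_l=0$. Squaring and summing $c_3\bsw_m=-\hat{\bsh}_m$ over $m$, and using orthogonality to get $\sum_m\|\hat{\bsh}_m\|^2=K(K-1)S$ (an identity already computed in Step~3) together with $\|\bw\|_F^2=\tfrac{\lambda_H}{\lambda_W}S$, forces $c_3^2=\tfrac{\lambda_W}{\lambda_H}K(K-1)$; hence $\bsw_m=-\tfrac1{c_3}\hat{\bsh}_m=\sqrt{\tfrac1{K(K-1)}}\sqrt{\tfrac{\lambda_H}{\lambda_W}}\,\big(K\sqrt{n_m}\bsh_m-\sum_k\sqrt{n_k}\bsh_k\big)$, the second claimed equality. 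Using orthogonality I then compute the logits: $\bsz_k^{(k)}=\bsw_k^\top\bsh_k=\sqrt{\tfrac{K-1}{K}}\sqrt{\tfrac{\lambda_H}{\lambda_W}}\,\sqrt{n_k}\|\bsh_k\|^2$ and $\bsz_k^{(m)}=\bsw_m^\top\bsh_k=-\sqrt{\tfrac1{K(K-1)}}\sqrt{\tfrac{\lambda_H}{\lambda_W}}\,\sqrt{n_k}\|\bsh_k\|^2$ for $m\ne k$, so $\tfrac{\sum_m\bsz_k^{(m)}-K\bsz_k^{(k)}}{K-1}=\bsz_k^{(m)}-\bsz_k^{(k)}=-\sqrt{\tfrac{K}{K-1}}\sqrt{\tfrac{\lambda_H}{\lambda_W}}\,\sqrt{n_k}\|\bsh_k\|^2$ (for any $m\ne k$); feeding this into the equality condition of (A) gives $t_k=\tfrac1{K-1}\exp\!\big(\sqrt{\tfrac{K}{K-1}}\sqrt{\tfrac{\lambda_H}{\lambda_W}}\sqrt{n_k}\|\bsh_k\|^2\big)$, the third claimed equality. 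Finally, since $c_1$ was introduced as the common value $c_{1,k}=\sqrt{n_k}/(1+t_k)$, substituting the $t_k$ just obtained gives the fourth claimed equality, together with the requirement that $\sqrt{n_k}/(1+t_k)$ be independent of $k$. The converse runs the same computation backwards: the stated $\bsw_m$ gives $c_3\bsw_m=-\hat{\bsh}_m$ with $c_3=\sqrt{\tfrac{\lambda_W}{\lambda_H}K(K-1)}$, the value for which the AM--GM step of Step~3 is tight, so (B) holds with equality; orthogonality makes (C) an equality; recomputing the logits shows the stated $t_k$ is exactly the value making (A) tight; and the fourth equality ensures $c_1=\sqrt{n_k}/(1+t_k)$ is a valid common multiplier --- so every link of the chain is an equality and \eqref{eq:lower_g} is attained.

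The step I expect to be the main obstacle is keeping the two roles of the scalar $c_3$ straight: Young's inequality (B) is valid for every $c_3>0$, and its equality case pins $c_3$ to the point through $c_3\bsw_m=-\hat{\bsh}_m$, whereas the final bound \eqref{eq:lower_g} was closed only after fixing $c_3^2=\tfrac{\lambda_W}{\lambda_H}K(K-1)$ to make an AM--GM step tight; one must check that these two prescriptions agree, which they do precisely because the critical-point identity $\|\bw\|_F^2=\tfrac{\lambda_H}{\lambda_W}S$ and the orthogonality evaluation $\sum_m\|\hat{\bsh}_m\|^2=K(K-1)S$ force $c_3^2=\tfrac{\lambda_W}{\lambda_H}K(K-1)$. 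Once this matching is in place, all that remains is the elementary logit computation above and the bookkeeping of the $\sqrt{\lambda_H/\lambda_W}$ factors.
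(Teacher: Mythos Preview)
Your proposal is correct and follows essentially the same route as the paper's own proof: both read off the equality conditions of the three inequalities in Step~3 (the log/Jensen bound \eqref{eq:log_bound}, the Cauchy--Schwarz/Young bound \eqref{eq:first_ine}, and the non-negativity bound \eqref{eq:overline_h_bound}), fix $c_3=\sqrt{\tfrac{\lambda_W}{\lambda_H}K(K-1)}$, and then compute the logits $\bsz_k^{(m)}$ explicitly under orthogonality to extract the stated $t_k$ and $c_1$. Your write-up is in fact slightly more careful than the paper's in two places --- you handle the degenerate case $S=0$ explicitly, and you spell out why the value of $c_3$ forced by the Young equality $c_3\bsw_m=-\hat{\bsh}_m$ coincides with the value fixed in \eqref{eq:c3} (via Lemma~\ref{lm:critical_point} and orthogonality) --- but the underlying argument is identical.
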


\begin{proof}[Proof of Lemma \ref{lemma:equality_g}]
    From the proof above, we see that if we want to achieve the lower bound \eqref{eq:pre_c3}, we need that:
    \begin{align}
        \bsh_{k}^{\top} \bsh_{l} = 0 \quad \forall \: k, l \in [K], k \neq l \nonumber,
        \label{eq:equality_H}
    \end{align}
    to achieve equality for the inequality \eqref{eq:overline_h_bound}.
    \\
    
    \noindent We further need the following to obey \eqref{eq:c_3_equality}:
    \begin{align}
         &c_{3} \bsw_{m} = - \left( \sum_{k=1}^{K} \sqrt{n_k} \bsh_{k} - K \sqrt{n_m} \bsh_{m} \right), 
         \: \forall k \in [K] 
         \qquad
         \text{with } c_3 = \sqrt{\frac{\lambda_{W}}{\lambda_{H}} K(K-1)}
         \nonumber \\
         \Rightarrow &\bsw_{m} 
         = \sqrt{\frac{1}{K (K-1)}}
         \sqrt{\frac{\lambda_{H}}{ \lambda_{W}}} 
         \left( K \sqrt{n_m} \bsh_{m} - \sum_{k=1}^{K} \sqrt{n_k} \bsh_{k}
         \right), 
         \: \forall k \in [K]. \nonumber
         % \label{eq:equality_W}
    \end{align}
    Thus:
    \begin{align}
        \sum_{k=1}^{K} \bsw_{k} = \mathbf{0}
    \end{align}

    \noindent Next, we need the inequality \eqref{eq:first_ine} to hold. Equivalently, this means that the equality condition \eqref{eq:equality_t_k} need to hold. Indeed, for a given $k \in [K]$ and any $m \neq k$:
    \begin{align}
        \bsz_{k}^{(m)} &= \bsw_{m} \bsh_{k}
        \nonumber \\
        &= \sqrt{\frac{1}{K (K-1)}}
         \sqrt{\frac{\lambda_{H}}{ \lambda_{W}}} 
         \left( K \sqrt{n_m} \bsh_{m} - \sum_{l=1}^{K} \sqrt{n_l} \bsh_{l}
         \right) \bsh_{k}
         \nonumber \\
         &= - \sqrt{\frac{1}{K (K-1)}}
         \sqrt{\frac{\lambda_{H}}{ \lambda_{W}}}
         \sqrt{n_k} \| \bsh_{k} \|^2
         \nonumber \\
         \Rightarrow \:  \bsz_{k}^{(m)} &=  \bsz_{k}^{(l)} \quad \forall m, l \neq k
    \end{align}

    \noindent We further have, for any $k \in [K]$:
    \begin{align}
        \sum_{m=1}^{K} \bsz_{k}^{(m)} 
        &= \left( \sum_{m=1}^{K} \bsw_{m} \right) \bsh_{k}
        = 0, \nonumber \\
        K \bsz_{k}^{(k)} 
        &= K \bsw_{k} \bsh_{k} 
        = 
        K \sqrt{\frac{1}{K (K-1)}}
         \sqrt{\frac{\lambda_{H}}{ \lambda_{W}}}  \left( K \sqrt{n_k} \bsh_{k} - \sum_{m=1}^{K} \sqrt{n_m} \bsh_{m}
         \right)
         \bsh_{k} \nonumber 
         \\
         &= \sqrt{K(K-1)}  \sqrt{\frac{\lambda_{H}}{ \lambda_{W}}} \sqrt{n_k} 
         \| \bsh_{k} \|^2
         \nonumber \\
         \Rightarrow t_k 
         &= \left[(K-1) \exp \left(\frac{ \sum_{m =1}^{K} \bsz_{k}^{(m)} - K \bsz_{k}^{(k)}}{K-1} \right)\right]^{-1}
         = 
         \frac{1}{K-1} 
         \exp \left(
        \sqrt{\frac{K}{K-1}}  \sqrt{\frac{\lambda_{H}}{ \lambda_{W}}} \sqrt{n_k} 
         \| \bsh_{k} \|^2
        \right)
    \end{align}

    \noindent Since the scalar $c_{1}$ is chosen to be the same for all $k \in [K]$, we have:
    \begin{align}
        c_{1}
        &= \frac{\sqrt{n_k}}{1 + t_k}
        = 
        \frac{\sqrt{n_k}}{1 +  \frac{1}{K-1} 
         \exp \left(
        \sqrt{\frac{K}{K-1}}  \sqrt{\frac{\lambda_{H}}{ \lambda_{W}}} \sqrt{n_k} 
         \| \bsh_{k} \|^2
        \right)}, \: \forall \: k \in [K]
    \end{align}
\end{proof}

\section{Comparison with SELI geometry}
\label{sec:comparison_SELI}

In this section, we make a comparison between our geometry derived in Theorem \ref{thm:CE_main}, which is the convergence geometry of the UFM Cross-entropy class-imbalance problem with ReLU features, and SELI \cite{Christos22}, the geometry of the UFM SVM class-imbalance problem. Our conclusion is that both the classifier and features of our geometry are different from those of SELI, for both finite ($\lambda > 0$) and vanishing regularization level ($\lambda \rightarrow 0$).
\\

First, we have a useful result that used for subsequent analysis in the vanishing regularization scenario:

\begin{lemma}
    Let $\{ M_i \}_{i=1}^{K}$ be the constants that we have defined in Eqn. \eqref{eq:M_k} in our main paper. We have:
    \begin{align}
    \lim_{\lambda_{W}, \lambda_{H} \rightarrow 0} \frac{M_i}{M_j} = 1 \nonumber
\end{align}
\end{lemma}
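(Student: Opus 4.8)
The plan is to observe that, once $\lambda_W\lambda_H$ is small enough, the truncation in the definition of $M_k$ is inactive, so that $M_i=\overline{M}_i$ and $M_j=\overline{M}_j$, and then to extract the leading-order behaviour of $\overline{M}_k$ as $\lambda_W\lambda_H\to 0$. Concretely, recall from the discussion after \eqref{eq:M_k} that $M_k=\overline{M}_k>0$ is equivalent to $\tfrac{N}{\sqrt{n_k}}\sqrt{\lambda_W\lambda_H}<\sqrt{\tfrac{K-1}{K}}$. Since the left-hand side tends to $0$ as $\lambda_W\lambda_H\to0$, there is a threshold depending only on $N,K,\{n_k\}_{k=1}^K$ below which this inequality holds simultaneously for all $k\in[K]$; for all such $\lambda_W,\lambda_H$ we may therefore replace $M_i,M_j$ by $\overline{M}_i,\overline{M}_j$, both strictly positive.

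Next I would isolate the divergent part of $\overline{M}_k$. Writing $a_k:=\sqrt{n_k}\big/\bigl(N\sqrt{(K-1)/K}\bigr)>0$, we have
\[
\overline{M}_k=\log(K-1)+\log\!\Bigl(\frac{a_k}{\sqrt{\lambda_W\lambda_H}}-1\Bigr)
=-\tfrac12\log(\lambda_W\lambda_H)+\log\bigl((K-1)a_k\bigr)+\varepsilon_k(\lambda_W\lambda_H),
\]
where $\varepsilon_k(\lambda_W\lambda_H):=\log\!\bigl(1-\sqrt{\lambda_W\lambda_H}/a_k\bigr)\to0$ as $\lambda_W\lambda_H\to0$. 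Thus $\overline{M}_k=-\tfrac12\log(\lambda_W\lambda_H)+C_k+o(1)$ with $C_k:=\log((K-1)a_k)$ a fixed constant independent of the regularization parameters.

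Finally, forming the ratio,
\[
\frac{M_i}{M_j}=\frac{-\tfrac12\log(\lambda_W\lambda_H)+C_i+o(1)}{-\tfrac12\log(\lambda_W\lambda_H)+C_j+o(1)}\;\longrightarrow\;1
\qquad\text{as }\lambda_W\lambda_H\to0,
\]
because numerator and denominator both diverge to $+\infty$ while their difference $C_i-C_j+o(1)$ remains bounded. The argument is essentially bookkeeping; the only point requiring genuine care is the first step, namely verifying that the truncation is inactive in the limit (so that $M_k$ can legitimately be replaced by $\overline{M}_k$) and that the argument of each logarithm is eventually positive — which is precisely what the hypothesis $\lambda_W\lambda_H\to0$ guarantees.
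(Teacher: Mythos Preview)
Your proof is correct. The paper's own proof is a single sentence invoking L'H\^opital's rule, whereas you proceed by an explicit asymptotic expansion: after noting the truncation is eventually inactive, you write $\overline{M}_k=-\tfrac12\log(\lambda_W\lambda_H)+C_k+o(1)$ and observe that the common divergent term dominates the ratio. The two arguments are essentially equivalent in content---L'H\^opital applied to the variable $t=\lambda_W\lambda_H$ would compare derivatives that are both asymptotic to $-1/(2t)$, which is just another way of saying the leading terms agree---but your route is arguably more transparent: it makes the leading-order behaviour of $M_k$ explicit and, unlike the paper's one-liner, addresses the minor bookkeeping point that one must first pass below the threshold where $M_k=\overline{M}_k$ for all $k$.
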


\begin{proof}
    This property can be easily proved using L'Hôpital's rule.
\end{proof}

Let $\barh$ is the class-mean matrix. For the prediction matrix, we have from Theorem \ref{thm:CE_main}, point (e):
    \begin{align}
        &\bz = \bw \barh
        = \begin{bmatrix}
        (1 - 1/K)M_1 & -M_2/K & \ldots & -M_K/K \\ 
        - M_1/K & (1 - 1/K)M_2 & \ldots & -M_K/K \\ 
        \vdots & \vdots & \ddots & \vdots \\ 
        - M_1/K & -M_2/K & \ldots &  (1 - 1/K)M_K
        \end{bmatrix} \nonumber \\
        \Rightarrow &\lim_{\lambda_{W}, \lambda_{H} \rightarrow 0} \frac{\bz}{\| \bz \|_F} \propto 
        \begin{bmatrix}
        1 - 1/K & -1/K & \ldots & -1/K \\ 
        - 1/K & 1 - 1/K & \ldots & -1/K \\ 
        \vdots & \vdots & \ddots & \vdots \\ 
        - 1/K & -1/K & \ldots &  1 - 1/K
        \end{bmatrix}. \nonumber
    \end{align}
    
Hence, (i) the prediction matrix with finite $\lambda$'s is different from SELI's prediction matrix due to the multiplication $M_k$ at each column, (ii) in limiting case where the $\lambda$'s converge to 0, our prediction matrix converges to the ETF matrix. SELI structure, after grouping the identical columns of the SEL matrix (see Definition 2 in \cite{Christos22}), is also an ETF. It is proven in \cite{Christos22} that the matrix $\bw \bh$ follows SEL matrix and since the features of the same class converge to their class-mean, we have $\bz = \bw \barh$ follows ETF structure. 
\\

To derive the classifier and class-means Gram matrices, i.e., $\bw \bwt$ and $\barh^{\top} \barh$, we consider the same setting $(R, 1/2)$ as \cite{Christos22} for easier comparison with results derived for SELI at page 24 and 25 in \cite{Christos22}. Specifically, the setting has total $K$ classes, with $K/2$ classes are majority class with $n_A$ samples per class, the other $K/2$ classes are minority with $n_B$ samples per class. The imbalance ratio $\frac{n_{A}}{n_B}$ is $R$. 
\\

For the matrix $\bw \bwt$, using the results in the Proposition \ref{prop:classifier_norm}, we have:
\begin{align}
    &\bw \bwt 
    \propto 
    \begin{bmatrix}
    \sqrt{R} M_{A} \mathbf{I}_{K/2} - \frac{1}{K} \left( \frac{3}{2} \sqrt{R} M_A - \frac{1}{2} M_B  \right) \mathbf{1}_{K/2} \mathbf{1}_{K/2}^{\top}
    & -\frac{1}{2K} (\sqrt{R} M_{A} + M_{B}) \mathbf{1}_{K/2} \mathbf{1}_{K/2}^{\top}
    \\ -\frac{1}{2K} (\sqrt{R} M_{A} + M_{B}) \mathbf{1}_{K/2} \mathbf{1}_{K/2}^{\top}
     & M_{B} \mathbf{I}_{K/2} - \frac{1}{K} \left( \frac{3}{2} M_B - \frac{1}{2} \sqrt{R} M_A \right) \mathbf{1}_{K/2} \mathbf{1}_{K/2}^{\top}
    \end{bmatrix} \nonumber
    \\
    &\Rightarrow \lim_{\lambda_{W}, \lambda_{H} \rightarrow 0} \frac{\bw \bwt}{\| \bw \bwt \|_F}
    \propto 
    \begin{bmatrix}
    \sqrt{R} \mathbf{I}_{K/2} - \frac{1}{K} \left( \frac{3}{2} \sqrt{R} - \frac{1}{2}  \right) \mathbf{1}_{K/2} \mathbf{1}_{K/2}^{\top}
    &
    -\frac{1}{2K} (\sqrt{R} + 1) \mathbf{1}_{K/2} \mathbf{1}_{K/2}^{\top}
    \\
    -\frac{1}{2K} (\sqrt{R} + 1) \mathbf{1}_{K/2} \mathbf{1}_{K/2}^{\top}
    &
    \mathbf{I}_{K/2} - \frac{1}{K} \left( \frac{3}{2} - \frac{1}{2} \sqrt{R} \right) \mathbf{1}_{K/2} \mathbf{1}_{K/2}^{\top}
    \end{bmatrix}, \nonumber
\end{align}
which has the similar block structure as the Gram matrix $\bw \bwt$ in \cite{Christos22}, but the elements are all different with SELI in both finite $\lambda$'s and vanishing $\lambda$'s cases (see page 24 in \cite{Christos22} for the SELI closed-form $\bw \bwt$). 
\\

The "centering" considered in \cite{Christos22} is equivalent to centering the class-mean matrix $\barh$, which is $\barh - \overline{\bsh}_{G} \mathbf{1}_{K}^{\top}$ with $\overline{\bsh}_{G} = \frac{1}{K} \sum_{i=1}^{K} \bsh_{i}$. Regarding the "centering" class-mean matrix, we have:
\begin{align}
    (\barh - \overline{\bsh}_{G} \mathbf{1}_{K}^{\top})^{\top}
    (\barh - \overline{\bsh}_{G} \mathbf{1}_{K}^{\top})
    =
    (\mathbf{I}_{K} - \frac{1}{K} \mathbf{1}_{K} \mathbf{1}_{K}^{\top})^{\top} \barh^{\top} \barh (\mathbf{I}_{K} - \frac{1}{K} \mathbf{1}_{K} \mathbf{1}_{K}^{\top}). \nonumber
\end{align}

From our Theorem \ref{thm:CE_main}, we have: $\barh^{\top} \barh \propto \operatorname{diag} \left( \frac{M_A}{\sqrt{n_A}}, \ldots, \frac{M_A}{\sqrt{n_A}}, \frac{M_B}{\sqrt{n_B}}, \ldots, \frac{M_B}{\sqrt{n_B}} \right)$. 
\\

Thus,
\begin{alignat}{2}
    &(\barh - \overline{\bsh}_{G} \mathbf{1}_{K}^{\top})^{\top}
    (\barh - \overline{\bsh}_{G} \mathbf{1}_{K}^{\top})
    \propto \nonumber \\
    &\qquad \qquad \qquad \qquad \begin{bmatrix}
    \frac{M_{A}}{\sqrt{R}}  \mathbf{I}_{K/2} - \frac{1}{K} \left( \frac{3}{2} \frac{M_{A}}{\sqrt{R}} - \frac{1}{2} M_B  \right) \mathbf{1}_{K/2} \mathbf{1}_{K/2}^{\top}
    & -\frac{1}{2K} \left(\frac{M_{A}}{\sqrt{R}} + M_{B} \right) \mathbf{1}_{K/2} \mathbf{1}_{K/2}^{\top}
    \\ -\frac{1}{2K} \left(\frac{M_{A}}{\sqrt{R}} + M_{B} \right) \mathbf{1}_{K/2} \mathbf{1}_{K/2}^{\top}
     & M_{B} \mathbf{I}_{K/2} - \frac{1}{K} \left( \frac{3}{2} M_B - \frac{1}{2} \frac{M_{A}}{\sqrt{R}}\right) \mathbf{1}_{K/2} \mathbf{1}_{K/2}^{\top}
    \end{bmatrix} \nonumber \\
    &\Rightarrow 
    \lim_{\lambda_{W}, \lambda_{H} \rightarrow 0}
    \frac{(\barh - \overline{\bsh}_{G} \mathbf{1}_{K}^{\top})^{\top}
    (\barh - \overline{\bsh}_{G} \mathbf{1}_{K}^{\top})}{ \| (\barh - \overline{\bsh}_{G} \mathbf{1}_{K}^{\top})^{\top}
    (\barh - \overline{\bsh}_{G} \mathbf{1}_{K}^{\top}) \|_{F}}
    \propto \nonumber \\
    &\qquad \qquad \qquad \qquad
    \begin{bmatrix}
    \frac{1}{\sqrt{R}}  \mathbf{I}_{K/2} - \frac{1}{K} \left( \frac{3}{2\sqrt{R}} - \frac{1}{2} \right) \mathbf{1}_{K/2} \mathbf{1}_{K/2}^{\top}
    & -\frac{1}{2K} \left(\frac{1}{\sqrt{R}} + 1 \right) \mathbf{1}_{K/2} \mathbf{1}_{K/2}^{\top}
    \\ -\frac{1}{2K} \left(\frac{1}{\sqrt{R}} + 1 \right) \mathbf{1}_{K/2} \mathbf{1}_{K/2}^{\top}
     & \mathbf{I}_{K/2} - \frac{1}{K} \left( \frac{3}{2}  - \frac{1}{2 \sqrt{R}}\right) \mathbf{1}_{K/2} \mathbf{1}_{K/2}^{\top}
    \end{bmatrix}, \nonumber
\end{alignat}
which again has the similar block structure as the matrix $\bh^{\top} \bh$ in \cite{Christos22}, but the elements are all different with SELI in both finite $\lambda$'s and vanishing $\lambda$'s cases (see page 25 of \cite{Christos22} for SELI closed-form $\bh^{\top} \bh$).

% \clearpage 
%%%%%%%%%%%%%%%%%%%%%%%%%%%%%%%%%%%%%%%%%%%%%%%%%%%%%%%%%%%%%%%%%%%%%%%%%%%%%%%
%%%%%%%%%%%%%%%%%%%%%%%%%%%%%%%%%%%%%%%%%%%%%%%%%%%%%%%%%%%%%%%%%%%%%%%%%%%%%%%
% APPENDIX
%%%%%%%%%%%%%%%%%%%%%%%%%%%%%%%%%%%%%%%%%%%%%%%%%%%%%%%%%%%%%%%%%%%%%%%%%%%%%%%
%%%%%%%%%%%%%%%%%%%%%%%%%%%%%%%%%%%%%%%%%%%%%%%%%%%%%%%%%%%%%%%%%%%%%%%%%%%%%%%
% \newpage
% \appendix
% \onecolumn
% \begin{center}
% {\bf \Large Appendix for ``Neural Collapse in Deep Linear Networks: From Balanced to Imbalanced Data''}
% \end{center}

%%%%%%%%%%%%%%%%%%%%%%%%%%%%%%%%%%%%%%%%%%%%%%%%%%%%%%%%%%%%%%%%%%%%%%%%%%%%%%%
%%%%%%%%%%%%%%%%%%%%%%%%%%%%%%%%%%%%%%%%%%%%%%%%%%%%%%%%%%%%%%%%%%%%%%%%%%%%%%%

\end{document}